\definecolor{DarkGreen}{rgb}{0.1,0.5,0.1}
\definecolor{DarkRed}{rgb}{0.5,0.1,0.1}
\definecolor{DarkBlue}{rgb}{0.1,0.1,0.5}
\definecolor{Gray}{rgb}{0.2,0.2,0.2}
\lstdefinestyle{mystyle}{
    commentstyle=\color{DarkBlue},
    keywordstyle=\color{DarkRed},
    numberstyle=\tiny\color{Gray},
    stringstyle=\color{DarkGreen},
    basicstyle=\footnotesize,
    breakatwhitespace=false,         
    breaklines=true,                 
    captionpos=b,                    
    keepspaces=true,                 
    numbers=left,                    
    numbersep=5pt,                  
    showspaces=false,                
    showstringspaces=false,
    showtabs=false,                  
    tabsize=2
}
\newcommand{\chapterref}[1]{\hyperref[ch:#1]{Chapter~\ref{ch:#1}}}
\newcommand{\claimref}[1]{\hyperref[claim:#1]{Claim~\ref{claim:#1}}}
\newcommand{\corollaryref}[1]{\hyperref[cor:#1]{Corollary~\ref{cor:#1}}}
\newcommand{\definitionref}[1]{\hyperref[def:#1]{Definition~\ref{def:#1}}}
\newcommand{\equationref}[1]{\hyperref[eq:#1]{Equation~\ref{eq:#1}}}
\newcommand{\factref}[1]{\hyperref[fact:#1]{Fact~\ref{fact:#1}}}
\newcommand{\figureref}[1]{\hyperref[fig:#1]{Figure~\ref{fig:#1}}}
\newcommand{\tableref}[1]{\hyperref[tab:#1]{Table~\ref{tab:#1}}}
\newcommand{\itemref}[1]{\hyperref[item:#1]{Item~(\ref{item:#1})}}
\newcommand{\lemmaref}[1]{\hyperref[lem:#1]{Lemma~\ref{lem:#1}}}
\newcommand{\propref}[1]{\hyperref[prop:#1]{Proposition~\ref{prop:#1}}}
\newcommand{\propositionlabel}[1]{\label{prop:#1}}
\newcommand{\propositionref}[1]{\hyperref[prop:#1]{Proposition~\ref{prop:#1}}}
\newcommand{\remarkref}[1]{\hyperref[rem:#1]{Remark~\ref{rem:#1}}}
\newcommand{\sectionref}[1]{\hyperref[sec:#1]{Section~\ref{sec:#1}}}
\newcommand{\appendixref}[1]{\hyperref[sec:#1]{Appendix~\ref{sec:#1}}}
\newcommand{\theoremref}[1]{\hyperref[thm:#1]{Theorem~\ref{thm:#1}}}
\newcommand{\exampleref}[1]{\hyperref[claim:#1]{Example~\ref{claim:#1}}}
\newcommand{\setuplabel}[1]{\label{claim:#1}}
\newcommand{\setupref}[1]{\hyperref[claim:#1]{Setup~\ref{claim:#1}}}
\newtheorem{theorem}{Theorem}
\newtheorem{corollary}[theorem]{Corollary}
\newtheorem{lemma}[theorem]{Lemma}
\newtheorem{proposition}[theorem]{Proposition}
\theoremstyle{definition}
\newtheorem{fact}{Fact}
\newtheorem{property}{Property}
\newtheorem{assumption}{Assumption}
\newtheorem*{remark}{Remark}
\newtheorem{example}{Example}
\newtheorem{setup}{Setup}
\newtheorem*{property*}{Property}
\title{Alternative Microfoundations for Strategic Classification} 
\author{\\Meena Jagadeesan~~~~Celestine Mendler-D\"unner~~~~Moritz Hardt\\
{\small \{mjagadeesan, mendler, hardt\}@berkeley.edu}
\\ \\University of California, Berkeley}
\date{}
\begin{document}
\maketitle

\begin{abstract}
When reasoning about strategic behavior in a machine learning context it is tempting to combine \emph{standard microfoundations} of rational agents with the statistical decision theory underlying classification. 
In this work, we argue that a direct combination of these standard ingredients leads to brittle solution concepts of limited descriptive and prescriptive value.
First, we show that rational agents with perfect information produce discontinuities in the aggregate response to a decision rule that we often do not observe empirically.
Second, when any positive fraction of agents is not perfectly strategic, desirable stable points---where the classifier is optimal for the data it entails---cease to exist.
Third, optimal decision rules under standard microfoundations maximize a measure of negative externality known as \emph{social burden} within a broad class of possible assumptions about agent behavior.

Recognizing these limitations we explore alternatives to standard microfoundations for binary classification. We start by describing a set of desiderata that help navigate the space of possible assumptions about how agents respond to a decision rule. In particular, we analyze a natural constraint on feature manipulations, and discuss properties that are sufficient to guarantee the robust existence of stable points. Building on these insights, we then propose the \emph{noisy response} model. Inspired by smoothed analysis and empirical observations, noisy response incorporates imperfection in the agent responses, which we show mitigates the limitations of standard microfoundations. Our model retains analytical tractability, leads to more robust insights about stable points, and imposes a lower social burden at optimality.\footnote{A shorter version of this manuscript was accepted for publication at ICML 2021.}
\end{abstract}

\section{Introduction}
\label{sec:introduction}
Consequential decisions compel individuals to react in response to the specifics of the decision rule. This individual-level response in aggregate can disrupt both statistical patterns and social facts that motivated the decision rule, leading to unforeseen consequences. A similar conundrum in the context of macroeconomic policy making fueled the \emph{microfoundations} program following the influential critique of macroeconomics by Lucas in the 1970s~\citep{lucas1976econometric}. Microfoundations refers to a vast theoretical project that aims to ground theories of aggregate outcomes and population forecasts in microeconomic assumptions about individual behavior. Oversimplifying a broad endeavor, the hope was that if economic policy were \emph{microfounded}, it would anticipate more accurately the response that the policy induces.

Predominant in neoclassical economic theory is the assumption of an agent that exhaustively maximizes a utility function on the basis of perfectly accurate information.
This modeling assumption about agent behavior underwrites many celebrated results on markets, mechanisms, and games. 
Although called into question by behavioral economics and related fields (e.g. see \citep{behavioralecon}), the assumption remains central to economic theory and has become standard in computer science, as well.

When reasoning about incentives and strategic behavior in the context of classification tasks, it is tempting to combine the predominant modeling assumptions from microeconomic theory with the statistical decision theory underlying classification. In the resulting model, agents have perfect information about the decision rule and compute best-response feature changes according to their utility function with the goal of achieving a more favorable classification outcome. We refer to this agent model as \textit{standard microfoundations}. Building on the assumption that agents follow standard microfoundations, the decision-maker then chooses the decision rule that maximizes their own objective in anticipation of the resulting agent response. This is the conceptual route taken in the area of \emph{strategic classification}, but similar observations may apply more broadly to the intersection of economics and learning.

\subsection{Our work}

We argue that standard microfoundations are a poor basis for studying strategic behavior in binary classification problems. We make this point through three observations that illustrate the limited descriptive power of the standard model and the problematic solution concepts it implies. In response, we explore the space of alternative agent models for strategic classification, and we identify desirable properties that when satisfied by microfoundations lead to more realistic and robust insights. Guided by these desiderata, we propose \emph{noisy response} as a promising alternative to the standard model. 

\subsubsection{Limitations of standard microfoundations}
In strategic classification, agents respond strategically to the deployment of a binary decision rule $f_\theta$ specified by classifier weights~$\theta$. The decision-maker assumes that agents follow standard microfoundations: that is, agents have full information about $f_\theta$ and change their features so as to maximize their utility function. The utility function captures the benefit of a positive classification outcome, as well as the cost of feature change. Consequently, an agent only invests in changing their features if the cost of feature change does not exceed the benefit of positive classification.

Our first observation concerns the \textit{aggregate response}---the distribution $\cD(\theta)$ over feature, label pairs induced by a classifier $f_\theta$. We show that in the standard model, the aggregate response necessarily exhibits discontinuities that we often do not observe in empirical settings. The problem persists even if we assume an approximate best response or allow for heterogeneous cost functions.

Our second observation reveals that, apart from lacking descriptive power, the standard model also leads to brittle conclusions about the solution concept of \textit{performative stability}.
Performative stability \citep{PZMH20} refers to decision rules that are optimal on the particular distribution they entail. Stable points thus represent fixed points of \textit{retraining methods}, which repeatedly update the classifier weights to be optimal on the data distribution induced by the previous classifier. We show that the existence of performatively stable classifiers breaks down whenever a positive fraction of randomly chosen agents in the population are non-strategic. This brittleness of the existence of fixed points suggests that the standard model does not constitute a reliable basis for investigating dynamics of retraining algorithms.

Our last observation concerns the solution concept of \textit{performative optimality}. Performative optimality \citep{PZMH20} refers to a decision rule that exhibits the highest accuracy on the distribution it induces. The global nature of this solution concept means that finding performatively optimal points requires the decision-maker to anticipate strategic feedback effects. We prove that relying on standard microfoundations to model strategic behavior leads to extreme decision rules that maximize a measure of negative externality called \textit{social burden} within a broad class of alternative models. Social burden, proposed in recent work, quantifies the expected cost that positive instances of a classification problem have to incur in order to be accepted. Thus, 
standard microfoundations produce optimal solutions that are unfavorable for the population being classified.

\subsubsection{Alternative microfoundations}
Recognizing the limitations of standard microfoundations, we systematically explore alternatives to the standard model.
We investigate alternative assumptions on agent responses, encompassing general agent behavior that need not be fully informed, strategic, or utility maximizing. We formalize microfoundations as a randomized mapping $M:X\times Y\rightarrow \AllTypes$ that assigns each agent to a response type $t\in\AllTypes$. The response type $t$ is associated with a response function $\Response_t\colon X\times\Theta\to X$ specifying how agents of type~$t$ change their features $x$ in response to each decision rule $f_{\theta}$. 

Letting $\DBase$ be the base distribution over features and labels prior to any strategic adaptation, the \emph{aggregate response} to a classifier~$f_\theta$ is given by the distribution $\DMap(\theta;M)$ over induced feature, label pairs  $(\Response_t(x,\theta), y)$ for a random draw $(x, y)\sim\DBase$ and $t= M(x,y)$. In this sense, the mapping $M$ \emph{microfounds} the distributions induced by decision rules, endowing the distributions with structure that allows the decision-maker to deduce the aggregate response from a model of individual behavior.

To guide our search for more appropriate microfoundations for binary classification, we introduce a collection of properties that are desirable for a model of agent responses to satisfy. The first condition, that we call \textit{aggregate smoothness}, rules out the discontinuities arising from standard microfoundations. Conceptually, it requires that varying the classifier weights slightly must change the aggregate response smoothly. We find that this property alone is sufficient to guarantee the robust existence of stable points under mixtures with non-strategic agents. 

The second condition, that we call the \textit{expenditure constraint}, helps ensure that the model encodes realistic agent-level responses $\Response_t$. At a high level, it requires that each agent does not spend more on changing their features than the utility of a positive outcome. This natural constraint gives rise to a large set of potential models. For any such model that satisfies an additional weak assumption, the social burden of the optimal classifier is no larger than the social burden of the optimal classifier deduced from the standard model. Moreover, the optimal points are determined by local behavior. This frees the decision-maker from fully understanding the aggregate response $\cD(\theta)$ and makes the task of finding an approximately optimal classifier more tractable.

\subsubsection{Noisy response---an alternative model} 
Using the properties described above as a compass to navigate the large space of alternative models, we identify \emph{noisy response} as a compelling model of microfoundations. In this model, each agent best responds with respect to $\theta+\xi$, where $\xi$ is an independent sample from a zero mean noise distribution. This model is inspired by \textit{smoothed analysis} \citep{smoothedanalysis} and encodes imperfection in the population's response to a decision rule by perturbing the manipulation targets of individual agents. 

We show that noisy response satisfies a number of desirable properties that make it a promising model of microfoundations for classification in strategic settings, both for theoretical analyses and from a practical standpoint. First, noisy response satisfies aggregate smoothness, and thus leads to the robust existence of stable points. Moreover, the model satisfies the expenditure constraint, and thus encodes natural agent-level responses which can be used to reason about metrics such as social burden. When used to anticipate strategic feedback effects and compute optimal points, noisy response leads to strictly less pessimistic acceptance thresholds than those computed under standard microfoundations, given the same constraints on manipulation expenditure. In fact, we show via simulations that a larger variance $\sigma$ of the noise in the manipulation target leads to a more conservative optimal threshold, and for $\sigma\rightarrow 0$, we approximate the extreme case of standard microfoundations. Finally, from a practical perspective, we demonstrate that the distribution map $\cD(\theta)$ for noisy response can be estimated by gathering information about individuals to learn the parameter $\sigma$ and the cost function $c$. This has the desirable consequence that the decision-maker can compute performatively optimal points without ever deploying a classifier. 
\subsection{Related work}
\label{ref:relatedwork}

Existing work on strategic classification in machine learning has mostly followed standard microfoundations for modeling agent behavior in response to a decision rule, e.g.,~\citep{DDMSV04, BS11, HMPW16, Khaje19opt,tsirtsis2020decisions} to name a few. This includes works that focus on minimizing Stackelberg regret~\citep{dong18revealedpref, CLP20}, quantify the price of transparency~\citep{akyol16transp}, and investigate the benefits of randomization in the decision rule~\citep{BG20}. Investigations on externalities such as social cost~\citep{MMDH19, HIW19} and whether classifiers incentivize improvement as opposed to gaming~\citep{Kleinberg19strat, MMH20, SEA20, HILW20} have also mostly built on the standard assumption of best-responding agents with perfect information. Recent work by~\cite{levanon2021strategic} studied practical implications of the optimization problem resulting from standard microfoundations and how to make it more amenable to practical optimization methods. 

A handful of works have suggested potential limitations of the standard strategic classification framework. 
\citet{bruckner12pred} recognized that the standard model leads to very conservative Stackelberg solutions, and proposed resorting to Nash equilibria as an alternative solution concept. We instead take a different route and advocate for altogether rethinking standard microfoundations that lead to these conservative acceptance thresholds. Concurrent and independent work by \citet{ghalme2021strategic} and \citet{bechavod2021information} 
relaxed the perfect information assumption in the standard model and studied strategic classification when the classifier is not fully revealed to the agents. In this work, we argue that agents often do not perfectly respond to the classifier even when the decision rule is fully transparent. Therefore, we propose incorporating imperfection into the model of microfoundations in order to anticipate natural deviations from the standard assumptions.

Related work in economics also investigates strategic responses to decision rules. This line of work, initiated by \cite{Spence73}, has shown that information about individuals can become \emph{muddled} as a result of heterogeneous gaming behavior \citep{FK19}, investigated the role of commitment power of the decision-maker \citep{FK20}, considered the impact of an intermediary who aggregates the agents' multi-dimensional features \citep{Ball2020}, and considered the performance of different training approaches in strategic environments \citep{HG20}.  
A notable work by~\cite{BBK20} investigates strategic behavior through a field experiment in the micro-lending domain, with a focus on evaluating approaches for designing strategy-robust classifiers. An important distinction is that these works tend to study regression, while we focus on classification. These settings appear to be qualitatively different in the context of strategic feedback effects (e.g. see note in~\citep{HG20}).

Our work is conceptually related to recent work in economics that has recognized mismatches between the predictions of standard models and empirical realities, for example in macroeconomic policy \citep{S18, KV18, CGK18} and in mechanism design \citep{L17}. These works, and many others, have explored incorporating richer behavioral and informational assumptions into typical models used in economic settings. Although our work also explores alternatives to standard microfoundations, we focus on algorithmic decision-making, where the limitations of the standard model had not been previously identified. We believe that our approach of navigating the entire space of potential models using a collection of properties could be of broader interest when developing alternative microfoundations.

\subsection{Setup and basic notation}\label{sec:background}
Let $X \subseteq \mathbb{R}^m$ denote the feature space, and let $Y = \left\{0,1 \right\}$ be the space of binary outcomes. Each agent is associated to a feature vector $x \in X$ and a binary outcome $y \in Y$ which represents their true label. A feature, label pair $(x,y)$ need not uniquely describe an agent, and many agents may be associated to the same pair $(x,y)$. The base distribution $\DBase$ is a joint distribution over $X \times Y$ describing the population prior to any strategic adaption. Throughout the paper we assume that $\DBase$ is continuous and has zero mass on the boundary of $X$.  We focus on binary classification where each classifier $f_{\theta}: X \rightarrow \left\{0,1\right\}$ is parameterized by $\theta\in \mathbb{R}^d$, and the decision-maker selects classifier weights $\theta$ from $\Theta\subseteq \mathbb{R}^d$ which is a compact, convex set. We assume that for every $\theta \in \Theta$, the set $\left\{x \in X \mid f_{\theta}(x) = 1 \right\}$ is closed, and the decision boundary is measure $0$. We adopt the notion of a distribution map $\cD(\theta)$ from~\citep{PZMH20} to describe the distribution over $X\times Y$ induced by strategic adaptation of agents drawn from the base distribution in response to the classifier $f_{\theta}$.

\section{Limitations of standard microfoundations}

\label{sec:limitations}
In the strategic classification literature, the typical agent model is a \textit{rational agent with perfect information}. At the core of this model lies the assumption that agents have perfect knowledge of the classifier and maximize their utility given the classifier weights. The utility consists of two terms: a reward for obtaining a positive classification and a cost of manipulating features. The reward is denoted $\gamma>0$ and the manipulation cost is represented by a function $c: X \times X \rightarrow \mathbb{R}$ where $c(x,x')$ reflects how much agents need to expend to change their features from $x$ to $x'$. A valid cost function satisfies a natural monotonicity requirement as stated in Assumption~\ref{assumption:validcost}. Given a feature vector $x$ and a classifier $f_{\theta}$, agents solve the following utility maximization problem:
\begin{equation}
\label{eq:SM}
\argmax_{x'\in X} \;\left[\gamma f_{\theta}(x') - \costfn(x, x')\right].
\end{equation}
\begin{assumption} 
\vspace{-0.5cm}
\label{assumption:validcost}
A cost function $c: X \times X \rightarrow \mathbb{R}$ is \emph{valid}, if it is continuous in both arguments, it holds that $\costfn(x, x')=0$ for $x=x'$, and $c$ increases with distance in the sense that $\costfn(x,  \bar x) < \costfn(x, x')$ and $\costfn(\bar x, x) < \costfn(x', x)$ for every  $\bar x\in X$ that lies on the line segment connecting the two points $x,x'\in X$.\footnote{We model a non-zero cost for all modifications to features, regardless of whether they result in positive classification or not. Generalizing beyond standard microfoundations, this accounts for how agents may erroneously expend effort on changing their features in an incorrect direction, as empirically demonstrated by \cite{BBK20}.}
\end{assumption}
\noindent We will refer to this response model as \emph{standard microfoundations}.

\subsection{Discontinuities in the aggregate response}
\label{sec:degeneracy}

A striking property of distributions induced by standard microfoundations in response to a binary classifier is that they are necessarily either trivial or discontinuous. The underlying cause is that agents behaving according to standard microfoundations either change their features exactly up to the decision boundary, or they do not change their features at all.

\begin{restatable}{proposition}{propdegenerate}
\label{prop:degenerate}
\propositionlabel{prop:degenerate}
Given a base distribution $\DBase$, let $\DMap(\theta)$ be the distribution induced by a classifier $f_\theta$. Then, if $\DMap(\theta)$ is continuous and $\DMap(\theta) \neq \DBase$, there does not exist a valid cost function $c$ such that $\DMap(\theta)$ is an aggregate of agents following standard microfoundations.
\end{restatable}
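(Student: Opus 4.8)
The plan is to show that, under standard microfoundations, the only way for $\DMap(\theta)$ to avoid a spurious concentration of mass on the decision boundary is for no agent to move, in which case $\DMap(\theta)=\DBase$. Write $S:=\{x\in X: f_\theta(x)=1\}$, which by the standing assumptions is closed and whose topological boundary $\partial S$ (the decision boundary) has Lebesgue measure zero. I would first characterize the best response, i.e.\ a maximizer $\bestresp(x,\theta)$ of \eqref{eq:SM}. If $f_\theta(x)=1$, then staying yields utility $\gamma$ while any $x'\neq x$ yields at most $\gamma - c(x,x')<\gamma$, since validity (Assumption~\ref{assumption:validcost}) forces $c(x,x')>0$ for $x'\neq x$; hence such an agent does not move. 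If $f_\theta(x)=0$, then moving to any $x'$ with $f_\theta(x')=0$ is weakly dominated by staying, so the agent either stays or moves to a cost-minimizing point of $S$, i.e.\ to some $x^\star\in\argmin_{x'\in S} c(x,x')$.

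The key step is to argue that every such minimizer $x^\star$ lies on $\partial S$. Suppose instead $x^\star$ were in the interior of $S$, so that some ball $B(x^\star,r)\subseteq S$. Since $f_\theta(x)=0$ we have $x\notin S$, hence $x\neq x^\star$; for $t<1$ sufficiently close to $1$ the point $\bar x = (1-t)x + t x^\star$ satisfies $\|\bar x - x^\star\| = (1-t)\|x-x^\star\| < r$, so $\bar x\in S$, and moreover $\bar x\neq x^\star$ and $\bar x\neq x$ (as $\bar x\in S$, $x\notin S$), with $\bar x$ strictly between $x$ and $x^\star$ on the segment joining them. Validity of $c$ then gives $c(x,\bar x)<c(x,x^\star)$, contradicting minimality of $x^\star$. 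Therefore every agent who moves ends up in $\partial S$.

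Now the dichotomy. Let $p$ be the $\DBase$-probability that an agent moves (agents lying exactly on the decision boundary are a $\DBase$-null set since $\DBase$ is continuous and $\partial S$ has measure zero, so they can be absorbed into the ``almost everywhere'' statements). If $p>0$, then the images of these agents place probability $p>0$ on the Lebesgue-null set $\partial S$, so $\DMap(\theta)$ assigns positive mass to a null set and hence is not continuous, contradicting the hypothesis. Thus $p=0$, i.e.\ $\bestresp(x,\theta)=x$ for $\DBase$-almost every $(x,y)$, which gives $\DMap(\theta)=\DBase$, contradicting $\DMap(\theta)\neq\DBase$. Since nothing beyond validity of $c$ was used, no valid cost function can yield a continuous $\DMap(\theta)$ different from $\DBase$.

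I expect the boundary-minimizer step to be the main obstacle to make fully rigorous: one must pin down the topological argument above cleanly (existence of the intermediate point $\bar x$ from interiority, and distinctness from both endpoints so that validity applies), confirm that ``lies on the line segment'' in Assumption~\ref{assumption:validcost} is used only for points strictly between the endpoints (with $c(x,x')>0$ for $x\neq x'$ invoked separately, which also follows from validity), and handle the presupposition in \eqref{eq:SM} that the $\argmax$ is attained (otherwise the aggregate is not well-defined, so we may restrict to that case). Everything else is routine bookkeeping with the pushforward measure defining $\DMap(\theta)$.
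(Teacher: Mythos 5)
Your proposal is correct and follows essentially the same route as the paper: characterize the best response (positively classified agents stay; negatively classified agents either stay or move to a cost-minimizer that must lie on the decision boundary), then conclude that either no mass moves, giving $\DMap(\theta)=\DBase$, or positive mass lands on the measure-zero boundary, contradicting continuity. The only cosmetic difference is in the boundary lemma, where you refute an interior minimizer via a ball around it, whereas the paper picks the segment's crossing point of the boundary of the positive region and uses closedness plus validity of $c$; both hinge on the same monotonicity-along-segments property.
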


In addition to the discontinuities implied by Proposition \ref{prop:degenerate}, the aggregate response induced by standard microfoundations faces additional degeneracies. Namely, a similar argument shows that any non-trivial distribution arising from the standard model must have a region of zero density below the decision boundary. 

These properties are unnatural in a number of practical applications, as we discuss in the following examples. 
\begin{example}[Lending decisions and credit scores]
\label{ex:FICO}
Consider banking lending decisions and the corresponding distribution over FICO credit scores. If lending decisions are based on FICO scores, then under standard microfoundations, the distribution over credit scores should exhibit a discontinuity at the threshold. However, this is not what we observe empirically. In particular, previous work \citep{HPS16} studied a FICO dataset from 2003, where credit scores range from 300 to 850, and a cutoff of 620 was commonly used for prime-rate loans. The observed distribution over credit scores appears continuous and is supported across the full range of scores.\footnote{In practice, lending decisions might be based on additional features beyond credit scores, and different lenders might use different cutoffs. In any case, this example demonstrates that the observed aggregate response cannot be captured by agents behaving according to standard microfoundations in response to a decision rule that is a threshold function of the credit score.
}
\end{example}
\begin{example}[Yelp online ratings and rounding thresholds]
\label{ex:yelp}
Restaurant ratings on Yelp are rounded to the nearest half star, and the star rating of a restaurant can significantly influence restaurant customer flows. In this setting, strategic adaptation arises from restaurants leaving fake reviews. Under standard microfoundations, the distribution of restaurant ratings would exhibit discontinuities at the rounding thresholds. However, previous work \citep{AM12} examined the distribution of restaurant ratings, and showed that there is no significant discontinuity in the density of the restaurants at the rounding thresholds (see Figure 4 in their work). 
\end{example}
\begin{example}[High school exit exams and score cutoffs]
\label{ex:testscores}
New York high school exit exams have important stakes for students, teachers, and schools, based on whether students meet designated score cutoffs. Interestingly, test score distributions did exhibit discontinuities prior to reforms on teacher grading procedures in 2012 \citep{dee19}. These discontinuities resulted from teachers deliberately adjusting student scores to be just above the cutoff during grading. This example demonstrates that sharp discontinuities can arise when there is perfect information about the decision rule and perfect control over manipulations. However, following grading reforms that largely eliminated the possibility for strategic manipulation by \emph{teachers}, these discontinuities disappeared and test score distributions appeared  continuous \citep{dee19}. Similar to observations in Examples~\ref{ex:FICO}-\ref{ex:yelp}, strategic adaptation by \textit{students} cannot be described by standard microfoundations. 
\end{example}

It is important to note that the degeneracies in the aggregate response induced by standard microfoundations arise from the fact that classification decisions are discrete and based on a \emph{hard} decision. Agents who are not classified positively receive no reward: it does not matter how close to the decision boundary the agent is. This discontinuity in the utility is specific to classification and does not arise in regression problems that are predominantly studied in the economics literature. 
However, in machine learning and statistical decision theory, binary classification is ubiquitous, and degeneracies that we have identified pertain to general settings where the decisions are binary. 

The reader might imagine that common variations and generalizations of standard microfoundations can mitigate these issues. Unfortunately, the two variations of standard microfoundations that are typically considered---\textit{heterogeneous cost functions} \citep{HIW19}, and \textit{approximate best response} \citep{MMH20}---result in similar degeneracies. Heterogeneity in the cost (or utility) function can only change whether or not an agent decides to change their features, but it does not change their target of manipulation. If agents approximately best-respond, and thus move to features $x'$ that approximately maximize their utility, the model no longer leads to point masses at the decision boundary, but agents will never \textit{undershoot} the decision boundary. This means that any nontrivial aggregate distribution must have a region of zero density below the decision boundary to comply with standard microfoundations and any of these variants. 

In fact, agent behavior that is not consistent with standard microfoundations or variants has been observed in field experiments. In particular, agents both overshoot and undershoot the decision boundary as well as generally exhibit noisy responses, even if the classifier is fully transparent.
\begin{example}[Field Experiment \citep{BBK20}]
\label{ex:FE}
The authors developed an app that mimicked aspects of ``digital credit'' applications, and deployed it in Kenya in order to empirically investigate strategic behavior. Participants were rewarded if the app guessed that they were a high-income earner. When the participants were given access to the coefficients of the decision rule, they tended to change their features in the right direction, but a high variance in their responses was observed---see Table 5 in their work. The noise in the response was even more pronounced when participants were only given opaque access to the decision rule. In this case, agents often did not even change their features in the right direction.
\end{example}

\subsection{Brittleness under natural model misspecifications} 
\label{sec:misspecifications}

We describe two scenarios where the behavior of standard microfoundations is undesirable under natural model misspecifications. In particular, the existence of stable solutions crucially relies on \emph{all} agents being perfectly strategic, and the optimal solutions associated with the standard model cause extreme negative externalities within a broad class of alternative approaches to model agent behavior.

\subsubsection{Stability as a fragile solution concept} 
\label{subsubsec:fragilestability}

Performatively stable solutions are guaranteed to exist under standard microfoundations (see \cite{MMDH19}). Our first result demonstrates that this no longer holds if any positive fraction of randomly chosen individuals are non-strategic. 

Recall that performative stability \citep{PZMH20} requires that a classifier is optimal on the data distribution that it induces: that is, that $\thetaPS$ is a global optimum of the following optimization problem:
\begin{equation}
    \min_{\theta \in \Theta} \;\mathbb{E}_{(x,y)\sim \cD(\thetaPS)} \;\Indicator\left\{y\neq f_\theta(x) \right\}.
    \label{eq:localPS}
\end{equation}
Since we focus on the 0-1 loss in this work, the objective in \eqref{eq:localPS} need not be convex, and it is natural to consider a local relaxation of performative stability. We say $\thetaPS$ is \emph{locally stable} if $\thetaPS$ is a local minimum or a stationary point of \eqref{eq:localPS}. Note that any performatively stable solution is locally stable. 

Performative stability defines the fixed points of \textit{repeated risk minimization}---the retraining method where the decision-maker repeatedly updates the classifier to a global optimum on the data distribution induced by the previous classifier. Thus, there is no incentive for the decision-maker to deviate from a stable model based on the observed data. Similarly, when the objective in \eqref{eq:localPS} is differentiable, locally stable points correspond to fixed points of  \textit{repeated gradient descent}~\citep{PZMH20}. Another interesting property of performatively stable points is that they closely relate to the concept of a pure strategy (local)\textit{ Nash equilibrium} in a simultaneous game between the strategic agents who respond to the classifier $f_\theta$ and the decision-maker who responds to the observed distribution $\cD(\theta)$.\footnote{More precisely, when agents follow standard microfoundations, any pure strategy local Nash equilibrium is locally stable.}

To showcase that the existence of locally stable classifiers under standard microfoundations crucially relies on all agents following the modeling assumptions, we focus on the following simple 1-dimensional setting.
\begin{setup}[1-dimensional]
\label{ex:stablepointsnotexist}
\setuplabel{ex:stablepointsnotexist}
Let $X \subseteq \mathbb{R}$ and consider a threshold functions $f_{\theta}(\noargument) =\mathds{1}\{\noargument \ge \theta\}$ with  $\theta\in \Theta\subseteq\mathbb R$. Let $\mu(x)$ be the conditional probability over $\DBase$ of the true label being $1$ given features $x$. Suppose that $\mu(x)$ is strictly increasing in $x$ and there is an $\theta\in \Int(\Theta)$ such that  $\mu(\theta)=0.5$.
\end{setup}

\begin{restatable}{proposition}{propstablepoints}
\vspace{-0.1cm}
\label{propo:stablepointsnotexist}
Consider Setup~\ref{ex:stablepointsnotexist}. Suppose that a $p$ fraction of agents drawn from $\DBase$ do not ever change their features, and a $1-p$ fraction of agents drawn independently from $\DBase$ follow standard microfoundations with a valid cost function $c$. Then, we have the following properties:
\begin{itemize}[itemsep=0ex, topsep=-0.2ex]
    \item[a)] For $p \in\{ 0,1\}$, locally stable points exist.
    \item[b)] For $p\in (0,1)$, locally stable points do not exist.
    \item[c)] Let $\thetaPS^{\mathrm{SM}}$ denote the smallest locally stable point when all agents follow standard microfoundations, and let $\theta_{\mathrm{SL}}$ denote the optimal classifier for the base distribution $\DBase$. 
    
    For $p\in (0,1)$, repeated risk minimization will oscillate between $\theta_{\mathrm{SL}}$ and a threshold $\tau(p) \in (\theta_{\mathrm{SL}}, \thetaPS^{\mathrm{SM}})$.The threshold $\tau(p)$ is decreasing in $p$, approaching $\theta_{\mathrm{SL}}$ as $p \rightarrow 1$ and $\thetaPS^{\mathrm{SM}}$ as $p \rightarrow 0$. 
\end{itemize}
\end{restatable}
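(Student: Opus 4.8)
The plan is to reduce all three parts to an explicit description of the distribution induced by a threshold rule and of the risk‑minimizing response to it. Let $\rho$ be the density of the feature marginal of $\DBase$ and $\mu$ the conditional probability of $y=1$ as in the setup. First I would record the strategic best response to $f_\theta(x)=\mathds{1}\{x\ge\theta\}$: agents at $x\ge\theta$ never move, and an agent at $x<\theta$ moves exactly to $\theta$ when $\costfn(x,\theta)<\gamma$ and stays otherwise. Writing $\Delta(\theta)$ for the point with $\costfn(\Delta(\theta),\theta)=\gamma$ and $w(\theta)=\int_{\Delta(\theta)}^{\theta}\rho$, the mixture $\cD(\theta)=p\,\DBase+(1-p)\cdot(\text{best response of a fresh }\DBase\text{ draw})$ has density $\rho$ on $(-\infty,\Delta(\theta))\cup(\theta,\infty)$, density $p\rho$ on $(\Delta(\theta),\theta)$, and an atom of mass $(1-p)w(\theta)$ at $\theta$ carrying conditional label probability $\bar\mu(\theta):=\frac{1}{w(\theta)}\int_{\Delta(\theta)}^{\theta}\mu(z)\rho(z)\,dz$. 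Since $\mu$ is strictly increasing we have $\bar\mu(\theta)<\mu(\theta)$, the map $\theta\mapsto\bar\mu(\theta)$ is strictly increasing (raising $\theta$ trades the lowest‑$\mu$ mass in the window for higher‑$\mu$ mass), and in particular $\bar\mu(\theta_{\mathrm{SL}})<\tfrac12$ strictly because the whole window lies below $\theta_{\mathrm{SL}}$.

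Next I would analyze $L(\theta';\cD(\theta))$, the $0$–$1$ risk of a threshold $\theta'$. Away from $\theta$ it is continuous with one‑sided derivatives of the same sign as $2\mu(\theta')-1$ (times a strictly positive density, since $p>0$); hence $L(\cdot;\cD(\theta))$ is strictly decreasing on $(-\infty,\theta_{\mathrm{SL}})$, strictly increasing on $(\theta_{\mathrm{SL}},\infty)$, and has a single jump at $\theta'=\theta$ of size $(2\bar\mu(\theta)-1)(1-p)w(\theta)$. A short case check then yields the response correspondence: the (attained) risk minimizer of $\cD(\theta)$ equals $\theta_{\mathrm{SL}}$, \emph{unless} the downward jump at $\theta$ strictly undercuts the value at $\theta_{\mathrm{SL}}$, in which case no minimizer exists — the infimum is then only approached by thresholds descending to $\theta$ from above. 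Computing $\lim_{\theta'\downarrow\theta}L(\theta';\cD(\theta))-L(\theta_{\mathrm{SL}};\cD(\theta))$ using $(2\bar\mu(\theta)-1)w(\theta)=\int_{\Delta(\theta)}^{\theta}(2\mu(z)-1)\rho(z)\,dz$, this "undercut" condition, on the relevant range $\theta\in[\theta_{\mathrm{SL}},\theta_{\mathrm{c}}]$ with $\theta_{\mathrm{c}}$ defined by $\Delta(\theta_{\mathrm{c}})=\theta_{\mathrm{SL}}$, becomes $h(\theta,p)<0$ where
\[
h(\theta,p):=\int_{\theta_{\mathrm{SL}}}^{\theta}(2\mu(z)-1)\rho(z)\,dz+(1-p)\int_{\Delta(\theta)}^{\theta_{\mathrm{SL}}}(2\mu(z)-1)\rho(z)\,dz ,
\]
while outside $[\theta_{\mathrm{SL}},\theta_{\mathrm{c}}]$ one has $h>0$, so the minimizer is always $\theta_{\mathrm{SL}}$ there.

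Parts (a) and (b) then follow quickly. For $p=1$, $\cD(\theta)\equiv\DBase$, whose unique risk minimizer is $\theta_{\mathrm{SL}}$, which is therefore (performatively, hence locally) stable. For $p=0$ the interval $(\Delta(\theta),\theta)$ is a genuine gap, and the threshold $\thetaPS^{\mathrm{SM}}$ fixed by $\bar\mu(\thetaPS^{\mathrm{SM}})=\tfrac12$ (unique by monotonicity of $\bar\mu$ and the intermediate value theorem; assumed interior to $\Theta$) is a global — hence local — minimizer of $L(\cdot;\cD(\thetaPS^{\mathrm{SM}}))$, which is flat on $[\Delta(\thetaPS^{\mathrm{SM}}),\thetaPS^{\mathrm{SM}}]$ with zero jump; moreover it is the smallest locally stable point, since for $\theta<\thetaPS^{\mathrm{SM}}$ one has $\bar\mu(\theta)<\tfrac12$ and the strictly downward jump forbids a local minimum at $\theta$ (existence for $p=0$ may instead be cited from \citep{MMDH19}). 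For $p\in(0,1)$, the response correspondence shows the only possible locally stable or stationary point is $\theta_{\mathrm{SL}}$; but $\bar\mu(\theta_{\mathrm{SL}})<\tfrac12$ forces a strictly downward jump of $L(\cdot;\cD(\theta_{\mathrm{SL}}))$ at $\theta_{\mathrm{SL}}$, so $\theta_{\mathrm{SL}}$ is not even a local minimum of its own induced risk — hence no locally stable point exists.

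For part (c) I would take $\tau(p)$ to be the unique root of $h(\cdot,p)$ in $(\theta_{\mathrm{SL}},\theta_{\mathrm{c}})$: indeed $h(\theta_{\mathrm{SL}},p)<0$, $h(\theta_{\mathrm{c}},p)=\int_{\theta_{\mathrm{SL}}}^{\theta_{\mathrm{c}}}(2\mu-1)\rho>0$, and $\partial_\theta h>0$ on that interval. By the response correspondence, the risk minimizer of $\cD(\theta)$ is $\theta_{\mathrm{SL}}$ whenever $\theta\ge\tau(p)$ or $\theta<\theta_{\mathrm{SL}}$, while for $\theta\in[\theta_{\mathrm{SL}},\tau(p))$ the best response is a threshold just above the current atom — and next round the atom re‑forms at that new threshold, since its agents reach it at vanishing cost — so retraining ratchets the threshold up through $[\theta_{\mathrm{SL}},\tau(p))$ and, once it reaches $\tau(p)$, snaps back to $\theta_{\mathrm{SL}}$: the iterates oscillate between $\theta_{\mathrm{SL}}$ and $\tau(p)$ with no fixed point. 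Finally $\partial_\theta h>0$ together with $\partial_p h=\int_{\Delta(\theta)}^{\theta_{\mathrm{SL}}}(1-2\mu(z))\rho(z)\,dz>0$ gives $\tau'(p)<0$ by the implicit function theorem; letting $p\to1$ kills the second term of $h$, forcing the root to $\theta_{\mathrm{SL}}$; and $h(\theta,0)=\int_{\Delta(\theta)}^{\theta}(2\mu(z)-1)\rho(z)\,dz=(2\bar\mu(\theta)-1)w(\theta)$, whose root is $\thetaPS^{\mathrm{SM}}$, so $\tau(p)\to\thetaPS^{\mathrm{SM}}$ as $p\to0$; hence $\tau(p)\in(\theta_{\mathrm{SL}},\thetaPS^{\mathrm{SM}})$, decreasing, with the stated limits. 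The main obstacle is precisely this non‑attainment: because the atom sits on the decision boundary, the "risk minimizer" genuinely fails to exist for $\theta\in[\theta_{\mathrm{SL}},\tau(p))$, so "repeated risk minimization" must be interpreted as picking a threshold infinitesimally above the atom (equivalently, an $\varepsilon$‑optimal one), and one must verify that the atom tracks this threshold so the dynamics truly ratchet up to $\tau(p)$ rather than stall; the supporting monotonicity of $\bar\mu$ and the endpoint signs of $h$ are routine but are what make the oscillation picture and the limits of $\tau(p)$ come out cleanly.
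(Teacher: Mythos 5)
Your proposal is correct and takes essentially the same route as the paper's own proof: the gaming-window/atom description of $\cD(\theta)$, your undercut function $h(\theta,p)$ (which coincides exactly with the paper's $Z(p,\theta)$), the best-response case analysis with the ``just above the atom'' classifier and $\varepsilon$-discretization, and the monotonicity of $h$ in $\theta$ and $p$ with the endpoint signs to define $\tau(p)$ and get its limits all mirror the paper's lemmas for $p=1$, $p=0$, nonexistence for $p\in(0,1)$, and the oscillation of repeated risk minimization. One small caution: in part (b), ruling out candidates other than $\theta_{\mathrm{SL}}$ should be justified by your local monotonicity facts (for $\theta^*>\theta_{\mathrm{SL}}$, $L(\cdot;\cD(\theta^*))$ is strictly increasing just below $\theta^*$ because of the non-strategic density $p\rho$, so moving down strictly improves; for $\theta^*<\theta_{\mathrm{SL}}$, the strict decrease and downward jump just above $\theta^*$ do the job) rather than by the global response correspondence, since local stability is weaker than being a best-response fixed point (at $p=0$ many points are locally stable without being global best responses) --- the needed facts are already present in your write-up, so this is a matter of phrasing, not a gap.
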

\begin{proof}[Proof Sketch]
For $p = 1$, it is easy to see that $\theta_{\mathrm{SL}}$ is the unique locally stable point, and for $p = 0$, the claim follows from an argument similar to Lemma 3.2 in~\citep{MMDH19}. For $p\in(0,1)$, the core observation is that for any $\theta$ the distribution $\cD(\theta)$ contains no \emph{strategic} agents in the interval $\mathsf{Gap}(\theta):=[\theta-\Delta,\theta]$ for some $\Delta>0$. Furthermore, for any $\theta>\theta_{\mathrm{SL}} $  the misclassification rate on \emph{non-strategic} agents could be improved by reducing the threshold to $\theta_{\mathrm{SL}}$. Thus, it is not hard to see that or any $\theta>\theta_{\mathrm{SL}}$, the threshold $\max(\theta - \Delta, \theta_{\mathrm{SL}})$ achieves smaller loss than $\theta$, and thus $\theta$ cannot be stable. We formalize this argument and the case for $\theta\leq\theta_{\mathrm{SL}}$ in Appendix~\ref{app:nostablepoints}. 
\end{proof}

\begin{figure*}[t!]
\subfigure[\footnotesize{SM mixed with non-strategic agents}]{
\includegraphics[width = 0.48\textwidth]{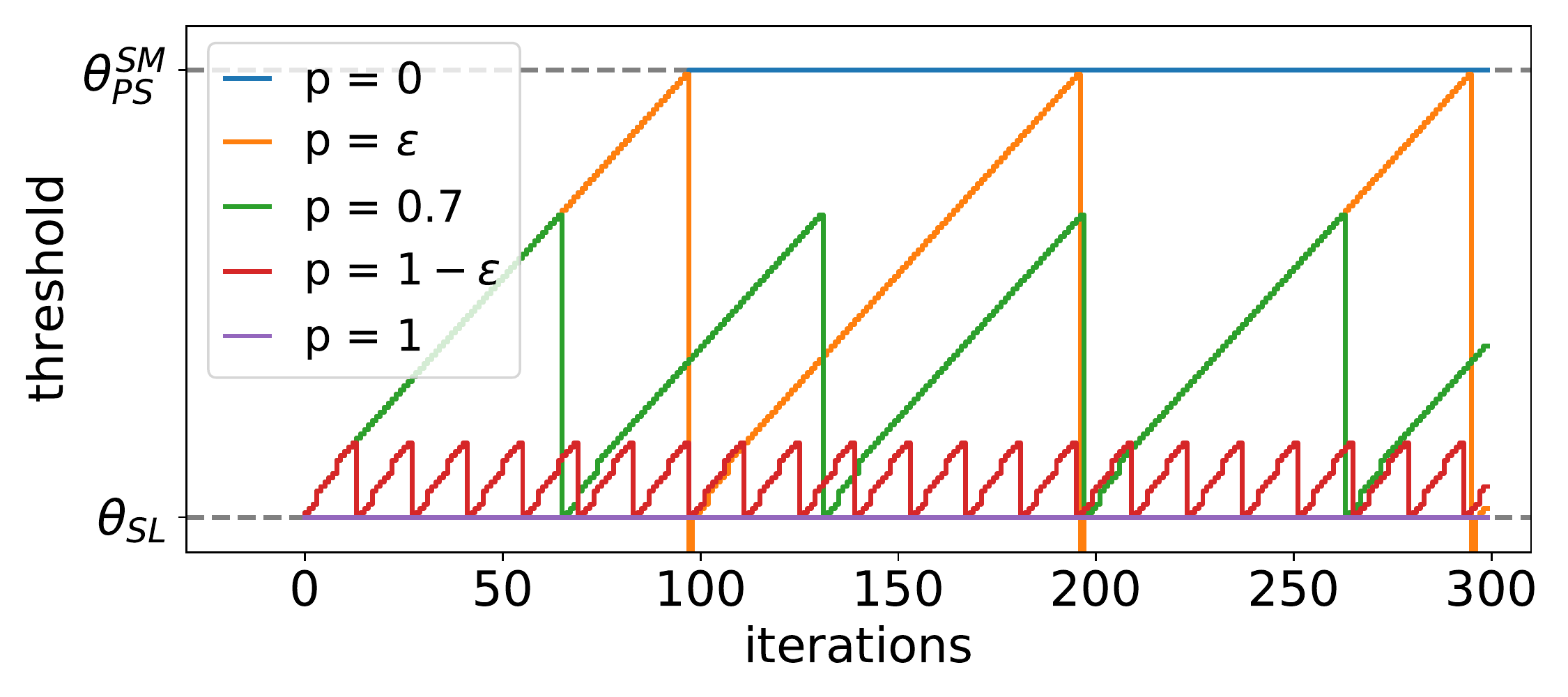}
\label{fig:oscillationBR}}
\subfigure[\footnotesize{NR mixed with non-strategic agents}]{
\includegraphics[width = 0.48\textwidth]{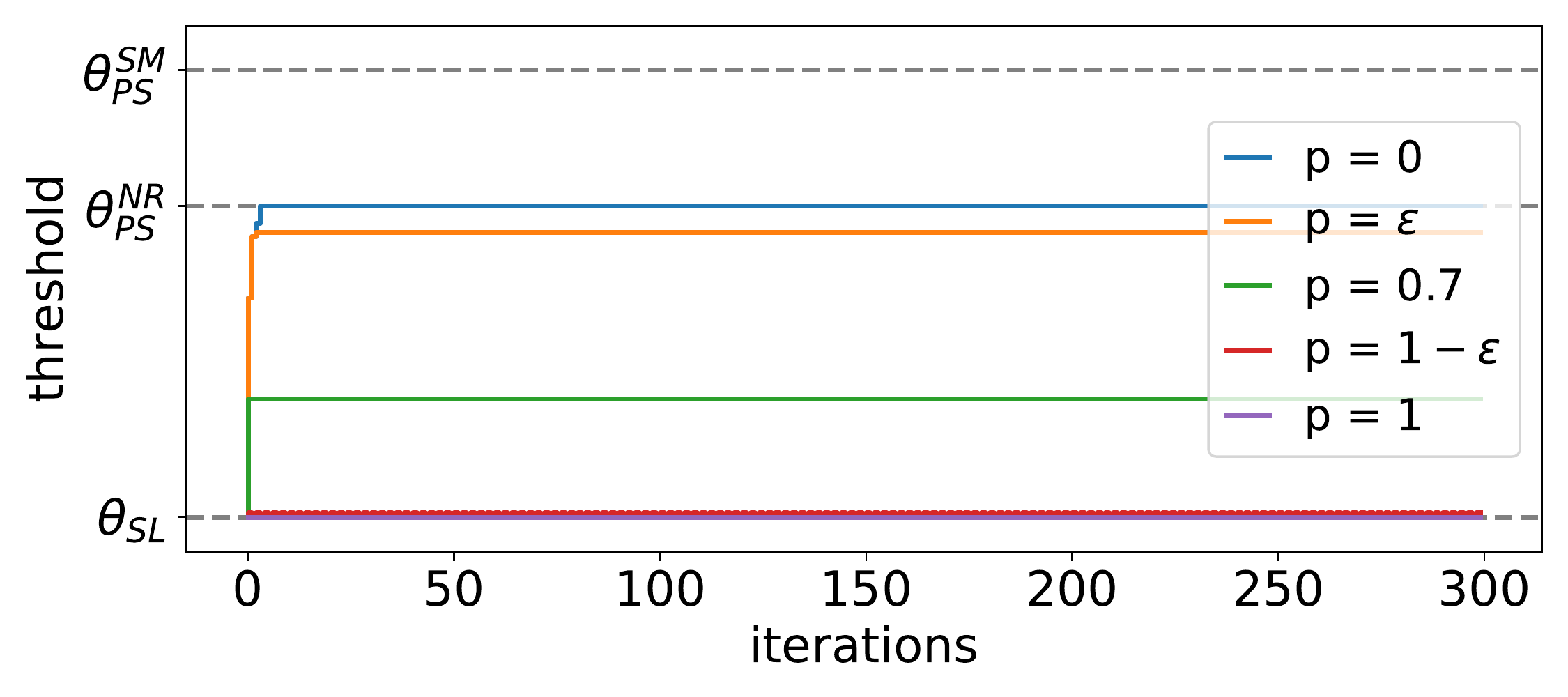}
\label{fig:oscillation-fuzzy}}
\caption{Convergence of retraining algorithm in a 1d-setting for different values of $p$ with $\epsilon=10^{-2}$. The population consists of $10^{5}$ individuals. Half of the individuals are sampled from $x\sim\mathcal N(1,0.33)$ with true label $1$ and the other half is sampled from $x\sim\mathcal N(0,0.33)$ with true label $0$. $\thetaPS^{\mathrm{SM}}$ and $\theta_{\mathrm{SL}}$ are defined as in Proposition~\ref{propo:stablepointsnotexist}, and $\thetaPS^{\mathrm{NR}}$ is defined to be the smallest locally stable point when all agents follow noisy response (NR).  The parameter of the noisy response in (b) is taken to be $\sigma^2 = 0.1$. }

\label{fig:oszillation}
\end{figure*}

Proposition~\ref{propo:stablepointsnotexist} implies that not only does the existence of locally stable points break down if a positive fraction $p\in(0,1)$ of randomly chosen agents are non-strategic, but also repeated risk minimization oscillates between two extreme points. To illustrate this, we have implemented a simple instantiation of Setup~\ref{ex:stablepointsnotexist}, and we visualize the trajectories of repeated risk minimization for different values of $p$ in Figure~\ref{fig:oscillationBR}. The main insight is that repeated risk minimization starts oscillating substantially even when $p$ is very close to $0$ (only an $\epsilon$ fraction of agents are not following standard microfoundations), even though this method converges when $p = 0$. This sensitivity of the trajectory to natural deviations from the modeling assumptions suggests that standard microfoundations do not constitute a reliable model to study algorithm dynamics. 

\begin{remark}
Unlike repeated risk minimization, repeated gradient descent is not well-defined for standard microfoundations. Because of the induced discontinuities, the optimization objective in \eqref{eq:localPS} for Setup \ref{ex:stablepointsnotexist} with $p < 1$ is not differentiable in the classifier weights. Thus, standard microfoundations do not serve as a useful basis to study repeated gradient descent.
\end{remark}

\subsubsection{Maximal negative externalities at optimality}\label{subsec:extreme}
Our next result describes a natural scenario where performatively optimal classifiers computed under standard microfoundations lead to the highest negative externalities within a broad class of alternative models for agent responses.

Recall that a performatively optimal solution corresponds to the best classifier for the decision-maker from a global perspective, but it is not necessarily stable under retraining. Formally, a classifier $\thetaPO$ is \textit{performatively optimal}~\citep{PZMH20} if it minimizes the performative risk:

\begin{equation}  
\thetaPO:=\argmin_{\theta\in\Theta} \mathbb{E}_{(x,y)\sim\cD(\theta)} \,\mathds{1}\{y\neq f_\theta(x)\}.
    \label{eq:PO}
\end{equation}

\noindent
Performative optimality is closely related to the concept of a \textit{Stackelberg equilibrium} in a leader-follower game, where the decision-maker plays first and the agents respond.

The key challenge of computing performative optima is that optimizing \eqref{eq:PO} requires the decision-maker to anticipate the population's response $\cD(\theta)$ to any classifier $f_\theta$. A natural approach to model this response is to build on microfoundations and deduce properties of the distribution map from individual agent behavior. Different models for agent behavior can lead to solutions with qualitatively different properties.

While the decision-maker is unlikely to have a fully specified model for agent behavior at hand, we outline a few natural criteria that agent responses could reasonably satisfy. 
To formalize these criteria, we again focus on the 1-dimensional setting. 
\begin{property}[Expenditure monotonicity]
\label{property:SB}
For every feature vector $x \in X$, any agent $a$ with true features $x$ must have manipulated features $\Response_a(x; \theta)$ in response to each classifier $f_{\theta}$ that satisfy: 
\begin{enumerate}[itemsep=-0.5ex, topsep=0.5ex]
    \item[a)]  $c(x,\mathcal R_a(x;\theta)) < \gamma $ for every $\theta \in \Theta$. 
    \item[b)] $f_{\theta}(\mathcal R_a(x;\theta)) = 1$ $\implies$ $f_{\theta'}(\mathcal R_a(x;\theta')) = 1$ $\forall \theta' \le \theta$. 
\end{enumerate} 
\end{property}
\noindent Property~\ref{property:SB} describes agents that a)  do not expend more on gaming than their utility from a positive outcome, and b) do not have their outcome worsened if the threshold is lowered. However, agents complying with Property \ref{property:SB} do not necessarily behave according to standard microfoundations. For example, Property \ref{property:SB} is satisfied by non-strategic agents who do not ever change their features and by the imperfect agents that we describe in Section \ref{sec:framework}. 

We now show that within the broad class of microfoundations that exhibit Property~\ref{property:SB}, the standard model leads to an extreme acceptance threshold. For the formal statement, see Appendix~\ref{app:socialburden}.

\begin{proposition}[Informal]
\label{prop:socialburdeninformal}
Consider Setup \ref{ex:stablepointsnotexist}. Let $\mathscr{D}$ be the class of distribution maps $\cD:\Theta\rightarrow \Delta(X\times Y)$ that can be represented by a population of agents who all satisfy Property~\ref{property:SB}. Then under mild assumptions, for every distribution map $\cD\in\mathscr{D}$, it holds that \vspace{-0.2cm}
\begin{align*}
    &\quad\quad\quad\thetaPO(\DMap_{\text{SM}}) \geq \thetaPO(\DMap)
\end{align*}
where $\DMap_{\text{SM}}$ is the distribution map induced by standard microfoundations.
\end{proposition}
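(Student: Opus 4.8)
The plan, working throughout in the $1$-dimensional Setup~\ref{ex:stablepointsnotexist}, is to first solve the decision-maker's problem under standard microfoundations in closed form, identifying $\thetaPO(\DMap_{\text{SM}})$ as $(x^{\star})^{-1}(\theta_{\mathrm{SL}})$ for an explicit displacement function $x^{\star}$, and then to show, using only Property~\ref{property:SB}, that for every $\cD\in\mathscr D$ the performative risk is not improved by any threshold exceeding this value.

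\emph{Step 1 (the standard-microfoundations optimum).} Fix the cost function $c$ and reward $\gamma$ from Property~\ref{property:SB}. Using the monotonicity clauses of Assumption~\ref{assumption:validcost}, $c(x,\theta)$ is strictly decreasing in $x$ on $\{x<\theta\}$, so there is a unique $x^{\star}(\theta)<\theta$ with $c(x^{\star}(\theta),\theta)=\gamma$ (a mild assumption ensures this value lies in $X$; the implicit function theorem, using the signs of the partials of $c$, makes $x^{\star}$ continuously differentiable and strictly increasing). A best-responding agent with true feature $x$ is classified positively at threshold $\theta$ exactly when $x\ge x^{\star}(\theta)$: agents with $x\ge\theta$ stay, agents with $x\in[x^{\star}(\theta),\theta)$ move to the boundary, and agents with $x<x^{\star}(\theta)$ cannot profitably reach the boundary and stay put. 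Writing $\rho$ for the feature marginal of $\DBase$, the performative risk becomes
\begin{equation*}
\mathrm{PR}_{\text{SM}}(\theta)=\int_{\{x<x^{\star}(\theta)\}}\rho(x)\mu(x)\,dx+\int_{\{x\ge x^{\star}(\theta)\}}\rho(x)(1-\mu(x))\,dx=:g\big(x^{\star}(\theta)\big),
\end{equation*}
so that $g'(s)=\rho(s)(2\mu(s)-1)$. Since $\mu$ is strictly increasing with $\mu(\theta_{\mathrm{SL}})=\tfrac12$ (here $\theta_{\mathrm{SL}}=\mu^{-1}(\tfrac12)$ is the Bayes threshold for $\DBase$), $g$ strictly decreases then strictly increases, with unique minimizer $\theta_{\mathrm{SL}}$; as $x^{\star}$ is strictly increasing with $x^{\star}(\theta)<\theta$, the unique minimizer of $\mathrm{PR}_{\text{SM}}$ is $\theta^{\star}_{\text{SM}}:=(x^{\star})^{-1}(\theta_{\mathrm{SL}})>\theta_{\mathrm{SL}}$. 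A mild assumption places $\theta^{\star}_{\text{SM}}$ in the interior of $\Theta$, so $\thetaPO(\DMap_{\text{SM}})=\theta^{\star}_{\text{SM}}$.

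\emph{Step 2 (no other model prefers a larger threshold).} Let $\cD\in\mathscr D$ be induced by a population all satisfying Property~\ref{property:SB}, and let $A(\theta)$ denote the set of agents classified positively at threshold $\theta$. Two containments drive the argument. First, part~(a) of Property~\ref{property:SB} combined with Assumption~\ref{assumption:validcost} shows that an agent with true feature $x<x^{\star}(\theta)$ satisfies $c(x,x')>\gamma$ for every $x'\ge\theta$ and hence cannot lie in $A(\theta)$; together with the mild assumption that agents already above the threshold remain accepted, this gives $\{x_a\ge\theta\}\subseteq A(\theta)\subseteq\{x_a\ge x^{\star}(\theta)\}$. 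Second, part~(b) of Property~\ref{property:SB} gives $A(\theta)\subseteq A(\theta')$ for all $\theta'\le\theta$, i.e.\ the accept set only grows as the threshold is lowered. Now fix $\theta\ge\theta^{\star}_{\text{SM}}$ and compare with $\theta^{\star}_{\text{SM}}$: the newly accepted set $N:=A(\theta^{\star}_{\text{SM}})\setminus A(\theta)$ is contained in $\{\theta_{\mathrm{SL}}=x^{\star}(\theta^{\star}_{\text{SM}})\le x_a<\theta\}$, a feature range on which $\mu\ge\tfrac12$. Decomposing $\mathrm{PR}_{\cD}$ into false positives and false negatives,
\begin{equation*}
\mathrm{PR}_{\cD}(\theta^{\star}_{\text{SM}})-\mathrm{PR}_{\cD}(\theta)=\Pr[a\in N,\ y_a=0]-\Pr[a\in N,\ y_a=1],
\end{equation*}
and under the mild assumption that, conditional on the true feature, membership in $N$ is uncorrelated with the label, the right-hand side equals $\int_{\theta_{\mathrm{SL}}}^{\theta}\rho(x)\,q(x)\,(1-2\mu(x))\,dx\le 0$, where $q(x)\in[0,1]$ is the conditional probability of joining $N$. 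Hence $\mathrm{PR}_{\cD}(\theta^{\star}_{\text{SM}})\le\mathrm{PR}_{\cD}(\theta)$ for every $\theta\ge\theta^{\star}_{\text{SM}}$, so $\mathrm{PR}_{\cD}$ is minimized at some threshold $\le\theta^{\star}_{\text{SM}}$; therefore $\thetaPO(\DMap)\le\theta^{\star}_{\text{SM}}=\thetaPO(\DMap_{\text{SM}})$, which is the claim.

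\emph{Main obstacle.} The real work is in the two ``local'' facts that link Property~\ref{property:SB} to the geometric picture. Extracting the containment $A(\theta)\subseteq\{x_a\ge x^{\star}(\theta)\}$ from part~(a) requires a careful chain of inequalities through both monotonicity clauses of Assumption~\ref{assumption:validcost} (one needs $c(x_a,x')\ge c(x_a,\theta)>c(x^{\star}(\theta),\theta)=\gamma$, with the two clauses used in different argument slots), together with the regularity of $x^{\star}$ that the mild assumptions supply. The second subtlety is ruling out perverse label-correlated gaming: without the mild assumption in Step~2, a model could have the negatives in $[\theta_{\mathrm{SL}},\theta)$ game much more aggressively than the positives, making the newly-accepted mass net-negative and breaking the pointwise comparison at $\theta^{\star}_{\text{SM}}$. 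Boundary effects where $x^{\star}(\theta)$ or $\theta^{\star}_{\text{SM}}$ approach $\partial X$ or $\partial\Theta$ are routine given that $\DBase$ places no mass on $\partial X$.
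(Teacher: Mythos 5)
Your proposal is correct and takes essentially the same route as the paper's proof of the formal version (Proposition~\ref{prop:socialburdenformal}): identify $\thetaPO(\DMap_{\text{SM}})$ as the threshold whose cost-$\gamma$ reach is exactly $\theta_{\mathrm{SL}}$, then for any compliant $\cD$ and any $\theta$ above it use Property~\ref{property:SB}(b) for nested accept sets, Property~\ref{property:SB}(a) to keep agents with $x<\theta_{\mathrm{SL}}$ out of the newly accepted set, and label-independent response types (the paper's Assumption~\ref{assumption:gamingbehavior}) together with $\mu\ge\tfrac12$ to sign the risk difference. The only cosmetic differences are your symmetric-difference decomposition in place of the paper's partition into $\{x<\theta_{\mathrm{SL}}\}$ and $\{x\ge\theta_{\mathrm{SL}}\}$, and your extra assumption that agents above the threshold are always accepted, which the paper's argument (and in fact yours) does not actually need.
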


A problematic implication of Proposition~\ref{prop:socialburdeninformal} is that standard microfoundations also maximize the negative externality called \emph{social burden}~\citep{MMDH19}:
\[\SocialBurden{\theta}:=\mathbb{E}_{(x,y) \in \DBase} \left[\min_{x' \in X} \left\{c(x, x') \mid f_{\theta}(x') = 1\right\} \mid y = 1\right].\]
Social burden quantifies the average cost that a positively labeled agent has to expend in order to be positively classified by $f_\theta$. While previous work introduced and studied social burden within standard microfoundations, and showed that Nash equilibria lead to smaller social burden than Stackelberg equilibria, we instead use social burden as a metric to study implications of different modeling assumptions on agent behavior. In particular, the following corollary demonstrates that standard microfoundations lead to \textit{worst possible social burden} across all microfoundations that satisfy Property~\ref{property:SB}. 

\begin{corollary}
\label{coro:socialburdeninformal}
Under the same assumptions as Proposition~\ref{prop:socialburdeninformal}, for every distribution map $\cD\in\mathscr{D}$, it holds that \vspace{-0.1cm}
\begin{align*}
    &\SocialBurden{\thetaPO(\DMap_{\text{SM}})} \geq \SocialBurden{\thetaPO(\DMap)}.
\end{align*}
where  $\DMap_{\text{SM}}$ is the distribution map induced by standard microfoundations.
\end{corollary}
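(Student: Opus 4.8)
The plan is to obtain the corollary directly from Proposition~\ref{prop:socialburdeninformal} together with a monotonicity property of the social burden functional in the one-dimensional threshold setting of Setup~\ref{ex:stablepointsnotexist}. The key observation is that $\SocialBurden{\theta}$ is evaluated against the \emph{fixed} base distribution $\DBase$, so as $\theta$ varies only the inner quantity $\min\{c(x,x')\mid x'\in X,\ f_\theta(x')=1\}$ changes; it therefore suffices to understand how this inner minimum depends on $\theta$.

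First I would show that $\theta\mapsto\SocialBurden{\theta}$ is non-decreasing. For threshold classifiers $f_\theta(x)=\mathds{1}\{x\ge\theta\}$ the acceptance region $\{x'\in X\mid f_\theta(x')=1\}=\{x'\in X\mid x'\ge\theta\}$ shrinks with respect to inclusion as $\theta$ increases. Hence for any $x$ and any $\theta_1\le\theta_2$, the feasible set in the inner minimization under $\theta_2$ is contained in the one under $\theta_1$, so $\min\{c(x,x')\mid x'\in X,\ x'\ge\theta_2\}\ge\min\{c(x,x')\mid x'\in X,\ x'\ge\theta_1\}$; both minima are attained because $c$ is continuous (Assumption~\ref{assumption:validcost}) and the acceptance region is closed. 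Taking the conditional expectation over $(x,y)\sim\DBase$ given $y=1$ preserves the inequality, giving $\SocialBurden{\theta_1}\le\SocialBurden{\theta_2}$. (Validity of $c$ also pins down the minimizer as $x'=\theta$ when $x<\theta$ and $x'=x$ otherwise, so one could alternatively write $\SocialBurden{\theta}=\mathbb{E}_{\DBase}[\,c(x,\theta)\,\mathds{1}\{x<\theta\}\mid y=1\,]$ and verify monotonicity from the validity inequalities directly, but the inclusion argument is cleaner and needs nothing beyond that the acceptance sets are nested.)

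Then I would invoke Proposition~\ref{prop:socialburdeninformal}: under its hypotheses, $\thetaPO(\DMap_{\text{SM}})\ge\thetaPO(\DMap)$ for every $\cD\in\mathscr{D}$. Combining this with the monotonicity established above yields $\SocialBurden{\thetaPO(\DMap_{\text{SM}})}\ge\SocialBurden{\thetaPO(\DMap)}$, which is exactly the claim. The only real work is bookkeeping: checking that the ``mild assumptions'' inherited from Proposition~\ref{prop:socialburdeninformal} are precisely those stated there, and that the minima defining $\SocialBurden{\cdot}$ are well-defined and measurable in $x$ — both follow from Assumption~\ref{assumption:validcost} and the standing assumptions on $\Theta$ and $\DBase$ from Section~\ref{sec:background}. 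I do not expect any genuinely hard step beyond the nested-acceptance-set monotonicity observation, which is where all the content lies.
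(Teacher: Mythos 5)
Your proposal is correct and matches the paper's own proof: the paper likewise deduces the corollary by invoking Proposition~\ref{prop:socialburdeninformal} to get $\thetaPO(\DMap_{\text{SM}})\ge\thetaPO(\DMap)$ and then noting that the social burden is monotonically increasing in the threshold $\theta$. Your nested-acceptance-set argument simply spells out the monotonicity step that the paper treats as immediate.
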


This result has implications for the natural situation where standard microfoundations do not exactly describe agent behavior. In particular, relative to the performative optimal point of the true agent responses, the solutions computed using standard microfoundations would not only experience suboptimal performative risk but also would cause unnecessarily high social burden. This makes it hard for the decision-maker to justify the use of standard microfoundations as a representative model for agent behavior. Implicit in our argument is the following moral stance: \textit{given a set of criteria for what defines a plausible model for microfoundations, the decision-maker should not select the one that maximizes negative externalities}.

\section{Alternative microfoundations}\label{sec:agnostic}

We now depart from the classical approach and systematically search for models that are more appropriate for binary classification. We define the space of all possible alternative microfoundations and collect a set of useful properties that we show are desirable for microfoundations to satisfy.

\subsection{Defining the space of alternatives}
The principle behind microfoundations for strategic classification is to equip the distribution map with structure by viewing the distribution induced by a decision rule as an aggregate of the responses of individual agents. We consider agent responses in full generality by introducing a family of response types $\AllTypes$ that represents the space of all possible ways that agents can perceive and react to the classifier $f_\theta$. The response type fully determines agent behavior through the \textit{agent response function} $\Response_t: X \times \Theta \rightarrow X$. In particular, an agent with true features $x$ and response type $t$ changes their features to $x'=\Response_t(x, \theta)$ when the classifier $f_\theta$ is deployed.
\begin{remark}
Using the language of agent response functions, non-strategic agents correspond to a response type $t_{\text{NS}}$ such that $\Response_{t_{\text{NS}}}(x, \theta) = x$ for all $\theta \in \Theta$, and standard microfoundations correspond to a response type $t_{\text{SM}}$ where $\Response_{t_{\text{SM}}}(x, \theta)$ is given by \eqref{eq:SM} for all $\theta \in \Theta$. Note that a population of agents could be heterogeneous and exhibit a mixture of different types, or even be described by a \textit{continuum} of response types.
\end{remark}

We formalize microfoundations through a \textit{mapping} $M: X \times Y \rightarrow \AllTypes$ from agents to response types. We denote the set of possible mappings $M$ by the collection $\cM$ that consists of all\footnote{These mappings are subject to mild measurability constraints that we describe in detail in Appendix \ref{subsec:requirements}.} possible randomized functions $X \times Y \rightarrow \AllTypes$. For example, standard microfoundations correspond to the mapping $M_{\textsc{SM}} \in \cM$ such that $M_{\textsc{SM}}(x, y) = t_{\textsc{SM}}$ for all $x, y \in X \times Y$, and a population of non-strategic agents corresponds to the mapping $M_{\textsc{NS}} \in \cM$ such that $M_{\textsc{NS}}(x, y) = t_{\textsc{NS}}$ for all $x, y \in X \times Y$. While these two homogeneous populations can be captured by deterministic mappings, randomization is necessary to capture heterogeneity in agent responses across agents with the same original features. For example, randomness allows us to capture a \textit{mixed} population of agents where some agents behave according to standard microfoundations and other agents are non-strategic. 

Conceptually, the mapping $M\in\cM$ sets up the rules of agent behavior. One aspect that distinguishes our framework from typical approaches to microfoundations in the economics literature is that it directly specifies agent responses, rather than specifying an underlying behavioral mechanism. An advantage of this approach is that responses can be observed, whereas the behavioral mechanism is harder to infer.

Importantly, the mapping $M$ coupled with the base distribution $\DBase$ provides all the necessary information to specify the population's response to a classifier $f_\theta$. In particular, for each $\theta \in \Theta$, the \textit{aggregate response} $\cD(\theta; M)$ is the distribution over $(\Response_t(x, \theta), y)$ where $(x, y) \sim \DBase$ and $t=M(x,y)$. We use the notation $\cD(\cdot; M): \Theta \rightarrow \Delta(X \times Y)$ to denote the aggregate response map induced by $M$. By defining the distribution map, the mapping $M$ thus provides sufficient information to reason about the performative risk; in addition, it also provides sufficiently fine-grained information about individuals to reason about metrics such as social burden. 

Naturally, with such a flexible model, \textit{any} distribution map can be microfounded, albeit with complex response types, as long as feature manipulations do not change the fraction of positively labeled agents in the population. 
We refer to Appendix \ref{appendix:microfound} for an explicit construction of $M$.
\begin{restatable}{proposition}{microfound}
\label{prop:microfound}
Let $\DBase$ be a non-atomic distribution. Let $\DMap(\theta)$ be any distribution map that preserves the marginal distribution over $Y$ of $\DBase$. Then, there exists a mapping $M \in \mathcal{M}$ such that $\cD(\noargument; M)$ is equal to $\cD(\noargument)$.
\end{restatable}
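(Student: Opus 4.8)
The plan is to construct, for each feature-label pair $(x,y)$, a response type that ``transports'' the conditional law of the manipulated features under $\cD(\theta)$ back onto the base distribution, using a standard one-dimensional coupling argument applied coordinate-wise or via a measurable selection. First I would fix the label marginal: since $\cD(\theta)$ preserves the marginal over $Y$, for each $y\in\{0,1\}$ it suffices to work separately within the conditional distributions $\DBase(\cdot\mid y)$ on $X$ and $\cD(\theta)(\cdot\mid y)$ on $X$, both of which are non-atomic Borel probability measures on $X\subseteq\mathbb R^m$. The key classical fact I would invoke is that any two non-atomic Borel probability measures on a standard Borel space are isomorphic; concretely, there is a measurable bijection (mod null sets) $g_{\theta,y}\colon X\to X$ pushing $\DBase(\cdot\mid y)$ forward to $\cD(\theta)(\cdot\mid y)$. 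Equivalently, one can route both through the uniform distribution on $[0,1]$ via the Borel isomorphism theorem, or use the Rokhlin/disintegration machinery; the precise construction can be deferred to the appendix as the excerpt indicates.

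Given these transport maps, I would define the response type of an agent as the entire family $\{g_{\theta,y}\}_{\theta\in\Theta}$ indexed by $\theta$: that is, set the response function $\Response_t(x,\theta) := g_{\theta,y}(x)$, where $y$ is the agent's label. The mapping $M$ then assigns to $(x,y)$ the (deterministic) response type $t_{(x,y)}$ whose response function is $\theta\mapsto g_{\theta,y}(x)$. By construction, for a random draw $(x,y)\sim\DBase$ and $t=M(x,y)$, the pair $(\Response_t(x,\theta),y)$ has law: label $y$ distributed as under $\DBase$ (hence as under $\cD(\theta)$ by the marginal-preservation hypothesis), and conditionally on $y$, the feature $g_{\theta,y}(x)$ distributed as $(g_{\theta,y})_\#\DBase(\cdot\mid y)=\cD(\theta)(\cdot\mid y)$. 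Therefore $\cD(\cdot;M)=\cD(\cdot)$, as required.

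The main obstacle is measurability, both of the transport maps $g_{\theta,y}$ and of the resulting mapping $M$ as an element of $\mathcal M$ (which the excerpt notes is subject to mild measurability constraints spelled out in an appendix). One must check that $g_{\theta,y}$ can be chosen jointly measurably in $(\theta,x)$ — this follows from a measurable version of the isomorphism theorem / measurable selection theorems (e.g. via inverse CDF constructions along a fixed Borel isomorphism $X\cong[0,1]$, which depend measurably on the parameter), and from the fact that $\theta\mapsto\cD(\theta)$ is a map into $\Delta(X\times Y)$ — and that the induced assignment $(x,y)\mapsto t_{(x,y)}$ lands in $\AllTypes$ with the required measurable structure. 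A secondary, purely bookkeeping point is that the hypothesis that feature manipulations preserve the positive fraction is exactly what lets us disintegrate by $y$ without needing randomized types to shuffle labels; the remark in the excerpt about randomization being needed for heterogeneity is compatible with our construction being deterministic here, since a single complicated deterministic type per $(x,y)$ suffices. I would present the explicit construction of $g_{\theta,y}$ and verify these measurability claims in Appendix~\ref{appendix:microfound}, and keep the body-level argument at the level of the three paragraphs above.
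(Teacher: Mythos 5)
Your construction is essentially the paper's: condition on the label $y$ (using that the $Y$-marginal is preserved), transport the conditional base distribution $\DBase(\cdot\mid y)$ onto the conditional induced distribution $\cD(\theta)(\cdot\mid y)$ by a measurable map for each $\theta$, and package these maps into one deterministic response type per label, so that $M$ depends only on $y$. The paper does exactly this, deferring the existence of the transport maps to a cited result stating that there is a Borel map carrying any tight non-atomic measure onto any other probability measure.

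One correction, though it does not sink the argument: your first route asserts that $\cD(\theta)(\cdot\mid y)$ is also non-atomic and that one can take $g_{\theta,y}$ to be a measurable bijection (an isomorphism mod null sets). The hypotheses give non-atomicity only of $\DBase$; the induced distribution may have atoms --- indeed the distributions one most wants to microfound in this paper (e.g.\ those with a point mass at the decision boundary) are atomic, and no bijection can push a non-atomic measure onto an atomic one. The fix is to drop bijectivity and use only a one-directional Borel pushforward from the non-atomic source to an arbitrary target, which is precisely what the paper's cited result provides and what your alternative remark (routing through the uniform distribution on $[0,1]$ and a generalized inverse CDF) already accomplishes. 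With that route taken as primary, and the measurability of $(x,y,\theta)\mapsto(\Response_t(x,\theta),y)$ checked as you indicate (the paper's Assumption on measurability), the proposal matches the paper's proof.
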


This result primarily serves as an existence result to show that our general framework for microfoundations can capture continuous distributions that are observed empirically (e.g. Examples \ref{ex:FICO}-\ref{ex:testscores}). 
In the following subsections, we focus on narrowing down the space of candidate models and describe two properties that we believe microfoundations should satisfy. 

\subsection{Aggregate smoothness}

The first property pertains to the induced distribution and its interactions with the function class. This aggregate-level property rules out unnatural discontinuities in the distribution map.
We call this property \emph{aggregate smoothness}, and formalize it in terms of the \textit{decoupled performative risk}~\citep{PZMH20}.

\begin{property}[Aggregate smoothness]
\label{prop:AS}
Define the decoupled performative risk induced by $M$ to be
$\PRDecoupledM{\theta}{ \theta'} := \mathbb{E}_{(x,y)\sim \cD(\theta;M)} [\;\Indicator\{y\neq f_{\theta'}(x)\}]$. For a given base distribution $\DBase$, a mapping $M$ satisfies \textit{aggregate smoothness} if the derivative of the decoupled performative risk  with respect to $\theta'$ exists and is continuous in $\theta$ and $\theta'$ across all of $\Theta$. 
\end{property}

Intuitively, the existence of the partial derivative of $\PRDecoupledM{\theta}{ \theta'}$ with respect to $\theta'$ guarantees that \textit{each distribution $\DMap(\theta; M)$ is sufficiently continuous (and cannot have a point mass at the decision boundary)}, and assuming continuity of the derivative we guarantee that \textit{$\DMap(\theta;M)$ changes continuously in $\theta$}. This connection between aggregate smoothness and continuity of the distribution map can be made explicit in the case of 1-dimensional features:
\begin{restatable}{proposition}{onedsmooth}
\label{prop:1dsmooth}
Suppose that $X \subseteq \mathbb{R}$, and let $\Theta \subseteq \mathbb{R}$ be a function class of threshold functions. Then, if the distribution map $\DMap(\noargument; M)$ has the following properties, the mapping $M$ satisfies aggregate smoothness w.r.t. $\Theta$:
\begin{enumerate}[itemsep=-0.5ex, topsep=-0.2ex]
    \item For each $\theta$, the probability density $p_{\theta}(x,y)$ of $\DMap(\theta; M)$ exists everywhere and is continuous in $x$. 
    \item For each $x, y$, the probability density $p_{\theta}(x,y)$ is continuous in $\theta$. 
\end{enumerate}
\end{restatable}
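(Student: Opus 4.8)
The plan is to unwind the definition of the decoupled performative risk for threshold functions, express it as an integral of the density against an indicator whose boundary depends on $\theta'$, and then differentiate under the integral sign to extract a clean formula for $\partial_{\theta'}\PRDecoupledM{\theta}{\theta'}$ in terms of $p_\theta$. Concretely, for a threshold class $f_{\theta'}(x)=\mathds{1}\{x\ge\theta'\}$ we have
\begin{equation*}
\PRDecoupledM{\theta}{\theta'} = \int_{-\infty}^{\theta'} p_\theta(x,1)\,dx \;+\; \int_{\theta'}^{\infty} p_\theta(x,0)\,dx,
\end{equation*}
since a true-label-$1$ point is misclassified exactly when $x<\theta'$ and a true-label-$0$ point is misclassified exactly when $x\ge\theta'$ (the decision boundary has measure zero, so the endpoint convention is irrelevant). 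The first step is to justify this identity carefully, using that $\DMap(\theta;M)$ has a density $p_\theta(x,y)$ in $x$ by hypothesis 1, so the expectation of the $0$--$1$ loss becomes a genuine Lebesgue integral.

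The second step is the differentiation in $\theta'$. By the fundamental theorem of calculus, and using continuity of $p_\theta(\cdot,0)$ and $p_\theta(\cdot,1)$ in $x$ (hypothesis 1), each of the two integrals above is differentiable in $\theta'$ with
\begin{equation*}
\frac{\partial}{\partial\theta'}\PRDecoupledM{\theta}{\theta'} = p_\theta(\theta',1) - p_\theta(\theta',0).
\end{equation*}
This is where hypothesis 1 does the real work: it both makes the integrand well-defined and makes the derivative exist everywhere (no atom at the boundary would otherwise create a jump). The third step is to observe that the right-hand side, viewed as a function of $(\theta,\theta')$, is continuous: it is continuous in $\theta'$ because $p_\theta(\cdot,y)$ is continuous in $x$ (hypothesis 1, applied with $x=\theta'$), and continuous in $\theta$ because $p_\theta(\theta',y)$ is continuous in $\theta$ for fixed arguments (hypothesis 2). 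A short joint-continuity argument — e.g. bounding $|p_{\theta_1}(\theta_1',y)-p_{\theta_2}(\theta_2',y)|$ by a $\theta$-variation term plus an $x$-variation term — upgrades separate continuity in each variable to joint continuity in $(\theta,\theta')$, which is exactly what Property \ref{prop:AS} demands. That done, $M$ satisfies aggregate smoothness.

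The main obstacle I anticipate is the joint-continuity step: separate continuity in $\theta$ and in $\theta'$ does not in general imply joint continuity, so one needs a uniformity input. The natural fix is to note that $\Theta$ is compact (stated in the setup) and to argue that $\theta\mapsto p_\theta(\cdot,y)$ is in fact (locally) uniformly continuous in $x$ — or, more cleanly, to strengthen/clarify what hypotheses 1 and 2 jointly give, namely that $(\theta,x)\mapsto p_\theta(x,y)$ is jointly continuous on $\Theta\times X$. If the intended reading of the hypotheses is this joint continuity (which is the charitable and standard interpretation), then the obstacle evaporates and the proof is the three short steps above. A secondary, more minor point is handling the boundary of $X$ when $X$ is a bounded interval rather than all of $\mathbb{R}$: there the integrals are truncated at $\partial X$, but since $\DBase$ (and hence $p_\theta$) has zero mass on $\partial X$, the same formula and the same differentiation argument go through verbatim.
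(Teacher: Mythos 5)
Your proposal takes essentially the same route as the paper's proof: express the decoupled performative risk for threshold classifiers as the two half-line integrals of $p_\theta(\cdot,1)$ and $p_\theta(\cdot,0)$, differentiate in $\theta'$ to obtain $p_\theta(\theta',1)-p_\theta(\theta',0)$, and read off continuity in $\theta'$ from hypothesis 1 and in $\theta$ from hypothesis 2. Your extra concern about upgrading separate continuity to joint continuity goes beyond the paper, whose own proof stops at continuity in each argument separately, so your argument matches (and if anything is slightly more careful than) the published one.
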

 
We believe that these two continuity properties are natural and likely to capture practical settings, given the empirical evidence in Examples \ref{ex:FICO}-\ref{ex:FE}. 
A consequence of aggregate smoothness is that it is sufficient to guarantee the existence of locally stable points. 
\begin{restatable}{theorem}{existencethm}
 \label{lemma:existence}
 Given a base distribution $\DBase$ and function class $\Theta$, for any mapping $M$ that satisfies aggregate smoothness, there exists a locally stable point. 
 \end{restatable}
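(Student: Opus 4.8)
The plan is to reduce the existence of a locally stable point to a fixed-point argument on the decision-maker's best-response correspondence, using aggregate smoothness to supply the regularity that makes a Kakutani-type (or Brouwer-type) argument go through. Concretely, define the set-valued map $B(\theta) := \argmin_{\theta' \in \Theta} \PRDecoupledM{\theta}{\theta'}$, the set of classifiers that are risk-optimal against the distribution $\cD(\theta;M)$ induced by $\theta$. A classifier $\theta$ is performatively stable exactly when $\theta \in B(\theta)$, so it suffices to show $B$ has a fixed point; and since any performatively stable point is locally stable, that gives the theorem. We will verify the hypotheses of Kakutani's fixed-point theorem: $\Theta$ is nonempty, compact, and convex by assumption, so the work is entirely in showing $B(\theta)$ is nonempty, convex-valued (or: reduce to a single-valued continuous selection, which lets us use Brouwer instead and dodge convexity of $B(\theta)$), and has a closed graph.

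First I would establish that $\theta' \mapsto \PRDecoupledM{\theta}{\theta'}$ attains its minimum over the compact set $\Theta$ — this follows once we know it is continuous in $\theta'$, which is given by aggregate smoothness (the derivative in $\theta'$ exists, hence the function is continuous in $\theta'$). So $B(\theta)$ is nonempty for every $\theta$. Next, the closed-graph property: suppose $\theta_n \to \theta$ and $\theta'_n \to \theta'$ with $\theta'_n \in B(\theta_n)$; I want $\theta' \in B(\theta)$. This is where aggregate smoothness does the real work — it gives joint continuity of $\PRDecoupledM{\cdot}{\cdot}$ (continuity of the derivative in both arguments, plus a standard integration/fundamental-theorem-of-calculus argument starting from one base value $\PRDecoupledM{\theta}{\theta_0}$, implies joint continuity of the function itself). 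Then for any competitor $\eta \in \Theta$, $\PRDecoupledM{\theta}{\theta'} = \lim_n \PRDecoupledM{\theta_n}{\theta'_n} \le \lim_n \PRDecoupledM{\theta_n}{\eta} = \PRDecoupledM{\theta}{\eta}$, establishing $\theta' \in B(\theta)$. The closed graph plus compactness of $\Theta$ gives upper hemicontinuity of $B$.

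The remaining issue is the convexity of the values $B(\theta)$, which is not guaranteed because the $0$-$1$ loss makes $\PRDecoupledM{\theta}{\cdot}$ nonconvex in $\theta'$ in general. There are two routes. One is to appeal to Kakutani only after convexifying — e.g. replace $B(\theta)$ by $\overline{\mathrm{conv}}\,B(\theta)$ and argue that a fixed point of the convexified map is still a legitimate stable or locally stable point; this needs care. The cleaner route, and the one I expect the authors take, is to restrict attention to the structured function classes where stability is studied (for instance 1-dimensional threshold classifiers, as in Setup~\ref{ex:stablepointsnotexist}, or more generally linear threshold rules), where the decoupled risk $\PRDecoupledM{\theta}{\theta'}$ can be written as an explicit integral of the induced density against an indicator, so that its derivative in $\theta'$ has at most finitely many sign changes and the best response is essentially unique (or lies in an interval). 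In that regime one gets a single-valued continuous best-response map after breaking ties consistently, and Brouwer's fixed-point theorem applies directly on the compact convex $\Theta$.

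The main obstacle is precisely this nonconvexity of the $0$-$1$-loss objective in $\theta'$: aggregate smoothness controls continuity but says nothing about quasiconvexity, so a naive Kakutani argument does not immediately apply, and the proof must either exploit additional structure of the function class to obtain a continuous single-valued selection, or carefully relax the notion of stability to the "locally stable / stationary point" version — which is exactly why the theorem is stated for \emph{locally} stable points. I would lean on the latter: show that aggregate smoothness makes $\theta' \mapsto \PRDecoupledM{\theta}{\theta'}$ continuously differentiable, so a global minimizer over the compact $\Theta$ is in particular a stationary point, then run the upper-hemicontinuity/fixed-point machinery on the (possibly set-valued) argmin map, convexifying if necessary and checking that convex combinations of stationary-point-achieving classifiers remain within the locally-stable set for the structured classes under consideration.
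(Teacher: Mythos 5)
There is a genuine gap, and it sits exactly where you flag it: the non-convexity of the $0$-$1$ objective. Your primary route---Kakutani applied to the best-response correspondence $B(\theta)=\argmin_{\theta'\in\Theta}\PRDecoupledM{\theta}{\theta'}$---cannot be completed as stated, because $B(\theta)$ has no reason to be convex-valued, and neither of your repairs salvages it: convexifying to $\overline{\mathrm{conv}}\,B(\theta)$ produces fixed points that are convex combinations of global minimizers and need not be local minima or stationary points of $\PRDecoupledM{\theta}{\cdot}$ (so they need not be locally stable), while restricting to one-dimensional threshold classifiers proves a different, weaker statement than the theorem, which is asserted for an arbitrary compact convex $\Theta$ and any $M$ satisfying aggregate smoothness. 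Moreover, aiming for a fixed point of $B$ is aiming for full performative stability, which is strictly more than the theorem claims; the whole point of stating the result for \emph{locally} stable points is that only a stationarity condition needs to be produced.

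The missing idea, which is how the paper argues, is to bypass the argmin correspondence entirely and work with the gradient field that aggregate smoothness hands you. Since $\DPR{\theta}{\theta'}$ exists and is jointly continuous, the diagonal map $\theta\mapsto\DPR{\theta}{\theta}$ is continuous (split $\|\DPR{\theta}{\theta}-\DPR{\theta'}{\theta'}\|$ via the triangle inequality into a perturbation in each argument). Then define the single-valued map $G(\theta)=\text{Proj}_{\Theta}\bigl(\theta+\eta\,\DPR{\theta}{\theta}\bigr)$; it is continuous because the projection onto the compact convex set $\Theta$ is non-expansive, so Brouwer's theorem---not Kakutani---gives a fixed point, and a fixed point of this projected gradient-step map is precisely a stationary point of $\theta'\mapsto\PRDecoupledM{\theta}{\theta'}$ at $\theta'=\theta$ relative to $\Theta$, hence locally stable by definition. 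This requires no convexity or quasiconvexity of the decoupled risk and no structural assumptions on the function class. You gesture at the right ingredient when you note that a minimizer is a stationary point (itself only true modulo boundary issues, which the paper handles by differentiating on an open superset $\Theta'\supset\Theta$), but you then return to the set-valued argmin machinery, which is exactly the step that does not close.
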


In fact, this result implies that stable points exist under deviations from the model, as long as aggregate smoothness is preserved. Our next result shows that under weak assumptions on the base distribution this is the case for any mixture with non-strategic agents. For ease of notation, we formalize such a mixture through the operator $\Phi_p(M)$, where for $p \in [0,1]$, we let $\Phi_p(M(x,y))$ be equal to $t_{\text{NS}}$ with probability $p$ and equal to $M(x, y)$ otherwise. 
 
\begin{restatable}{proposition}{mixtures}
\label{cor:mixtures}
Suppose that the non-performative risk $\mathrm R(\theta) := \mathbb{E}_{(x,y) \in \DBase} \mathds{1}\{f_{\theta}(x) = y\}$ is continuously differentiable for all $\theta\in\Theta$. Then, for any $p \in [0,1]$, aggregate smoothness of a mapping $M$ is preserved under the operator $\Phi_p(M)$ .
\end{restatable}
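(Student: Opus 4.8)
The plan is to directly compute the decoupled performative risk of the mixed mapping $\Phi_p(M)$ and exhibit it as a convex combination of two quantities, each of which has a continuous $\theta'$-derivative. Observe that by linearity of expectation over the randomness in $\Phi_p$, the aggregate response splits as
\[
\cD(\theta; \Phi_p(M)) = p \cdot \DBase + (1-p)\cdot \cD(\theta; M),
\]
since with probability $p$ an agent is assigned type $t_{\text{NS}}$ and contributes $(x,y)\sim\DBase$ unchanged, and with probability $1-p$ it behaves as under $M$. Consequently, for all $\theta,\theta'\in\Theta$,
\[
\PRDecoupledParam{\Phi_p(M)}{\theta}{\theta'}
= p\cdot \mathbb{E}_{(x,y)\sim\DBase}\big[\Indicator\{y\neq f_{\theta'}(x)\}\big]
+ (1-p)\cdot \PRDecoupledParam{M}{\theta}{\theta'}.
\]
The first term is exactly $1 - \mathrm{R}(\theta')$ (it does not depend on $\theta$ at all), and by hypothesis $\mathrm{R}$ is continuously differentiable, so $\partial_{\theta'}$ of the first term exists and is continuous in $(\theta,\theta')$ (trivially in $\theta$, since it is constant in $\theta$). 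The second term has a $\theta'$-derivative that exists and is continuous in $(\theta,\theta')$ by the assumed aggregate smoothness of $M$. A finite sum of functions each having a continuous partial derivative in $\theta'$ has a continuous partial derivative in $\theta'$; hence $\Phi_p(M)$ satisfies aggregate smoothness.

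The only slightly delicate point is justifying the decomposition $\cD(\theta;\Phi_p(M)) = p\DBase + (1-p)\cD(\theta;M)$ at the level of the risk functional, i.e. that we may condition on the Bernoulli($p$) coin and apply the tower property to the indicator expectation. This is routine once one writes $\Phi_p(M)$ out as the formal mixture of response types it is defined to be, but it is worth spelling out because $\Phi_p(M)$ is a randomized mapping and one must be careful that the randomness used to select $t_{\text{NS}}$ versus $M(x,y)$ is independent of the draw $(x,y)\sim\DBase$ — which it is, by construction of the operator. I would also note that no further regularity on $\DBase$ beyond what is standing in the paper (continuity, zero mass on the boundary) is needed: all the smoothness of the non-strategic contribution is packaged into the hypothesis that $\mathrm{R}(\theta)$ is $C^1$, and I would remark that this hypothesis is exactly what rules out, e.g., a base distribution with a point mass on the decision boundary that would reintroduce a discontinuity.

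I expect no real obstacle here — the statement is essentially the observation that aggregate smoothness, being a statement about a derivative of an expectation that is linear in the underlying mixture of response types, is closed under convex combinations with the non-strategic mapping, provided the non-strategic piece is itself smooth (which is the role of the $C^1$ assumption on $\mathrm{R}$). The main thing to get right is the bookkeeping of the mixture identity and making the independence of the coin explicit; everything after that is one line.
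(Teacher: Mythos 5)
Your proposal is correct and follows essentially the same route as the paper's proof: decompose the decoupled performative risk of $\Phi_p(M)$ by linearity into the non-strategic contribution (whose $\theta'$-derivative exists and is continuous by the $C^1$ assumption on $\mathrm{R}$) and the contribution from $M$ (handled by aggregate smoothness), and conclude since a sum of terms with continuous derivatives again has a continuous derivative. Your explicit mixture identity $\cD(\theta;\Phi_p(M)) = p\,\DBase + (1-p)\,\cD(\theta;M)$ and the remark on independence of the Bernoulli coin simply spell out the decomposition the paper invokes implicitly.
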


Proposition \ref{cor:mixtures}, together with Theorem \ref{lemma:existence}, implies the robust existence of locally stable points under mixtures with non-strategic agents, for any microfoundations model that satisfies aggregate smoothness.

Conceptually, our investigations in this section have been inspired by the line of work on performative prediction~\citep{PZMH20,MPZH20} that demonstrated that regularity assumptions on the aggregate response alone can be sufficient to guarantee the existence of stable points for smooth, strongly convex loss functions. However, our results differ from these previous analyses of performative stability in that we instead focus on the 0-1 loss. In Appendix \ref{appendix:lipschitz}, we provide an example to show why the Lipschitzness assumptions  on the distribution map used in prior work are not sufficient to guarantee the existence of stable points in a binary classification setting.

\subsection{Constraint on manipulation expenditure} 
\label{sec:tractability} 
\label{subsec:microassumptions}

While aggregate smoothness focused on the population-level properties of the induced distribution, a model for microfoundations must also be descriptive of realistic agent-level responses in order to yield useful qualitative insights about metrics such as social burden or accuracy on subgroups. A minimal assumption on agent responses is that an agent never expends more on manipulation than the utility of a positive outcome. 
 \begin{property}[Expenditure constraint]
 \label{def:weakmicro}
Given a function class $\Theta$ and a cost function $c$, a mapping $M \in \mathcal{M}$ is \textit{expenditure-constrained} if $c(x, \Response_t(x,\theta))\leq\gamma$ for every $ \theta \in \Theta$ and every $t \in  \text{Image}(M)$. 
 \end{property}

This constraint is implicitly encoded in standard microfoundations and many of its variants. We have previously encountered the expenditure constraint in Section~\ref{sec:misspecifications}, where we showed that if $c$ is a valid cost function, then this property, together with a basic monotonicity requirement on agent's feature manipulations, defines a  set of microfoundations models among which the standard model achieves extreme social burden at optimality. In Section \ref{sec:framework} we will describe on one particular model for microfoundations within this set which results in a \emph{strictly} lower social burden than the standard model.

\paragraph{Reducing the complexity of estimating the distribution map.} Apart from defining a natural class of feasible  microfoundations models, an additional advantage of Property~\ref{def:weakmicro} is that it naturally constrains each agent's range of manipulations. This can significantly reduce the complexity of estimating the distribution map for a decision-maker who wants to compute a strategy robust classifier offline.

Assume the decision-maker follows a two-stage estimation procedure to estimate a performatively optimal point, similar to \citep{MPZ21}. First, they compute an estimate $\tilde M$ of the true mapping $M$ and infer $ \cD(\noargument; \tilde M)$ from the base distribution $\DBase$. Second, they assume the model reflects the true decision dynamics and approximate optimal points as follows:
\begin{align}
\label{eq:PO_M}
\thetaPO(\tilde M) &:=\argmin_{\theta\in\Theta} \mathbb E_{(x,y)\sim\cD(\theta;\tilde M)} \left[\mathds{1}\{y\neq f_\theta(x)\}\right].
\end{align}
Using a naive bound (see Lemma \ref{lemma:TVbound}) it is not difficult to see that it suffices to compute an estimate $\tilde M$ of $M$, such that $\sup_\theta \TV(\cD(\theta;\tilde M),\cD(\theta;M))\le \xi$  to guarantee that $\PR(\thetaPO(M))-\PR(\thetaPO(\tilde M) )\leq 2\xi$. 
However, achieving this level of accuracy fundamentally requires a full specification of the response types for every agent in the population. 

The expenditure constraint helps to make this task more tractable, in that the decision-maker only needs to estimate responses for a small fraction of the agents to achieve the same bound on the suboptimality of the obtained performative risk. To formalize this, let's assume the decision-maker can define a set $\Theta_0\subseteq \Theta$ that contains the performatively optimal classifier $\thetaPO(M)$. Then, given the implied restriction in the search space in \eqref{eq:PO_M}, the expenditure constraint enables us to restrict the set of covariates that are relevant for the optimization problem to 
 \begin{align}
\label{eq:S}
S(\Theta_0,c)&:= \cup_{\theta \in \Theta_0} \{x\in X: \exists x' \in X : f_{\theta}(x') \neq f_{\theta}(x) \wedge c(x, x') \le \gamma\}.
\end{align} The salient part $S(\Theta_0,c) \subseteq X$ captures all agents who are sufficiently close to the decision boundary for some $\theta\in\Theta_0$ so they are able to cross it without expending more than $\gamma$  units of cost. 

The subset $S(\Theta_0,c)$ can be entirely specified by the cost function $c$ and can be much smaller than $X$. To see this, consider the following example setting. 
\begin{example}[Informal]
\label{ex:salient}
Consider Setup \ref{ex:stablepointsnotexist}. Define the cost function to be linear: $c(x,x') = \alpha |x - x'|$ for  some $\alpha>0$. Then, if $M$ satisfies the expenditure constraint and Assumption \ref{assumption:gamingbehavior}, then:  
\[\thetaPO \in \Theta_0 := [\theta_{\text{SL}} - 3/\alpha, \theta_{\text{SL}} + 3/\alpha] \quad S(\Theta_0, c) = \left\{ x \mid \mu(x) \in [\theta_{\text{SL}}- 4/\alpha, \theta_{\text{SL}} + 4/\alpha]\right\},\]
where $\theta_{\text{SL}}$ is such that $\mu(\theta) = 0.5$ (where $\mu$ is defined in Setup \ref{ex:stablepointsnotexist}). 
\end{example}
\noindent Example \ref{ex:salient} demonstrates the salient part consists of agents who are sufficiently close to the supervised learning threshold, where closeness is measured by the cost function. We formalize this example in Appendix~\ref{appendix:construction}.

We now describe the implications of constraining to the salient part for a 1-dimensional setting where $X \subseteq \mathbb{R}$ and $f_\theta$ is a threshold function.\footnote{Proposition \ref{prop:samplecomplexity} directly extends to \textit{posterior threshold functions} \citep{MMDH19}.} Let us define an \textit{agent response oracle} that given $x$ and $\theta$, outputs a draw $x'$ from the response distribution $(\Response_t(x, \theta), y)$ where $(x, y) \sim \DBase$. We show with few calls to the oracle,  the decision-maker can build an sufficiently precise estimate of $M$.

\begin{restatable}{proposition}{oracle}
\label{prop:samplecomplexity} Let $X \subseteq \mathbb{R}$, let $\Theta \subseteq \mathbb{R}$ be the function class of threshold functions. Suppose that $M$ satisfies the expenditure constraint, the distribution map $\DMap(\cdot; M)$ is 1-Lipschitz with respect to TV distance\footnote{That is, for any $\theta, \theta' \in \Theta$, we have that $\TV\left(\DMap(\theta'; M), \DMap(\theta; M)\right) \le \norm{\theta - \theta'}_2$.}, and $\Theta_0\subseteq \Theta:\thetaPO(M)\in \Theta_0$. We further assume that an agent's type does not depend on their label, i.e., $M(x,0)=M(x,1)$ for all $x\in X$. 
Then, with $O\left(\zeta^2 \frac{\ln(1/\epsilon)}{2\epsilon^3}\right)$ calls to the agent response oracle, where $\zeta:= \mathbb{P}_{\DBase}[x \in S(\Theta_0, c)]$, the decision-maker can create an estimate $\tilde{M}$ so that: 
\[\PR(\thetaPO(\tilde M)) \le \PR(\thetaPO(M)) + \epsilon.\] 
with probability $0.9$ for any $\epsilon>0$.\footnote{We note that the decision-maker will only search over $\Theta_0$ in  \eqref{eq:PO_M} when computing $\thetaPO(\tilde{M})$. In particular, they compute $\argmin_{\theta\in\Theta_0} \mathbb E_{(x,y)\sim\cD(\theta;\tilde M)} \left[\mathds{1}\{y\neq f_\theta(x)\}\right]$ rather than $\argmin_{\theta\in\Theta} \mathbb E_{(x,y)\sim\cD(\theta;\tilde M)} \left[\mathds{1}\{y\neq f_\theta(x)\}\right]$.}
\end{restatable}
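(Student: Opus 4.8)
The plan is to reduce the statement to a uniform estimation problem over the restricted search space $\Theta_0$, use the expenditure constraint to localize that problem to the salient part $S(\Theta_0,c)$, and then finish with a discretization-and-concentration argument. First I would record the reduction. Since the decision-maker returns $\thetaPO(\tilde M)=\argmin_{\theta\in\Theta_0}\mathbb{E}_{(x,y)\sim\cD(\theta;\tilde M)}[\mathds{1}\{y\neq f_\theta(x)\}]$ (as in \eqref{eq:PO_M} with the search restricted to $\Theta_0$) and $\thetaPO(M)\in\Theta_0$ by hypothesis, it suffices to construct $\tilde M$ so that
\[
\sup_{\theta\in\Theta_0}\bigl|\PR_M(\theta)-\PR_{\tilde M}(\theta)\bigr|\le\tfrac{\epsilon}{2},\qquad\PR_M(\theta):=\mathbb{E}_{(x,y)\sim\cD(\theta;M)}\bigl[\mathds{1}\{y\neq f_\theta(x)\}\bigr]
\]
(and $\PR_{\tilde M}$ defined analogously). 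The two-line $\argmin$ comparison behind Lemma~\ref{lemma:TVbound} --- valid here because the $0$--$1$ loss is bounded by $1$, so uniform closeness of risks is implied by, and replaces, uniform TV closeness --- then yields $\PR(\thetaPO(\tilde M))\le\PR(\thetaPO(M))+\epsilon$.

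Next I would carry out the key structural step: showing that agents outside $S(\Theta_0,c)$ do not affect $\PR_M$ on $\Theta_0$. Fix $\theta\in\Theta_0$, an agent with true features $x\notin S(\Theta_0,c)$, and its type $t=M(x,y)$. By the definition of $S(\Theta_0,c)$ in \eqref{eq:S}, every $x'$ with $c(x,x')\le\gamma$ has $f_\theta(x')=f_\theta(x)$, and since $M$ is expenditure-constrained we have $c(x,\Response_t(x,\theta))\le\gamma$, hence $f_\theta(\Response_t(x,\theta))=f_\theta(x)$. Therefore, letting $M^\star$ denote the mapping that agrees with $M$ on $S(\Theta_0,c)$ and equals $t_{\text{NS}}$ elsewhere, we get $\PR_{M^\star}\equiv\PR_M$ on $\Theta_0$, so $\thetaPO(M)$ still minimizes $\PR_{M^\star}$ over $\Theta_0$. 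Since the decision-maker knows $\DBase$ --- hence knows $S(\Theta_0,c)$, the mass $\zeta$, and the exact contribution of the $1-\zeta$ fraction of non-salient agents to $\PR_{M^\star}(\theta)$ --- it now suffices to estimate, uniformly over $\theta\in\Theta_0$, the salient-agent misclassification probability $B(\theta):=\mathbb{P}_{(x,y)\sim\DBase}[f_\theta(\Response_{M(x,y)}(x,\theta))\neq y\mid x\in S(\Theta_0,c)]$ to additive accuracy $\epsilon/(2\zeta)$, because once $\tilde M$ is set to $t_{\text{NS}}$ off $S(\Theta_0,c)$ the two risks differ by exactly $\zeta|B(\theta)-\tilde B(\theta)|$.

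Then I would discretize and sample. Using that $\cD(\cdot;M)$ is $1$-Lipschitz in TV distance (which also rules out the boundary atoms that plague standard microfoundations), I would take an $O(\epsilon)$-net $\{\theta_1,\dots,\theta_N\}$ of $\Theta_0$ of size $N=O(1/\epsilon)$ and estimate $B$ only at the net points, the TV-Lipschitzness controlling both the interpolation error and the change in the relevant misclassification functional between net points. At each $\theta_j$ the decision-maker draws $x\sim\DBase$ conditioned on $x\in S(\Theta_0,c)$ --- free, from its knowledge of $\DBase$ --- and calls the agent response oracle at $(x,\theta_j)$; each call is an i.i.d.\ draw of a salient agent's response, so by Hoeffding (together with a uniform-convergence bound over threshold classifiers to handle the evaluation threshold) $O((\zeta/\epsilon)^2\ln(N/\delta))$ calls estimate $B(\theta_j)$ within $\epsilon/(4\zeta)$ with probability $1-\delta/N$; a union bound over the net with $\delta$ a small constant gives the $\ln(1/\epsilon)$ factor and a total of $O(\zeta^2\ln(1/\epsilon)/\epsilon^3)$ oracle calls. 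Finally I would set $\tilde M$ equal to $t_{\text{NS}}$ off $S(\Theta_0,c)$ and, on $S(\Theta_0,c)$, to any finitely supported mixture of response types realizing the sampled responses, which delivers the desired uniform bound on $|\PR_M-\PR_{\tilde M}|$ over $\Theta_0$ and hence the conclusion.

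The part I expect to be the main obstacle is the localization in the second step: it is what makes the sample complexity scale with $\zeta$ rather than with the whole population, and it leans essentially on both the expenditure constraint and the hypothesis $\thetaPO(M)\in\Theta_0$, so one must check carefully that restricting the search to $\Theta_0$ genuinely loses nothing and that the non-salient agents' contribution is truly \emph{known} rather than merely bounded. The secondary technical nuisance is the bookkeeping in the third step: because $\cD(\theta;M)$ may be continuous while its empirical proxy is atomic, closeness must be phrased through the one-dimensional bounded misclassification functional rather than through total variation directly, and it is precisely the combination of the target accuracy $\epsilon/\zeta$ with a $\Theta(1/\epsilon)$-size net that produces the stated $\zeta^2\ln(1/\epsilon)/\epsilon^3$ rate. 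The assumption $M(x,0)=M(x,1)$ is only needed so that an oracle call at a given feature vector $x$ returns a well-defined response distribution.
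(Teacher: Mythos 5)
Your proposal is correct and takes essentially the same route as the paper's proof: the argmin-comparison reduction to uniform risk estimation over $\Theta_0$, localization to the salient part $S(\Theta_0,c)$ via the expenditure constraint (the content of the paper's Lemma~\ref{lem:estdistmapKS}), and an $O(\epsilon)$-net over $\Theta_0$ with per-net-point oracle sampling of salient agents, concentration, and a union bound yielding the $O(\zeta^2\ln(1/\epsilon)/\epsilon^3)$ call count---the only implementation-level difference being that the paper estimates the label-conditional empirical CDFs and invokes a Kolmogorov--Smirnov/DKW bound (microfounding the empirical distributions via Proposition~\ref{prop:microfound}) where you apply Hoeffding directly to the per-threshold misclassification probability. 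The one step you wave through---interpolation between net points, where TV-Lipschitzness of the distribution map controls the change of the induced distribution but not of the moving evaluation threshold---is handled with the same level of informality in the paper's own proof.
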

\noindent
The number of necessary calls to the response function oracle for estimating $M$ decays with $\zeta:=\mathbb{P}_{\DBase}[x \in S(\Theta_0, c)]$. Without any assumption on agent responses we have  $S(\Theta_0,c) = X$ and $\zeta=1$. However, when the decision-maker is able to constrain $S(\Theta_0, c)$ to a small part of the input space by relying on the expenditure constraint, domain knowledge, or stronger assumptions on agent behavior, $\zeta$ and thus the number of oracle calls can be reduced significantly.

The concept of a salient part bears resemblance to the approaches by \citet{ZC21}; \citet{ZCC21}, which directly specify the set of feature changes that an agent may make, rather than implicitly specifying agent actions through a cost function. While these models assume that agents best-respond, our key finding is that constraining agent behavior alone can lessen the empirical burden on the decision-maker.

\section{Microfoundations based on imperfect agents}
\label{sec:framework}

Using the properties established in the previous section as a guide, we propose an alternate model for microfoundations that naturally allows agents to undershoot or overshoot the decision boundary, while complying with aggregate smoothness and expenditure monotonicity. Furthermore, we show that this model, called \textit{noisy response}, leads to strictly smaller social burden than the standard model while retaining analytical tractablility. 

\subsection{Noisy response}
Noisy response captures the idea of an \textit{imperfect agent} who does not perfectly best-respond to the classifier weights. This imperfection can arise from many different sources---including interpretability issues, imperfect control over actions, or opaque access to the classifier. Inspired by \emph{smoothed analysis}~\citep{smoothedanalysis}, we do not directly specify the source of imperfection but instead capture imperfection in an agnostic manner, by adding small random perturbations to the classifier weights targeted by the agents. Since smoothed analysis has been successful in explaining convergence properties of algorithms in practical (instead of worst case) situations, we similarly hope to better capture empirically observed strategic phenomena. 

We define the relevant set of types $T_{\text{noisy}}\subset\AllTypes$ so that each type $t \in T_{\text{noisy}}$ is associated with a noise vector $\eta_t \in \mathbb{R}^m$. An agent of type $t\in T_{\text{noisy}}$ perceives $\theta$ as $\theta + \eta_t$ and responds to the classifier $f_\theta$ as follows: 
\begin{equation}
\Response_t(x, \theta) := \label{eq:RFP} \argmax_{x'\in X'} \; \left[\gamma \cdot f_{\theta + \eta_t}(x') - \costfn(x, x')\right],
\end{equation}
where $c$ denotes a valid cost function, $\gamma>0$ denotes the utility of a positive outcome, and $X' \subseteq \mathbb{R}^d$ is a compact, convex set containing $X$.\footnote{We assume that $c$ is defined on all of $X'\times X'$, and $c(x, x') > \gamma$ for all $x \in X$ and all $x'$ that are on the boundary of $X'$.} For each $(x,y) \in X \times Y$, we model the distribution over noise across all agents with feature, label pair $(x,y)$ as a multivariate Gaussian. To formalize this, we define a \textit{randomized} mapping $M_{\sigma}: X \times Y \rightarrow \AllTypes$ as follows. For each $(x,y)$, the random variable $M_\sigma(x,y)$ is defined so that if $t \sim M_\sigma(x,y)$, then $\eta_t$ is distributed as $\mathcal N (0, \sigma^2 I)$. This model results in the perceived values of $\theta$ across all agents with a given feature, label pair following a Gaussian distribution centered at $\theta$. The noise level $\sigma$ reflects the degree of imperfection in the population.

Conceptually, our model of noisy response bears similarities to models of \textit{incomplete information} \citep{harsanyi68incomplete} that are standard in game theory (but that have not been traditionally considered in the strategic classification literature). However, a crucial difference is that we advocate for modeling agents actions as imperfect even if the classifier is fully transparent, because we believe that imperfection can also arise from other sources. This is supported by the empirical study of \cite{BBK20} discussed in Example \ref{ex:FE} where agents act imperfectly even when the classifier weights are revealed. 

We want to emphasize that we instantiate imperfection by adding noise to the manipulation targets, instead of directly adding noise to the \textit{responses}. While both approaches would mitigate the discontinuities in the aggregate distribution, the approach of adding noise directly to the responses results in a less natural model for agent behavior that violates the expenditure constraint.

\subsection{Aggregate-level properties of noisy response}
\label{sec:limitA}
Intuitively, the noise in the manipulation target of noisy response smooths out the discontinuities of standard microfoundations, eliminating the point mass at the decision boundary and region of zero density below the decision boundary. We show this explicitly in a 1-dimensional setting. 
\begin{restatable}{proposition}{continuous}
\label{thm:continuous}
Let $\Theta \subseteq \mathbb{R}$ be a function class of threshold functions, and suppose also that $X \subseteq \mathbb{R}$. For any $\sigma \in (0, \infty)$, the distribution map $\DMap(\noargument; \Fuzzy{\sigma})$ satisfies the continuity properties in Proposition \ref{prop:1dsmooth}, and thus the mapping $\Fuzzy{\sigma}$ satisfies aggregate smoothness. 
\end{restatable}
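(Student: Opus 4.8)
The plan is to compute the density of $\DMap(\theta;\Fuzzy{\sigma})$ explicitly in the 1-dimensional threshold setting and verify the two continuity conditions of Proposition~\ref{prop:1dsmooth} directly. First I would fix notation: for an agent with true feature $x$, label $y$, and noise $\eta\sim\mathcal N(0,\sigma^2)$, the perceived threshold is $\theta+\eta$, and by \eqref{eq:RFP} the agent best-responds to the threshold classifier $f_{\theta+\eta}=\mathds 1\{\,\cdot\,\ge\theta+\eta\}$. Since the cost function is valid (monotone in distance), the best response is the familiar one: the agent moves up to the perceived boundary $\theta+\eta$ exactly when $x<\theta+\eta$ and the cost $c(x,\theta+\eta)$ of doing so is at most $\gamma$, i.e.\ when $x$ lies in an interval $(\ell(\theta+\eta),\theta+\eta)$ for a suitable cutoff $\ell$; otherwise the agent stays at $x$. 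So the map $x\mapsto\Response_t(x,\theta)$ is piecewise: identity outside the "mover band," and collapse-to-$(\theta+\eta)$ inside it. The key point exploited by the perturbation is that $\eta$ is a continuous random variable, so the target $\theta+\eta$ is smeared out rather than a point mass.

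Next I would write the induced density as a mixture/convolution. Conditioning on $\eta$, an agent at post-response position $x'>\theta+\eta$ can only have come from a non-mover originally at $x'$, while mass landing exactly at $\theta+\eta$ comes from all movers in the band; integrating over $\eta\sim\mathcal N(0,\sigma^2)$ spreads that "exactly at the perceived boundary" mass into a genuine density. Concretely, for each $(x',y)$ the density $p_\theta(x',y)$ of $\DMap(\theta;\Fuzzy\sigma)$ decomposes as (i) a "survivor" term: the base density $p_{\DBase}(x',y)$ times the probability that an agent originally at $x'$ does \emph{not} move, which is $\Pr[\,x'\ge\theta+\eta\ \text{or}\ c(x',\theta+\eta)>\gamma\,]$, a quantity that is smooth in $(x',\theta)$ because it is a Gaussian-probability of a region whose boundary moves continuously; plus (ii) a "piled-up" term: an integral over the set of $\eta$ with $\theta+\eta=x'$—a single value of $\eta$, namely $\eta=x'-\theta$—weighted by the Gaussian density $\tfrac1\sigma\varphi\!\big(\tfrac{x'-\theta}{\sigma}\big)$ and by the total base mass of movers targeting that boundary, $\int_{\ell(x')}^{x'} p_{\DBase}(u,y)\,du$ (using continuity of $\DBase$ and that $\DBase$ has no mass on $\partial X$). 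Both terms are manifestly continuous in $x'$ (condition 1) and in $\theta$ (condition 2): term (i) is continuous because base densities are continuous, the cost function is continuous, and Gaussian CDFs are continuous; term (ii) is continuous because $\tfrac1\sigma\varphi(\cdot/\sigma)$ is smooth, the cutoff $\ell(\cdot)$ inherits continuity from continuity of $c$, and the integral of a continuous integrand over an interval with continuously-varying endpoints is continuous. Having verified both conditions of Proposition~\ref{prop:1dsmooth}, aggregate smoothness of $\Fuzzy\sigma$ follows immediately from that proposition.

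The main obstacle I anticipate is making the decomposition rigorous at the "seams": the cutoff $\ell(\theta+\eta)$ defined by $c(\ell,\theta+\eta)=\gamma$ may fail to be unique or may hit the boundary of $X'$, and one must handle the transition points where the mover band degenerates (e.g.\ where $\ell(\theta+\eta)=\theta+\eta$). The paper's assumptions are tailored to absorb exactly these issues—$c$ is continuous with $c(x,x)=0$ and strictly increasing in distance (so $\ell$ is well-defined and continuous as long as a crossing exists), and $c(x,x')>\gamma$ on $\partial X'$ (so the relevant set never touches the artificial boundary and there is no spurious pile-up there). I would also need to confirm integrability/measurability so that Fubini applies when swapping the $\eta$-integral with the $x$-integral defining the pushforward; this is routine given $\DBase$ is a probability density and the Gaussian is integrable. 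A secondary subtlety is that $\Response_t(x,\theta)$ is only defined up to the $\argmax$, so I should note that for a valid cost function the $\argmax$ in \eqref{eq:RFP} is in fact unique (for a.e.\ $x$), so the pushforward is well-defined and the density computation above is unambiguous. Everything else is a routine continuity-under-the-integral-sign argument.
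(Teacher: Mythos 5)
Your proposal is correct and follows essentially the same route as the paper's proof: you decompose the density of $\DMap(\theta;\Fuzzy{\sigma})$ at $(x',y)$ into the ``survivor'' term $p_{\text{base}}((x',y))\cdot\mathbb{P}[\eta\notin(x'-\theta,u_{x'}-\theta)]$ plus the ``pile-up'' term $p_{\text{noise}}(x'-\theta)\cdot\mathbb{P}_{\DBase}[x\in[l_{x'},x'],\,y]$, exactly as in the paper, and verify continuity in $x'$ and $\theta$ before invoking Proposition~\ref{prop:1dsmooth}. The technical caveats you flag (uniqueness and continuity of the cutoff, boundary of $X'$, measure-zero ties) are absorbed by the validity of the cost function and the assumptions on $\DBase$, just as the paper does.
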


\begin{remark}
This result implies that noisy response inherits the robust existence of stable points under mixtures with non-strategic agents from Theorem~\ref{lemma:existence}. Furthermore, we illustrate in Figure~\ref{fig:oscillation-fuzzy} how noisy response mitigates the large oscillations of repeated retraining that we observed for standard microfoundations. We observe that in the case of $p = 0$, noisy response results in a lower stable point than standard microfoundations.
\end{remark}

\begin{figure*}[t!]
\subfigure[\footnotesize{$\cD(\theta)$ of NR vs. SM}]{
\includegraphics[width = 0.31\textwidth]{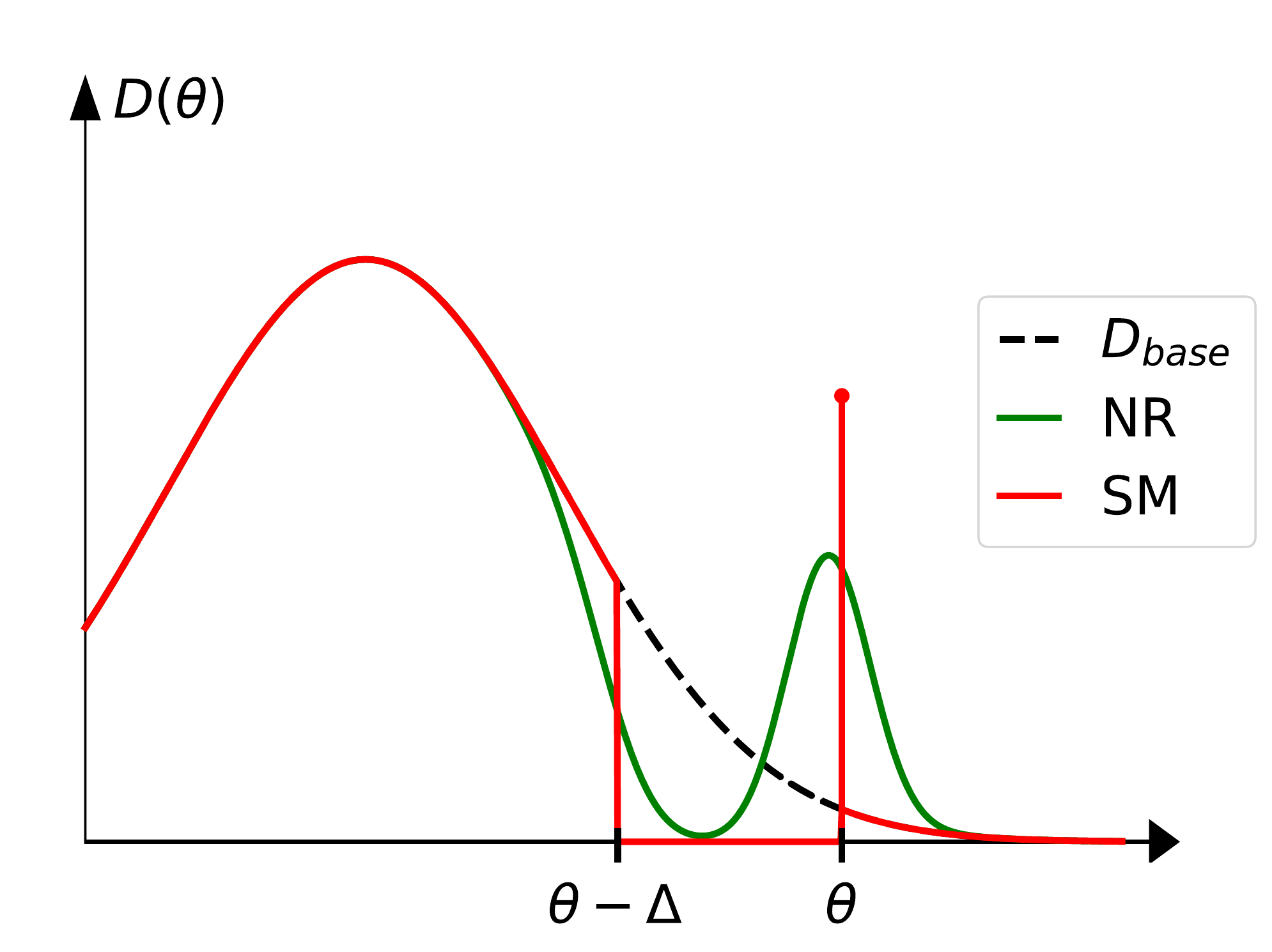}
\label{fig:response}}
\subfigure[\footnotesize{$\cD(\theta)$ of NR for different $\sigma$}]{
\includegraphics[width = 0.31\textwidth]{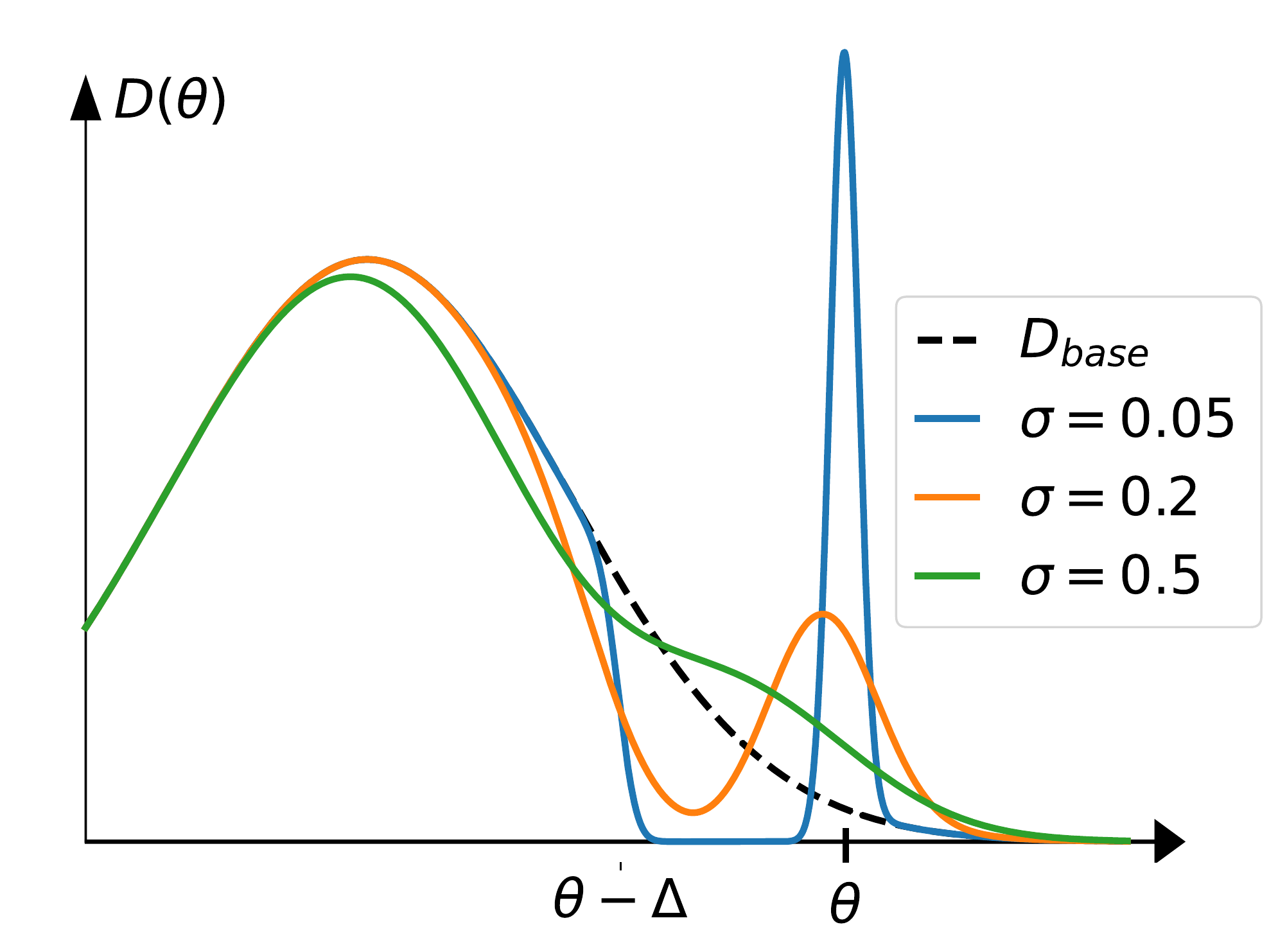}
\label{fig:sig}}
\subfigure[\footnotesize{$\cD(\theta)$ of NR for different $\theta$}]{
\includegraphics[width = 0.31\textwidth]{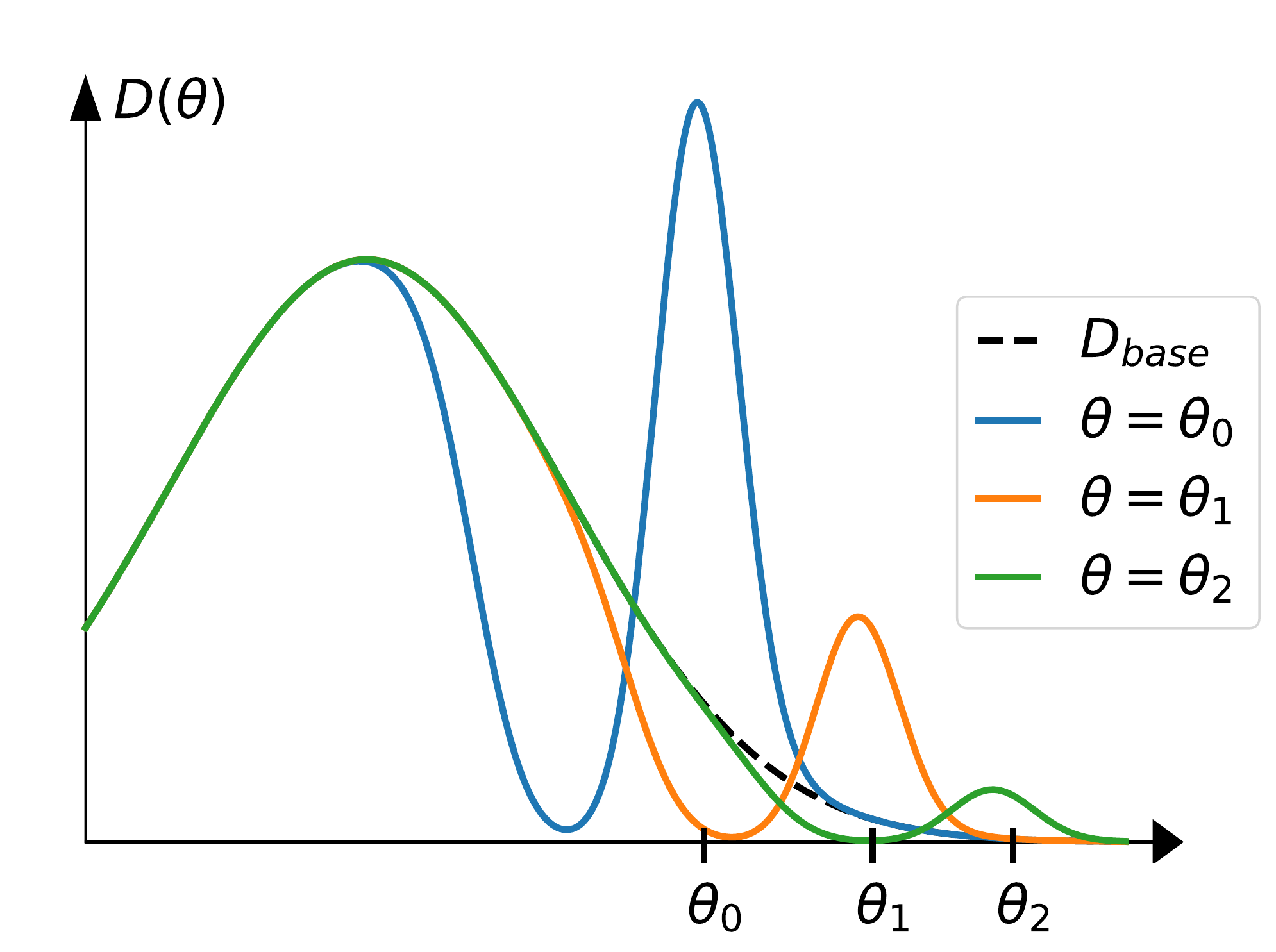}
\label{fig:theta}}
\vspace{-0.2cm}
\caption{Probability density of the aggregate response $\DMap(\theta)$ in a 1d-setting, where the base distribution  $\cD_{X}$ is a Gaussian with $x\sim\mathcal N(0,0.5)$. We illustrate (a) $\DMap(\theta)$ for a population of agents that follow noisy response (NR) compared to standard microfoundations (SM), (b) how $\DMap(\theta)$ of NR changes for different $\theta$, (c) variations in  $\DMap(\theta)$ of NR for different values of $\sigma$.}
\label{fig:pdf}
\end{figure*}

To visualize the aggregate-level properties of noisy response and contrast them with standard microfoundations we depict the respective density functions for a 1-dimensional setting with a Gaussian base distribution in Figure~\ref{fig:response}.  The distribution $\cD(\theta)$ can be bimodal, since agents closer to the threshold $\theta$ are more likely to change their features. The shape of the response distribution changes with $\sigma$ as illustrated in Figure~\ref{fig:sig}. As $\sigma \rightarrow 0$, the aggregate response of a population of noisy response agents approaches that of standard microfoundations. This means that noisy response maintains continuity while also being able to approximate the aggregate response of standard microfoundations to arbitrary accuracy. 
Finally, we note that the distribution map of noisy response changes \textit{continuously} with $\theta$, as visualized in Figure~\ref{fig:theta}. In fact, the distribution map induced by noisy response is Lipschitz in total-variation distance, where the Lipschitz constant grows with $1 / \sigma$.

\begin{restatable}{lemma}{tvlipschitz}
\label{lemma:tvlipschitz}
Given $\sigma\in(0,\infty)$, the distribution map $\DMap(\theta; M_{\sigma})$ is continuous in TV distance and supported on all of $X$. Moreover, the distribution map is Lipschitz in TV distance. That is, for any $\theta, \theta' \in \Theta$, we have that $\TV\left(\DMap(\theta'; \Fuzzy{\sigma}), \DMap(\theta; \Fuzzy{\sigma})\right) \le \frac{1}{2 \sigma} \norm{\theta - \theta'}_2$. 
\end{restatable}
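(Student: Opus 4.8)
The plan is to exploit the special structure of noisy response: for a fixed deployed classifier $f_\theta$, an agent of type $t\in T_{\mathrm{noisy}}$ with noise vector $\eta_t$ responds to $f_\theta$ exactly as a \emph{standard-microfoundations} agent would respond to the shifted classifier $f_{\theta+\eta_t}$, and $\eta_t\sim\mathcal N(0,\sigma^2 I)$ is drawn independently of the agent's feature--label pair. Writing $\DMap_{\mathrm{SM}}(\vartheta)$ for the aggregate response of a fully strategic population to $f_\vartheta$, i.e. the law of $(\argmax_{x'\in X'}[\gamma f_\vartheta(x') - \costfn(x,x')],\, y)$ for $(x,y)\sim\DBase$, this independence yields the mixture (``convolution'') identity
\[
\DMap(\theta; M_{\sigma}) \;=\; \mathbb E_{\eta\sim\mathcal N(0,\sigma^2 I)}\bigl[\DMap_{\mathrm{SM}}(\theta+\eta)\bigr] \;=\; \int \DMap_{\mathrm{SM}}(\vartheta)\; d\mathcal N\!\left(\vartheta;\,\theta,\,\sigma^2 I\right).
\]
The first thing I would nail down is that this identity is well posed: by validity of $\costfn$ (which makes $\costfn(x,\cdot)$ strictly increasing away from $x$) together with the measure-zero decision boundary, the best-response argmax is single-valued for almost every $x$ and $\vartheta$, so $\vartheta\mapsto\DMap_{\mathrm{SM}}(\vartheta)$ is a genuine (measurable) Markov kernel from $\Theta$ to $\Delta(X\times Y)$ and Fubini licenses the interchange. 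This is the routine measure-theoretic bookkeeping that the paper's standing assumptions on $\DBase$, $\Theta$, and $\costfn$ are designed to supply.

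Given the identity, the Lipschitz bound falls out of two standard facts. First, $\DMap(\theta; M_{\sigma})$ and $\DMap(\theta'; M_{\sigma})$ are the images of $\mathcal N(\theta,\sigma^2 I)$ and $\mathcal N(\theta',\sigma^2 I)$ under the \emph{same} channel $\vartheta\mapsto\DMap_{\mathrm{SM}}(\vartheta)$; since pushing two laws through a common Markov kernel can only contract total variation (a one-line Hahn-decomposition argument, i.e. the data-processing property of TV), $\TV\bigl(\DMap(\theta';M_{\sigma}),\DMap(\theta;M_{\sigma})\bigr) \le \TV\bigl(\mathcal N(\theta,\sigma^2 I),\mathcal N(\theta',\sigma^2 I)\bigr)$. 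Second, for two Gaussians with common covariance $\sigma^2 I$ one has $D_{\mathrm{KL}}\bigl(\mathcal N(\theta,\sigma^2 I)\,\Vert\,\mathcal N(\theta',\sigma^2 I)\bigr) = \|\theta-\theta'\|_2^2/(2\sigma^2)$, so Pinsker's inequality gives $\TV \le \sqrt{\tfrac12 D_{\mathrm{KL}}} = \|\theta-\theta'\|_2/(2\sigma)$ (alternatively one can substitute the exact value $2\Phi(\|\theta-\theta'\|_2/(2\sigma))-1$, which is even a hair smaller). Chaining these two inequalities yields the claimed bound, and Lipschitzness immediately gives continuity of $\DMap(\cdot;M_\sigma)$ in TV distance.

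For the ``supported on all of $X$'' claim I would argue via the part of the population that stays put. For any ball $B\subseteq X$ with $\DBase(B\times Y)>0$, there is a positive-$\mathcal N(0,\sigma^2 I)$-probability set of noise realizations for which every agent originally located in $B$ is already accepted by the perceived classifier $f_{\theta+\eta}$ and therefore does not move (in the $1$-dimensional threshold case this is simply the event that $\theta+\eta$ lies below $B$); hence $\DMap(\theta; M_{\sigma})(B\times Y) \ge \DBase(B\times Y)\cdot\inf_{x\in B}\Phi\!\left((x-\theta)/\sigma\right) > 0$, so $\mathrm{supp}\,\DMap(\theta;M_{\sigma}) \supseteq \mathrm{supp}\,\DBase$, which is all of $X$.

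The conceptual heart of the argument is the convolution identity: once one sees that noisy response is literally standard microfoundations averaged against a Gaussian in parameter space, everything else is textbook. Accordingly, the only place where genuine care is needed is the measurability/Fubini justification of that identity (and, for the support statement, checking that the ``sticky'' mass is a positive-probability event rather than a null set); I do not expect the data-processing step or the Gaussian TV estimate to present any difficulty.
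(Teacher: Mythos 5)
Your proposal is correct and follows essentially the same route as the paper: the paper also reduces $\TV\left(\DMap(\theta;M_\sigma),\DMap(\theta';M_\sigma)\right)$ to the TV between the two Gaussian perception laws by pushing the product distribution $\DBase\otimes\mathcal N(\theta,\sigma^2 I)$ through the (common) best-response map and invoking contraction of TV under that map, then bounds the Gaussian TV by $\frac{1}{2\sigma}\norm{\theta-\theta'}_2$ (citing a known Gaussian TV estimate where you use Pinsker, which gives the same constant). Your kernel/mixture phrasing and your explicit positive-mass argument for the support claim (which the paper's lemma proof omits, deferring to the $1$-dimensional density computation in the preceding proposition) are only cosmetic additions to the same argument.
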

\noindent This result highlights a favorable property of noisy response compared to standard microfoundations, in that the performative risk changes smoothly with changes in the classifier weights. 

\begin{remark}[Implications beyond 0-1 loss]
Lemma~\ref{lemma:tvlipschitz} implies that noisy response induces a distribution map that is Lipschitz in Wasserstein distance \textit{for any cost function}, where the Lipschitz constant depends on the diameter of the set $X'$. For smooth and strongly convex loss functions, this readily implies convergence of repeated retraining~\citep{PZMH20, MPZH20}.
\end{remark}

\subsection{Trade-off between imperfection and social burden}\label{sec:limitC}
Apart from satisfying desirable aggregate-level properties, noisy response also satisfies the expenditure monotonicity requirement in Property~\ref{property:SB} (see Proposition \ref{prop:expenditurerationality} for a proof). By Corollary~\ref{coro:socialburdeninformal}, this implies that in Setup \ref{ex:stablepointsnotexist} the social burden of the optimal classifier computed under noisy response is no larger than that of standard microfoundations. That is,  $\SocialBurden{\thetaPO(\Fuzzy{\sigma})} \le \SocialBurden{\thetaPO(M_{\mathrm{SM}})}$. 
In certain cases, we can obtain a stronger result and show that the social burden of noisy response is \emph{strictly} lower than that of standard microfoundations.
\begin{restatable}{corollary}{socialburdenpos}
\label{cor:socialburden}
Consider Setup \ref{ex:stablepointsnotexist}. Let $M_{\mathrm{SM}}$ be the mapping associated with standard microfoundations, let $\sigma \in (0, \infty)$, and let the cost function be of the form $c(x_1, x_2) = |x_1 - x_2|$. Suppose that $[\theta_\mathrm{SL}, \theta_{\mathrm{SL}} + 1] \in \Theta \cap X$, where $\theta_{\mathrm{SL}}$ is defined so that $\mu(\theta_{\mathrm{SL}}) = 0.5$. Then, it holds that:
\[\SocialBurden{\thetaPO(\Fuzzy{\sigma})} < \SocialBurden{\thetaPO(M_{\mathrm{SM}})}.\]
\end{restatable}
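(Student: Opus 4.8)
The plan is to reduce the claim to the strict inequality $\thetaPO(\Fuzzy{\sigma})<\thetaPO(M_{\mathrm{SM}})$ and then invoke strict monotonicity of the social burden. Write $\gamma>0$ for the utility of a positive classification; since $c(x_1,x_2)=|x_1-x_2|$, the hypothesis $[\theta_{\mathrm{SL}},\theta_{\mathrm{SL}}+1]\subseteq\Theta\cap X$ guarantees $[\theta_{\mathrm{SL}},\theta_{\mathrm{SL}}+\gamma]\subseteq\Theta\cap X$ (the stated case is $\gamma=1$), and I use throughout that, by the regularity of $\DBase$ in Setup~\ref{ex:stablepointsnotexist}, $\DBase$ has a feature density $\rho$ positive on a neighborhood of this interval. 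First I would record that for $f_\theta(x)=\mathds{1}\{x\ge\theta\}$ and this cost, $\SocialBurden{\theta}=\mathbb{E}_{\DBase}\big[(\theta-x)_+\mathds{1}\{y=1\}\big]\big/\mathbb{P}_{\DBase}[y=1]$, which is non-decreasing and differentiable in $\theta$ with $\tfrac{d}{d\theta}\SocialBurden{\theta}=\mathbb{P}_{\DBase}[x<\theta,\,y=1]\big/\mathbb{P}_{\DBase}[y=1]$. For $\theta\in(\theta_{\mathrm{SL}},\theta_{\mathrm{SL}}+\gamma)$ this is strictly positive, since $\mathbb{P}_{\DBase}[x<\theta,y=1]\ge\int_{\theta_{\mathrm{SL}}}^{\theta}\mu(s)\rho(s)\,ds>\tfrac12\int_{\theta_{\mathrm{SL}}}^{\theta}\rho(s)\,ds>0$ using $\mu(s)>\tfrac12$ and $\rho>0$ on $(\theta_{\mathrm{SL}},\theta)\subseteq X$. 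Hence $\SocialBurden{\cdot}$ is strictly increasing on $[\theta_{\mathrm{SL}},\theta_{\mathrm{SL}}+\gamma]$, so it suffices to show $\thetaPO(\Fuzzy{\sigma})<\thetaPO(M_{\mathrm{SM}})=\theta_{\mathrm{SL}}+\gamma$.

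Next I would pin down the two optima. Under $M_{\mathrm{SM}}$, an agent at $x$ facing $f_\theta$ moves up to $\theta$ exactly when $\theta-x<\gamma$, so after adaptation it is classified positive iff $x>\theta-\gamma$; thus the performative risk of $f_\theta$ under $M_{\mathrm{SM}}$ equals the non-performative risk of the threshold at $\theta-\gamma$, which (as $\mu$ is increasing with $\mu(\theta_{\mathrm{SL}})=\tfrac12$) is minimized at $\theta-\gamma=\theta_{\mathrm{SL}}$, giving $\thetaPO(M_{\mathrm{SM}})=\theta_{\mathrm{SL}}+\gamma$. For noisy response I would compute, for each $\theta$, the probability $q_\theta(x)$ that an agent with true feature $x$ is classified positive: an agent with noise $\eta\sim\mathcal N(0,\sigma^2)$ best-responds to the perceived threshold $\theta+\eta$ and, when affordable, lands at $\theta+\eta$; under the true threshold $\theta$ this yields a positive outcome only if $\eta\ge0$ and the move is affordable, i.e.\ $\eta<\gamma-(\theta-x)$ (the assumption $c>\gamma$ on $\partial X'$ keeps all such targets inside $X'$, which also handles the Gaussian tails). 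This gives
\[
q_\theta(x)=\mathds{1}\{x\ge\theta\}+\Big(\Phi\!\big(\tfrac{\gamma-(\theta-x)}{\sigma}\big)-\tfrac12\Big)\mathds{1}\{\theta-\gamma<x<\theta\},
\]
with $\Phi$ the standard normal cdf; since the noise is independent of the label by construction of $\Fuzzy{\sigma}$, the performative risk of $f_\theta$ under $\Fuzzy{\sigma}$ equals $\mathbb{E}_{\DBase}[\mu(x)]+g(\theta)$ with $g(\theta):=\int q_\theta(x)(1-2\mu(x))\rho(x)\,dx$.

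I would then differentiate $g$ in $\theta$ — legitimate because Proposition~\ref{thm:continuous} gives that $\cD(\cdot;\Fuzzy{\sigma})$ has density continuous in $x$, so Leibniz's rule applies — obtaining
\[
g'(\theta)=\Big(\Phi(\gamma/\sigma)-\tfrac32\Big)\big(1-2\mu(\theta)\big)\rho(\theta)-\frac1\sigma\int_{\theta-\gamma}^{\theta}\phi\!\Big(\tfrac{\gamma+x-\theta}{\sigma}\Big)\big(1-2\mu(x)\big)\rho(x)\,dx .
\]
Since $\Phi(\gamma/\sigma)<1<\tfrac32$, for every $\theta\ge\theta_{\mathrm{SL}}+\gamma$ the band $(\theta-\gamma,\theta)$ lies strictly above $\theta_{\mathrm{SL}}$, so $1-2\mu(x)<0$ there and $1-2\mu(\theta)<0$; hence both summands are $\ge0$, and the integral summand is strictly positive at $\theta=\theta_{\mathrm{SL}}+\gamma$ (as $\rho>0$ on $(\theta_{\mathrm{SL}},\theta_{\mathrm{SL}}+\gamma)$). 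Thus $g$ is non-decreasing on $[\theta_{\mathrm{SL}}+\gamma,\infty)\cap\Theta$ and $g'(\theta_{\mathrm{SL}}+\gamma)>0$, so — using $\theta_{\mathrm{SL}}+\gamma-\delta\in\Theta$ for small $\delta>0$ — no point of $[\theta_{\mathrm{SL}}+\gamma,\infty)$ can be performatively optimal under $\Fuzzy{\sigma}$, giving $\thetaPO(\Fuzzy{\sigma})<\theta_{\mathrm{SL}}+\gamma=\thetaPO(M_{\mathrm{SM}})$. (Alternatively, $\thetaPO(\Fuzzy{\sigma})\le\thetaPO(M_{\mathrm{SM}})$ follows directly from Corollary~\ref{coro:socialburdeninformal} via Proposition~\ref{prop:expenditurerationality}, after which only $g'(\thetaPO(M_{\mathrm{SM}}))>0$ is needed.) Combining with the first paragraph yields $\SocialBurden{\thetaPO(\Fuzzy{\sigma})}<\SocialBurden{\thetaPO(M_{\mathrm{SM}})}$.

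The step I expect to be the main obstacle is the computation of $q_\theta$: one must reason carefully through the best response of a perturbed agent and, crucially, observe that such an agent crosses the \emph{true} decision boundary only when its noise realization is non-negative — this is the source of the factor $\Phi(\cdot)-\tfrac12$, bounded by $\tfrac12$ — and one must verify the Gaussian tails where the perceived target would leave $X'$. Once $q_\theta$ is available the remaining steps are routine: Leibniz differentiation (justified by aggregate smoothness), an elementary sign analysis of $g'$, and the standing regularity of $\DBase$ (continuity plus positive density near $[\theta_{\mathrm{SL}},\theta_{\mathrm{SL}}+\gamma]$), which is what upgrades the weak monotonicity of $g$ and of $\SocialBurden{\cdot}$ to the strict inequalities.
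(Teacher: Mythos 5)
Your proof is correct and follows essentially the same route as the paper's: identify $\thetaPO(M_{\mathrm{SM}})=\theta_{\mathrm{SL}}+\gamma$, characterize which noisy-response agents end up positively classified (your $q_\theta$ is exactly the paper's case analysis, and your $g'(\theta)$ is algebraically identical to the paper's derivative of the performative risk), and conclude strict threshold inequality from the strictly positive derivative at $\theta_{\mathrm{SL}}+\gamma$ together with monotonicity of the social burden. The only differences are cosmetic refinements — you make the strict monotonicity of $\SocialBurden{\cdot}$ and the positive-density assumption explicit, and your main route argues monotonicity of $g$ on the whole ray rather than invoking the weak inequality from Proposition~\ref{prop:expenditurerationality}, which you note as an alternative just as the paper does.
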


\begin{figure*}[t!]
\centering
\subfigure[\footnotesize{Optimal points}]{
\includegraphics[height = 0.245\textwidth]{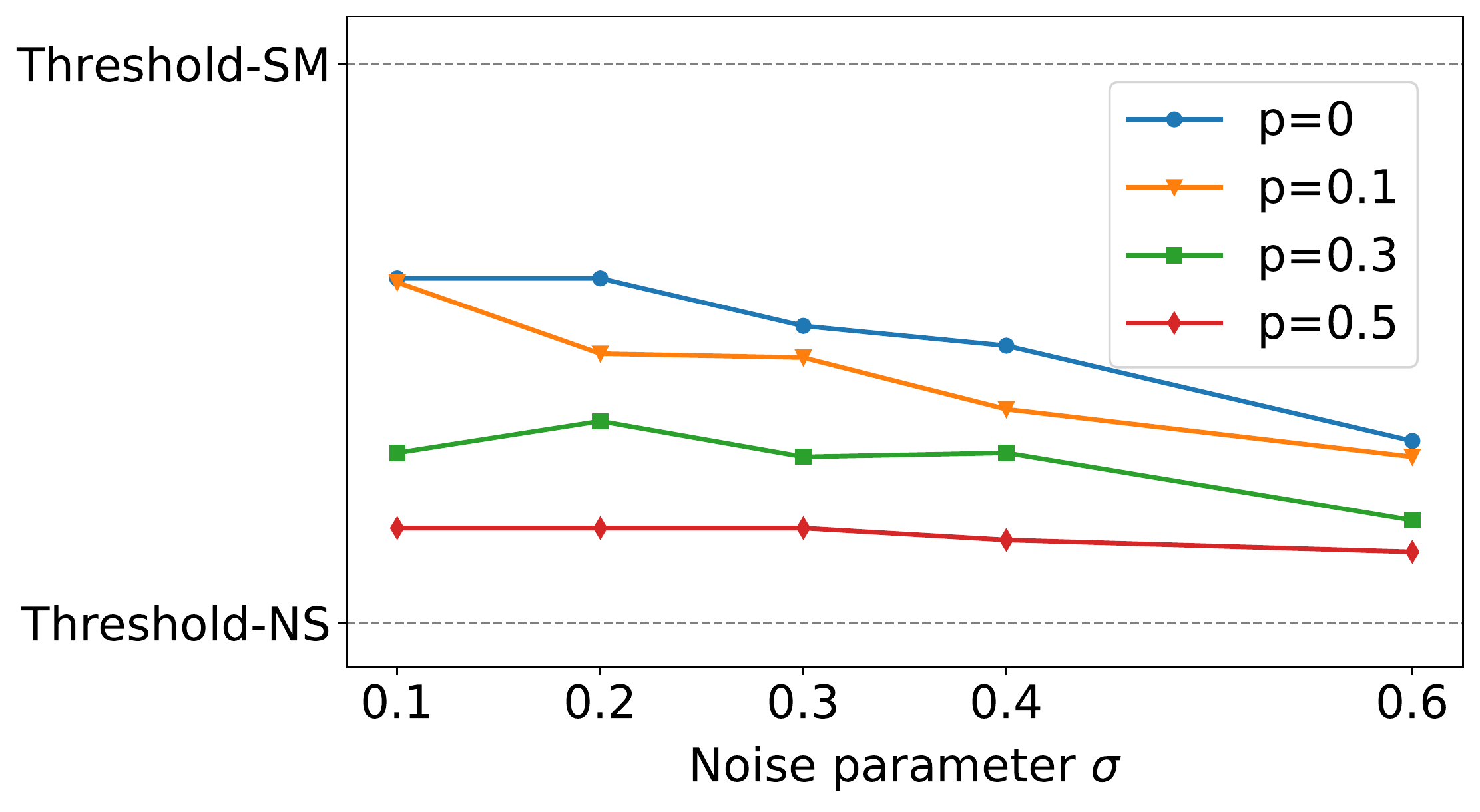}
\label{fig:thresholds}}\hspace{0.5cm}
\subfigure[\footnotesize{Social burden of optimal points}]{
\includegraphics[height = 0.245\textwidth]{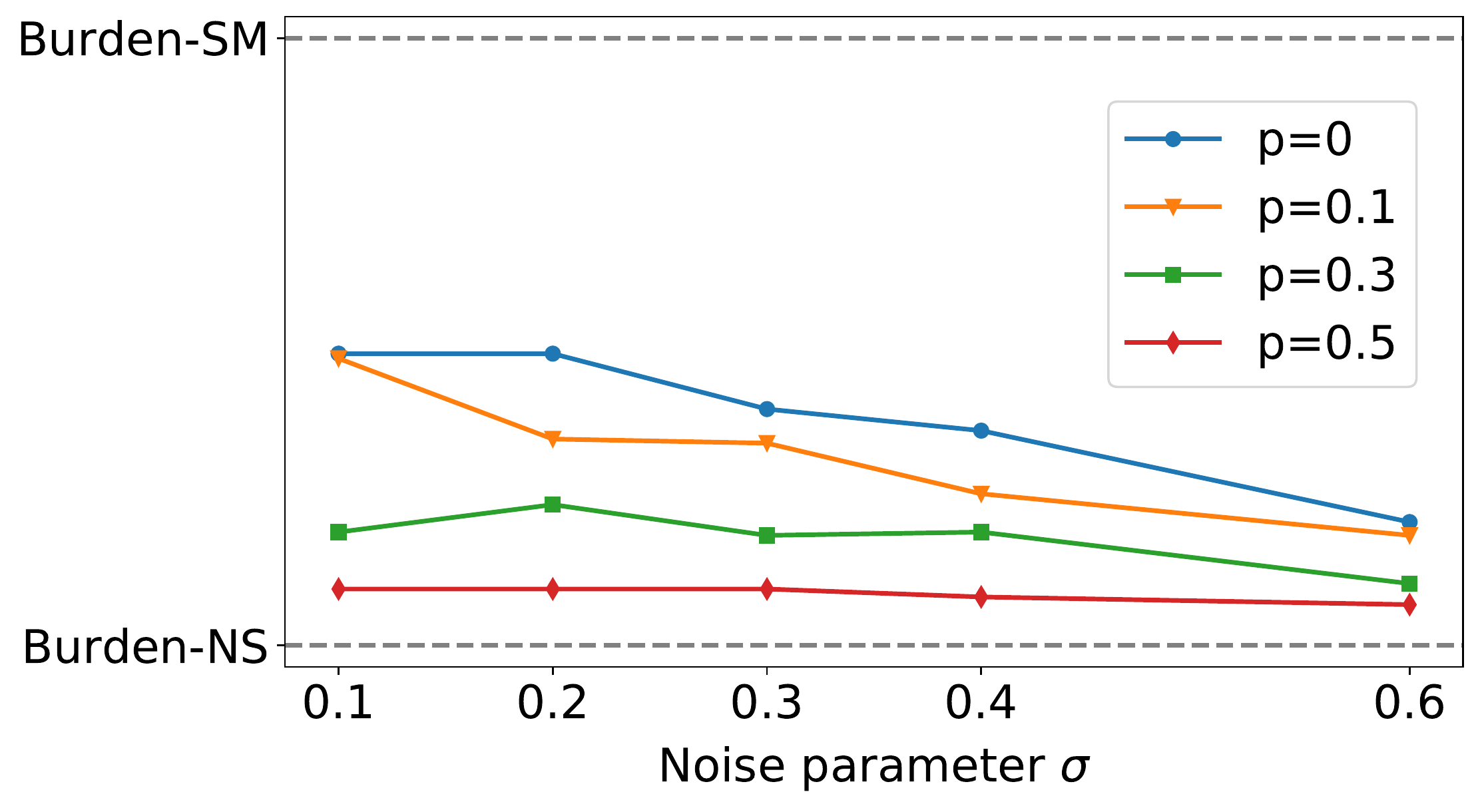}
\label{SB}}
\vspace{-0.2cm}
\caption{Optimal points in a 1d-setting evaluated for different values of $\sigma$ (noise in the response model) and $p$ (fraction of non-strategic agents). The population consists of $10^{5}$ individuals. Half of the individuals are sampled from $x\sim\mathcal N\big(1,\tfrac 1 3\big)$ with true label $1$ and the other half is sampled from $x\sim\mathcal N\big(0,\tfrac 1 3\big)$ with true label $0$. }
\label{fig:socialburdenthresholds}
\end{figure*}

In fact, the social burden for fuzzy perception can be well below the social burden of standard microfoundations. To demonstrate this we visualize the threshold and social burden across a variety of different values of the noise parameter $\sigma$ and the fraction of non-strategic agents $p$ in Figure \ref{fig:socialburdenthresholds}.  The dashed lines indicate the respective reference values for standard microfoundations (SM) and for a population of non-strategic agents (NS). We observe that the threshold as well as the social burden decrease with the fraction $p$ of non-strategic agents in the population. Furthermore, if every agent follows noisy response ($p = 0$), the threshold and the social burden are decreasing with $\sigma$ (and hence the degree of imperfection in agents response). Overall, the acceptance threshold and the social burden of optimal classifiers derived under our new microfoundations is significantly lower than for standard microfoundations.

\subsection{Approximating the distribution map for noisy response}\label{subsec:practical}

Microfoundations are useful in practice to model and anticipate how the population responds to a deployed classifier. This tool is particularly powerful if the decision-maker can estimate agent behavior without needing to expose the population to potentially inaccurate and harmful classifiers to explore and learn about agents responses.

An appealing aspect of assuming a parameterized model of the agent responses is that the complex task of learning agent behavior is reduced to a parameter estimation problem. For noisy response, the aggregate response $\DMap(\theta)$ is parameterized by the variance $\sigma$ of the target noise. In practice such parameters of individual responses can often be estimated via \textit{individual experiments}, i.e., by gathering information about individuals without ever deploying a classifier. For example, the decision-maker can randomly sample agents in the population, ask survey questions to learn about their perceptions of the deployed classifier, and infer the variance $\sigma$ from these samples. We refer to \citep{BBK20} for an actual field experiment that shows an example procedure for how to similarly obtain a reasonable estimate of the cost function $c$.
We now show that an error in parameter estimation can then be translated into an error in the aggregate response.

\begin{restatable}{lemma}{tvboundfuzzy}
\label{lemma:tvboundfuzzy}
Given a population of agents following noisy response with parameter $\sigma\in(0,\infty)$, and an estimate $\tilde{\sigma}$ of $\sigma$, it holds that $\TV\big(\DMap(\theta; \Fuzzy{\sigma}), \DMap(\theta; \Fuzzy{\tilde{\sigma}})\big) \le \frac 1 2 \sqrt{\frac {|\sigma^2 - \tilde{\sigma}^2| m}{\min(\sigma^2, \tilde{\sigma}^2)}}$ where $m$ is the dimension of the feature space. 
\end{restatable}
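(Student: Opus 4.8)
The plan is to reduce the claim to a one-dimensional estimate on a single Gaussian manipulation target, since the noisy-response mapping couples each agent's perception of $\theta$ with an independent draw $\eta_t \sim \mathcal N(0,\sigma^2 I)$, and the induced aggregate response $\DMap(\theta;\Fuzzy{\sigma})$ is obtained by pushing the base distribution $\DBase$ (held fixed) through the response function $\Response_t(x,\theta)$ in~\eqref{eq:RFP}, where the only stochastic ingredient that differs between $\Fuzzy{\sigma}$ and $\Fuzzy{\tilde\sigma}$ is the noise law. First I would use the data-processing inequality: since $(\Response_t(x,\theta),y)$ is a (measurable, deterministic given $\eta_t$) function of $(x,y,\eta_t)$ with $(x,y)\sim\DBase$ and $\eta_t$ drawn from the noise law, we have
\[
\TV\big(\DMap(\theta;\Fuzzy{\sigma}),\DMap(\theta;\Fuzzy{\tilde\sigma})\big)\ \le\ \TV\big(\mathcal N(0,\sigma^2 I_m),\mathcal N(0,\tilde\sigma^2 I_m)\big),
\]
because the joint law of $(x,y)$ is common to both and only the $\eta$-marginal changes. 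This is the conceptual crux: it isolates the problem from the geometry of $c$, $X$, and $f_\theta$ entirely.

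Next I would bound the total variation between the two centered isotropic Gaussians. The standard route is via Pinsker's inequality, $\TV(P,Q)\le \sqrt{\tfrac12\,\KL(P\,\|\,Q)}$, applied with $P=\mathcal N(0,\min(\sigma^2,\tilde\sigma^2)I_m)$ the \emph{narrower} of the two and $Q$ the wider one, so that the KL is finite and, by tensorization, equals $m$ times the one-dimensional KL between $\mathcal N(0,a)$ and $\mathcal N(0,b)$ with $a=\min(\sigma^2,\tilde\sigma^2)$, $b=\max(\sigma^2,\tilde\sigma^2)$. That one-dimensional KL is $\tfrac12\big(\tfrac{a}{b}-1-\ln\tfrac{a}{b}\big)$. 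Writing $r = a/b \in (0,1]$ and using the elementary inequality $-\ln r \le \tfrac{1}{r}-1$ (equivalently $\ln r \ge 1 - 1/r$), one gets $\tfrac12(r - 1 - \ln r) \le \tfrac12(r-1) + \tfrac12(\tfrac1r - 1) = \tfrac{(1-r)^2}{2r}$; since $1-r = (b-a)/b = |\sigma^2-\tilde\sigma^2|/\max(\sigma^2,\tilde\sigma^2)$ and $r = a/b$, this is $\tfrac{(b-a)^2}{2ab} \le \tfrac{(b-a)}{2a} = \tfrac{|\sigma^2-\tilde\sigma^2|}{2\min(\sigma^2,\tilde\sigma^2)}$ (using $b-a\le b$ would be too lossy; instead keep $(b-a)^2/(2ab)\le (b-a)/(2a)$ directly since $b-a\le b$). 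Hence $\KL(P\|Q) \le \tfrac{m\,|\sigma^2-\tilde\sigma^2|}{2\min(\sigma^2,\tilde\sigma^2)}$, and Pinsker gives
\[
\TV\big(\mathcal N(0,\sigma^2 I_m),\mathcal N(0,\tilde\sigma^2 I_m)\big)\ \le\ \sqrt{\tfrac12 \cdot \tfrac{m\,|\sigma^2-\tilde\sigma^2|}{2\min(\sigma^2,\tilde\sigma^2)}}\ =\ \tfrac12\sqrt{\tfrac{|\sigma^2-\tilde\sigma^2|\,m}{\min(\sigma^2,\tilde\sigma^2)}},
\]
which is exactly the claimed bound. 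Combining with the data-processing step completes the proof.

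The step I expect to require the most care is the first one: verifying that the data-processing inequality applies cleanly, i.e., that $\DMap(\theta;\Fuzzy{\sigma})$ really is the pushforward of $\DBase \otimes \mathcal N(0,\sigma^2 I_m)$ under one fixed measurable map $(x,y,\eta)\mapsto(\argmax_{x'\in X'}[\gamma f_{\theta+\eta}(x') - c(x,x')],\,y)$ that does not depend on $\sigma$. This needs the observation that the argmax in~\eqref{eq:RFP} is well-defined and measurable (the footnote conditions on $c$ and $X'$ ensure a unique, interior maximizer for a.e.\ $(x,\eta)$ since $c$ is a valid cost and strictly increasing in distance), after which the coupling argument is immediate. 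The Gaussian KL computation and the inequality $-\ln r\le 1/r - 1$ are routine, and the choice to put the narrower Gaussian in the first slot of the KL is what keeps the divergence finite and yields the $\min(\sigma^2,\tilde\sigma^2)$ in the denominator.
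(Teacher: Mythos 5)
Your argument is correct and takes essentially the same route as the paper: the decisive step in both is the data-processing reduction, realizing $\DMap(\theta;\Fuzzy{\sigma})$ and $\DMap(\theta;\Fuzzy{\tilde\sigma})$ as pushforwards of the product distributions $\DBase\otimes\mathcal N(0,\sigma^2 I)$ and $\DBase\otimes\mathcal N(0,\tilde\sigma^2 I)$ under a single fixed response map, so the total variation is controlled by $\TV\big(\mathcal N(0,\sigma^2 I),\mathcal N(0,\tilde\sigma^2 I)\big)$. The only difference is that the paper cites a standard bound on the total variation between isotropic Gaussians, whereas you derive it inline via Pinsker's inequality and the Gaussian Kullback--Leibler formula --- a correct, self-contained substitute (one small quibble: the KL divergence between two nondegenerate Gaussians is finite in either direction, so the ordering of the arguments is not needed for finiteness; it simply makes the elementary bound yielding the $\min(\sigma^2,\tilde\sigma^2)$ denominator most direct).
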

Combining Lemma \ref{lemma:tvboundfuzzy} with a bound on the performative risk, we obtain the following robustness guarantee of the performative risk to estimation errors in the noise parameter $\sigma$. 
\begin{restatable}{corollary}{fuzzyTV}
\label{cor:fuzzyTV}
Let the population be the aggregate of agents following noisy response with parameter $\sigma\in(0,\infty)$, and let $\tilde{\sigma} \in (0,\infty)$ be an estimate of the perception parameter ${\sigma}$. Then the suboptimality of the estimated performative risk of $\thetaPO( M_{\tilde \sigma})$ on the true population represented by $M_\sigma$ is bounded by: 
\[\PR(\thetaPO({M}_{\tilde \sigma}))- \PR(\thetaPO(M_\sigma))\leq \sqrt{\frac {|\sigma^2 - \tilde{\sigma}^2| m}{\min(\sigma^2, \tilde{\sigma}^2)}}. \] 
\end{restatable}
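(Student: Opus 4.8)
The plan is to obtain the result as a direct combination of the total-variation estimate in Lemma~\ref{lemma:tvboundfuzzy} with the generic robustness bound alluded to earlier (Lemma~\ref{lemma:TVbound}), which controls the suboptimality of the performative risk in terms of the discrepancy between two distribution maps. Concretely, set $\xi := \tfrac12\sqrt{|\sigma^2-\tilde{\sigma}^2|\,m\,/\,\min(\sigma^2,\tilde{\sigma}^2)}$. By Lemma~\ref{lemma:tvboundfuzzy}, the bound $\TV\big(\DMap(\theta; M_\sigma),\DMap(\theta; M_{\tilde\sigma})\big)\le\xi$ holds for every $\theta\in\Theta$ with $\xi$ independent of $\theta$, hence $\sup_{\theta\in\Theta}\TV\big(\DMap(\theta; M_\sigma),\DMap(\theta; M_{\tilde\sigma})\big)\le\xi$.

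Next I would transfer this into a uniform bound on the performative risk itself. Writing $\PR_\sigma(\theta):=\mathbb{E}_{(x,y)\sim\DMap(\theta;M_\sigma)}[\mathds{1}\{y\neq f_\theta(x)\}]$ and $\PR_{\tilde\sigma}(\theta)$ analogously, the integrand $\mathds{1}\{y\neq f_\theta(x)\}$ takes values in $[0,1]$, so the difference of its expectations under two distributions is at most their total-variation distance. This gives the pointwise estimate $|\PR_\sigma(\theta)-\PR_{\tilde\sigma}(\theta)|\le\xi$ for all $\theta\in\Theta$. Then the standard perturbation-of-argmin argument applies: let $\theta^\star:=\thetaPO(M_\sigma)$ and $\tilde\theta:=\thetaPO(M_{\tilde\sigma})$, and chain the pointwise bound at $\tilde\theta$, the optimality of $\tilde\theta$ for $\PR_{\tilde\sigma}$, and the pointwise bound at $\theta^\star$:
\[
\PR_\sigma(\tilde\theta)\;\le\;\PR_{\tilde\sigma}(\tilde\theta)+\xi\;\le\;\PR_{\tilde\sigma}(\theta^\star)+\xi\;\le\;\PR_\sigma(\theta^\star)+2\xi,
\]
which is exactly $\PR(\thetaPO(M_{\tilde\sigma}))-\PR(\thetaPO(M_\sigma))\le 2\xi=\sqrt{|\sigma^2-\tilde{\sigma}^2|\,m\,/\,\min(\sigma^2,\tilde{\sigma}^2)}$.

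There is no genuine obstacle in this corollary: essentially all the analytic work — bounding the TV distance between the two Gaussian-perturbed aggregate responses $\DMap(\theta;M_\sigma)$ and $\DMap(\theta;M_{\tilde\sigma})$ — lives in the proof of Lemma~\ref{lemma:tvboundfuzzy}, which we may invoke. The only two points that require care are (i) that the TV bound of Lemma~\ref{lemma:tvboundfuzzy} is \emph{uniform} in $\theta$, so that the same $\xi$ serves simultaneously at $\tilde\theta$ and $\theta^\star$, and (ii) constant bookkeeping: the factor $2$ picked up in the argmin-perturbation step precisely cancels the $\tfrac12$ in the statement of Lemma~\ref{lemma:tvboundfuzzy}, yielding the clean bound with no leading constant.
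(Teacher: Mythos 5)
Your proposal is correct and follows essentially the same route as the paper: the paper's proof likewise combines the uniform-in-$\theta$ TV bound of Lemma~\ref{lemma:tvboundfuzzy} with the generic argmin-perturbation bound of Lemma~\ref{lemma:TVbound}, whose proof is exactly your chain $\PR_\sigma(\tilde\theta)\le\PR_{\tilde\sigma}(\tilde\theta)+\xi\le\PR_{\tilde\sigma}(\theta^\star)+\xi\le\PR_\sigma(\theta^\star)+2\xi$ using that the 0-1 loss is bounded in $[0,1]$. The constant bookkeeping (the factor $2$ cancelling the $\tfrac12$) matches the paper as well.
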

\noindent 
Hence, if the true distribution map is sufficiently close to some parameterization of noisy response, estimating the noise parameter $\sigma$ provides a robust procedure to infer an estimate of performative optima in practice.

Overall, noisy response offers a more descriptive and prescriptive model of agent behavior compared to standard microfoundations, and still maintains analytical tractability. While we have focused on Gaussian noise in the perception function throughout this work, the outlined benefits of noisy response also apply to other \textit{parameterized} noise distributions, as long as they are sufficiently smooth and continuous on all of $\mathbb{R}^m$. Hence, depending on the application, the decision-maker might prefer to pick a different noise model that can better capture the expected particularities of agents imperfections. The inference procedure via individual experimentation can then be adapted to obtain performative risk estimates that depend on the parameters of the noise distribution.

\section{Discussion}

To anticipate the strategic behavior of a population in response to a classifier, the decision-maker needs to understand and model the distribution shifts induced by decision rules. Traditional approaches for modeling these distribution shifts are either purely individual-level or purely population-level:  \emph{strategic classification} typically builds on standard microfoundations as a specification of individual-level behavior to deduce aggregate-level responses, whereas \textit{performative prediction} directly works with a population-level characterization of distribution shifts. 

In this work, we provided a fresh perspective on microfoundations for strategic classification that is inspired by performative prediction. 
While microfoundations can provide valuable structure to distribution shifts, we illustrated that the standard model for agent responses can lead to degenerate behavior in aggregate. These unanticipated degeneracies motivated us to consider population-level properties, rather than only individual-level properties, when searching through the space of alternative models for agent responses. This approach helped us identify noisy response as a promising candidate model for reasoning about strategic behavior in response to a binary decision rule.

While we have focused on strategic classification in this work, we believe that striving for a synergy between individual-level and aggregate-level considerations of agent behavior can more broadly lead to interesting insights about algorithmic decision-making in a social context.  

\section*{Acknowledgments}
We would like to thank Jacob Steinhardt and Tijana Zrnic for feedback on this manuscript. MJ acknowledges support from the Paul and Daisy Soros Fellowship. CMD acknowledges support from the Swiss National Science Foundation Postdoc.Mobility fellowship program. MH acknowledges support from NSF Award 1750555.

\bibliography{bib}
\bibliographystyle{plainnat}

\appendix
\onecolumn

\section*{Appendix}

\noindent For all of the proofs in the Appendix, we assume WLOG that the utility of a positive outcomes $\gamma$ is equal to $1$.

\section{Additional discussion of assumptions}

\subsection{Cost function} 
Let us discuss some context and implications of Assumption \ref{assumption:validcost} defining a valid cost function. Unlike prior work \citep{MMDH19, MMH20, BG20}, we model a nonzero cost for \textit{all} modifications to features, regardless of whether these modifications are in the right direction. In the spirit of generalizing beyond standard microfoundations, this accounts for how agents may erroneously expend effort on changing their features in an incorrect direction, as empirically demonstrated in Example \ref{ex:FE}. We further note that the definition of a valid cost function does not require symmetry in the arguments, which differentiates it from a metric.

\subsection{Measurability requirements for alternative microfoundations}\label{subsec:requirements}
We now describe the measurability requirements that we need in order to define and work with maps $M \in \cM$. If we ignore measurability requirements for a moment, then notice that each map $M \in \cM$ can be associated with a distribution $\DTrue \in \Delta(\mathcal{T} \times X \times Y)$ given by $(M(x,y), x, y)$. Defining measurability requirements on $\DTrue$ gives an implicit specification of requirements on $M$. First, we define the probability space: Consider the sample space $\mathcal{T} \times X \times Y$. We can define a sigma algebra $\mathcal{F}$ over $\Omega$ by viewing $\mathcal{T}$ as the set of functions $X \times \Theta \rightarrow X$, and using that $X \subseteq \mathbb{R}^d, \Theta \subseteq \mathbb{R}^d$. The probability measure can then be given by $\DTrue$. 

Since $\text{image}(M) =\text{supp}(\DTrue)$ contains a very small fraction of the sample space $\mathcal{T} \times X \times Y$, we can work with a much smaller probability space in this context. This probability space is defined as follows: the sample space is $\text{supp}(\DTrue) \in \mathcal{F}$ (i.e. a subset of $\mathcal{T} \times X \times Y$ in the sigma-algebra), and the sigma-algebra is intersections of every set in $\mathcal{F}$ with $\text{supp}(\DTrue)$. The probability measure given by $\DTrue$ can be defined over this smaller probability space. 

The distribution map $\DMap$ can thus be viewed as random variables over this probability space. In particular, $\DMap(\theta)$ is the distribution of the random variable $(\Response_t(x, \theta), y)$. In order for this random variable to be well-defined, we place the following measurability assumption. 
\begin{assumption}[Measurability requirement on $\Response$]
\label{assumption:measurability}
 We require that for each $\theta \in \Theta$ the function $F_\theta: \text{supp}(\DTrue) \rightarrow X \times Y$ given by $F_{\theta}(t, x, y) = (\Response_t(x, \theta), y)$ is measurable. 
\end{assumption}

\subsection{Assumption on gaming behavior}

In Section~\ref{subsec:microassumptions} we make the assumption that agent cannot have differing types solely on the basis of their true label. In other words, the map $M$ cannot take into account the true label.
\begin{assumption}
\label{assumption:gamingbehavior}
For a map $M \in \cM$, we require that $M(x, 0) = M(x,1)$ for all $x \in X$.
\end{assumption}
\noindent  Assumption \ref{assumption:gamingbehavior} means that agents with features $x$ who have true label $0$ versus true label $1$ have identical distributions over response types in aggregate. We need this assumption to reason about performatively optimal points, because a decision-maker has no access to the true labels beyond agents' reported features when anticipating strategic behavior. 
\subsection{Compactness of $X$}
This assumption guarantees that the behavior of agents who follow standard microfoundations is well-defined. 
\begin{fact}
\label{fact:existencewelldefined}
Suppose that $c$ is a valid cost function, and $X$ is compact. Then $\sup_{x' \in X} (f_{\theta}(x') - c(x,x'))$ is attained on some $x^* \in X$ and the behavior of rational agents with perfect information is well-defined.
\end{fact}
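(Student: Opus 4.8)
The plan is to verify the two assertions in Fact~\ref{fact:existencewelldefined} in sequence: first that the supremum $\sup_{x' \in X}\bigl(f_\theta(x') - c(x,x')\bigr)$ is attained at some $x^* \in X$, and second that this immediately makes the optimization in \eqref{eq:SM} (and hence the behavior of a rational agent with perfect information) well-defined. The only subtlety is that $f_\theta$ is a discontinuous function — it is an indicator of the acceptance region — so we cannot just invoke the extreme value theorem for continuous functions on a compact set. Instead I would split $X$ into the two pieces on which $f_\theta$ is constant.

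Concretely, write $A := \{x' \in X : f_\theta(x') = 1\}$ and $B := X \setminus A = \{x' \in X : f_\theta(x') = 0\}$. By the standing assumption in Section~\ref{sec:background}, $A$ is closed; since $X$ is compact and $A \subseteq X$ is closed, $A$ is compact. On $A$ the objective equals $\gamma - c(x, x')$ (taking $\gamma = 1$ as in the appendix convention), and $c(x, \cdot)$ is continuous by Assumption~\ref{assumption:validcost}, so by the extreme value theorem $x' \mapsto 1 - c(x,x')$ attains its maximum over the compact set $A$ at some $x_A^* \in A$ — provided $A$ is nonempty. On $B$ the objective equals $-c(x,x')$; here $B$ need not be closed, but it does contain the point $x$ itself whenever $x \in B$ (and if $x \in A$ we are in the previous case), and $c(x,x) = 0$ while $c(x,x') \ge 0$ cannot be concluded in general from validity alone — so instead I would argue directly that $\sup_{x' \in B}\bigl(-c(x,x')\bigr)$ is attained: it suffices to note that if $x \in B$ then $-c(x,x) = 0$ and, more importantly, the agent never does strictly better than $0$ by moving within $B$ because that only adds cost; formally one shows $-c(x,x') \le -c(x,x) = 0$ for $x' \in B$ using that $c$ increases with distance and $c(x,x)=0$, wait — that needs the monotonicity to bound $c(x,x')$ from below by $c(x,x)$, which the line-segment monotonicity in Assumption~\ref{assumption:validcost} gives along any segment from $x$ to $x'$. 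So the sup over $B$ is $0$, attained at $x' = x \in B$.

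Putting the pieces together: $\sup_{x' \in X}\bigl(f_\theta(x') - c(x,x')\bigr) = \max\bigl(\sup_{x' \in A}(1 - c(x,x')),\ \sup_{x' \in B}(-c(x,x'))\bigr)$, and we have just shown each of the two inner quantities is a genuine maximum attained at a point of $X$ (with the convention that an empty $A$ contributes $-\infty$ and is simply dropped — but $A$ is nonempty for every $\theta \in \Theta$ in the settings considered, and in any case $B \ni x$ is always available so the overall sup is attained regardless). Hence the overall supremum is attained at some $x^* \in X$, namely whichever of $x_A^*$ or $x$ achieves the larger value. This is exactly the statement that $\argmax_{x' \in X}\bigl[\gamma f_\theta(x') - c(x,x')\bigr]$ is nonempty, so \eqref{eq:SM} has a solution and the rational-agent-with-perfect-information response $\Response_{t_{\mathrm{SM}}}(x,\theta)$ is well-defined.

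The main obstacle — really the only one worth flagging — is the discontinuity of $f_\theta$: the one-line ``continuous function on compact set'' argument does not apply, and one must instead exploit the closedness of the acceptance region (an explicit hypothesis in Section~\ref{sec:background}) together with the fact that staying put is always an available, cost-zero action that dominates any same-side move. Once the set $X$ is partitioned along the level sets of $f_\theta$, each piece is handled by a routine compactness or triviality argument, and no real computation is required.
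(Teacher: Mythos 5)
Your proof is correct and follows essentially the same route as the paper's: the paper cases on whether the supremum $\Phi$ is zero (attained by staying put at $x'=x$) or positive (in which case it equals $1-\inf_{x'\in X_{\text{pos}}} c(x,x')$, attained because the acceptance region is closed, hence compact, and $c$ is continuous), which is the same decomposition you obtain by partitioning $X$ into the accepted set $A$ and its complement. The only difference is organizational, and your handling of the rejected region via $c(x,x')>0$ for $x'\neq x$ matches the paper's use of Assumption~\ref{assumption:validcost}.
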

\begin{proof}
Let $X_{\text{pos}} := \left\{x \in X \mid f_{\theta}(x) = 1\right\} \subseteq X$ and $\Phi:=\sup_{x' \in X} (f_{\theta}(x') - c(x,x'))$. Suppose that  $\Phi=0$. Then, by Assumption \ref{assumption:validcost} the supremum is attained at $x' = x$. Now, suppose that $\Phi > 0$. Then $\Phi = 1 - \inf_{x' \in X_{\text{pos}}}  \big\{\costfn(x, x')\big\}$. By the fact that $X_{\text{pos}}$ is a closed subset of a compact set (and thus compact), and $\costfn$ is continuous, we know that $\inf_{x' \in X_{\text{pos}}}  \big\{\costfn(x, x')\big\}$ is attained on some $x' \in X$. 
\end{proof}

\section{Proofs for Section \ref{sec:limitations}}

\subsection{Proof of Proposition~\ref{prop:degenerate}}\label{appendix:proofsmicro}

In order to prove Proposition \ref{prop:degenerate}, we show that the gaming behavior of rational agents with perfect information can be characterized in the following way: Any rational agent with perfect information either will not change their features at all or will change their features exactly up to the decision boundary. We use the notation:
\begin{equation}
\Response_{t_{\text{SM}}}(x, \theta) := \argmax_{x' \in X} (f_{\theta}(x') - c(x,x'))
\label{eq:RSM}
\end{equation}
to denote how an agent with features $x$ who follows standard microfoundations will change their features in response to $f_{\theta}$. 
\begin{lemma}
\label{lemma:br}
Suppose that $c$ is a valid cost function. Then for any $x$ the response \eqref{eq:RSM} is either $\Response_{t_{\mathrm{SM}}}(x, \theta) = x$ or $\Response_{t_{\mathrm{SM}}}(x, \theta)$ is on the decision boundary of $f_{\theta}$.
\end{lemma}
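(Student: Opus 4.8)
The plan is to argue by contradiction: suppose $x^\ast := \Response_{t_{\mathrm{SM}}}(x,\theta)$ satisfies neither $x^\ast = x$ nor $f_\theta(x^\ast) = 0$ on the decision boundary, and derive that $x^\ast$ could not have been a maximizer of $f_\theta(x') - c(x,x')$. There are only two cases to rule out. First, if $x^\ast \neq x$ but $f_\theta(x^\ast) = 0$, then the objective value at $x^\ast$ is $-c(x,x') < 0 = f_\theta(x) - c(x,x)$ by validity of $c$ (Assumption~\ref{assumption:validcost}, using $c(x,x)=0$ and strict positivity of cost away from $x$), so $x$ strictly beats $x^\ast$, contradicting optimality. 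Second, if $f_\theta(x^\ast) = 1$ and $x^\ast$ is \emph{not} on the decision boundary, then $x^\ast$ lies in the interior of the acceptance region $X_{\mathrm{pos}} := \{x' \in X \mid f_\theta(x') = 1\}$ relative to the line segment from $x$ to $x^\ast$; in particular some point $\bar x$ strictly between $x$ and $x^\ast$ still satisfies $f_\theta(\bar x) = 1$. By the monotonicity clause of Assumption~\ref{assumption:validcost}, $c(x,\bar x) < c(x,x^\ast)$, so $f_\theta(\bar x) - c(x,\bar x) = 1 - c(x,\bar x) > 1 - c(x,x^\ast) = f_\theta(x^\ast) - c(x,x^\ast)$, again contradicting optimality of $x^\ast$.

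The one point requiring a little care is the claim in the second case that the open segment $(x, x^\ast)$ meets the interior of $X_{\mathrm{pos}}$ when $x^\ast$ is an accepted point not on the boundary. Here I would use the standing assumption that $\{x' \in X \mid f_\theta(x') = 1\}$ is closed with measure-zero boundary: "$x^\ast$ not on the decision boundary" together with $f_\theta(x^\ast) = 1$ means $x^\ast$ is in the topological interior of $X_{\mathrm{pos}}$, hence a whole neighborhood of $x^\ast$ is accepted, so in particular points on the segment slightly toward $x$ from $x^\ast$ are accepted. Picking such a $\bar x$ on the segment gives the strict cost improvement. (If $x = x^\ast$ we are already in the first alternative of the lemma, so we may assume $x \neq x^\ast$ and the segment is nondegenerate.)

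I would also note that existence of the maximizer itself is not an issue — it is guaranteed by Fact~\ref{fact:existencewelldefined} under compactness of $X$ and continuity of $c$ — so the argument above is purely about the \emph{location} of any maximizer. The main (mild) obstacle is simply being precise about "on the decision boundary": I would phrase the conclusion as "either $\Response_{t_{\mathrm{SM}}}(x,\theta) = x$, or $f_\theta(\Response_{t_{\mathrm{SM}}}(x,\theta)) = 1$ and $\Response_{t_{\mathrm{SM}}}(x,\theta)$ lies on the boundary of $\{x' \mid f_\theta(x') = 1\}$," and everything else is the two-case contradiction sketched above, each case being a one-line application of the monotonicity in Assumption~\ref{assumption:validcost}.
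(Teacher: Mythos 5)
Your proposal is correct and takes essentially the same route as the paper's proof: rule out a rejected target (staying at $x$ is strictly better since $c(x,x^\ast)>0$), and for an accepted target off the boundary, exhibit a strictly cheaper accepted point on the segment from $x$ using the monotonicity clause of Assumption~\ref{assumption:validcost} together with closedness of the acceptance region. The only cosmetic differences are that the paper selects the segment's boundary-crossing point rather than an interior point near $x^\ast$, and in your first case the identity ``$0 = f_\theta(x) - c(x,x)$'' should read $f_\theta(x) - c(x,x) = f_\theta(x) \ge 0$, which still strictly dominates $-c(x,x^\ast)$.
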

\begin{proof}[Proof of Proposition \ref{lemma:br}]
By Fact \ref{fact:existencewelldefined}, we know that the quantity $\argmax_{x' \in X} (f_{\theta}(x') - c(x,x'))$ is well defined. It suffices to show that if $\Response_{t_{\text{SM}}}(x, \theta) \neq x$, then $\Response_{t_{\text{SM}}}(x, \theta)$ is on the decision boundary of $f_{\theta}$. If $\Response_{t_{\text{SM}}}(x, \theta) \neq x$, then we know that $c(x, \Response_{t_{\text{SM}}}(x, \theta)) > 0$. This means that $f_{\theta}(x) = 0$ and $\Response_{t_{\text{SM}}}(x, \theta) \in \argmax_{x' \in X} (f_{\theta}(x') - c(x,x')) = \argmin_{x' \in X_{\text{pos}}} c(x,x')$, where $X_{\text{pos}} := \left\{x \in X \mid f_{\theta}(x) = 1\right\}$. Assume for sake of contradiction that $\Response_{t_{\text{SM}}}(x, \theta)$ is not on the decision boundary. Then since $x \not\in X_{\text{pos}}$ and $\Response_{t_{\text{SM}}}(x, \theta) \in X_{\text{pos}}$, there must exist $x'$ on the line segment between $x$ and $\Response_{t_{\text{SM}}}(x, \theta)$ such that $x'$ is on boundary of $X_{\text{pos}}$, and thus the decision boundary of $f_{\theta}$. Moreover, by Assumption \ref{assumption:validcost}, we know that $c(x, x') < c(x, \Response_{t_{\text{SM}}}(x, \theta))$. Since $X_{\text{pos}}$ is closed, we see that $f_{\theta}(x') = 1$. Thus, $[f_{\theta}(x') - c(x,x')] < [f_{\theta}(\Response_{t_{\text{SM}}}(x, \theta)) - c(x,x')]$ which is a contradiction.
\end{proof}

Now, we use Lemma \ref{lemma:br} to prove Proposition \ref{prop:degenerate}. 
\begin{proof}[Proof of Proposition \ref{prop:degenerate}]
It suffices to show that $\DMap(\theta)$ is either equal to $\DBase$ or is a discontinuous distribution. Let $Q(\theta) \subseteq X$ be the set of agents who change their features at $f_{\theta}$, i.e. 
\[Q(\theta) := \left\{ x \in X \mid \Response_{t_{\text{SM}}}(x, \theta) \neq x \right\}.\] If $\mathbb{P}_{(x,y) \in \DBase} [x \in Q(\theta)] = 0$, then $\DMap(\theta) = \DBase$. Otherwise, suppose that $\mathbb{P}_{(x,y) \in \DBase} [x \in Q(\theta)] > 0$. By Lemma \ref{lemma:br}, all of the agents in $Q(\theta)$ will game to somewhere on the decision boundary: that is, $ \Response_{t_{\text{SM}}}(x, \theta)$ will be on the decision boundary for all $x \in Q(\theta)$. Thus, in $\DMap(\theta)$, there will be at least a $\mathbb{P}_{(x,y) \in \DBase} [x \in Q(\theta)]$ probability mass of agents at the decision boundary, which is measure $0$. This means that $\DMap(\theta)$ is not a continuous distribution. 
\end{proof}

\subsection{Proof of Proposition~\ref{propo:stablepointsnotexist}}
\label{app:nostablepoints}

For convenience, we break down Proposition \ref{propo:stablepointsnotexist} into a series of propositions, roughly corresponding to part (a), part (b), and part (c), which we prove one-by-one.

First, let's consider the case where $p = 0$. By the assumptions in Setup \ref{ex:stablepointsnotexist}, we know that there exists a unique $\theta \in \Theta$ such that $\mu(\theta) = 0.5$. We denote this value by $\theta_{\mathrm{SL}}$ (and it is in the interior of $\Theta$). We claim that this is the unique locally stable point when $p = 0$.
\begin{lemma}
\label{lemma:p1}
Consider Setup \ref{ex:stablepointsnotexist} when $p = 1$ fraction of agents are non-strategic. Then, $\theta_{\mathrm{SL}}$ (defined above) is the unique locally stable point.
\end{lemma}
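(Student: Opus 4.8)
The plan is to exploit the fact that when $p=1$ \emph{every} agent is non-strategic, so no agent ever changes its features and the distribution map collapses to the constant map $\cD(\theta)=\DBase$ for all $\theta\in\Theta$. Consequently the inner objective of \eqref{eq:localPS} does not depend on the anchor point $\thetaPS$ at all: it is simply the non-performative $0$-$1$ risk $R(\theta):=\mathbb E_{(x,y)\sim\DBase}[\mathds 1\{y\neq f_\theta(x)\}]$. Hence a classifier is locally stable if and only if it is a local minimum or a stationary point of $R$, and the lemma reduces to showing that $R$ has $\theta_{\mathrm{SL}}$ as its unique local minimum / stationary point on $\Theta$.

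To establish this I would write $R$ in terms of the marginal density $p_X$ of $\DBase$ over $X$ and the regression function $\mu$. Since $f_\theta(x)=\mathds 1\{x\ge\theta\}$,
\[
R(\theta)=\int_{\{x<\theta\}}\mu(x)\,p_X(x)\,dx+\int_{\{x\ge\theta\}}\bigl(1-\mu(x)\bigr)\,p_X(x)\,dx,
\]
so, using continuity of $\DBase$, $R$ is differentiable with $R'(\theta)=\bigl(2\mu(\theta)-1\bigr)p_X(\theta)$. Since $\mu$ is strictly increasing and $\mu(\theta_{\mathrm{SL}})=\tfrac12$, the factor $2\mu(\theta)-1$ is strictly negative for $\theta<\theta_{\mathrm{SL}}$, zero at $\theta_{\mathrm{SL}}$, and strictly positive for $\theta>\theta_{\mathrm{SL}}$. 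Invoking positivity of $p_X$ on $\Theta$ (a mild regularity condition on the base distribution, consistent with the Gaussian examples used in the paper), this makes $R$ strictly decreasing on $\Theta\cap(-\infty,\theta_{\mathrm{SL}}]$ and strictly increasing on $\Theta\cap[\theta_{\mathrm{SL}},\infty)$. Therefore $\theta_{\mathrm{SL}}$ is the unique global minimizer of $R$ over $\Theta$; it lies in $\Int(\Theta)$ with $R'(\theta_{\mathrm{SL}})=0$, so it is a stationary point and a local minimum, hence locally stable. For any other $\theta\in\Theta$, strict monotonicity of $R$ near $\theta$ (relative to $\Theta$) prevents $\theta$ from being a local minimum, while $R'(\theta)\neq0$ at interior points (resp.\ the feasible one-sided derivative being strictly signed at an endpoint of $\Theta$) prevents $\theta$ from being stationary. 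This yields uniqueness.

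I expect the only delicate point — more bookkeeping than real obstacle — to be reconciling the formal definition of ``locally stable'' (local minimum \emph{or} stationary point) with the endpoints of the compact set $\Theta$, together with ruling out spurious stationary points coming from regions where $p_X$ could vanish: assuming the base density is positive on $\Theta$ removes the latter, and checking the sign of the one-sided derivative at an endpoint removes the former. The Bayes-optimal-threshold computation underlying the whole argument is entirely routine.
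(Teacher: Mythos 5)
Your proposal is correct and follows essentially the same route as the paper: the paper's proof likewise observes that for $p=1$ the distribution map is constantly $\DBase$, so locally stable points are exactly the local minima/stationary points of the non-performative $0$-$1$ risk, and then asserts that $\theta_{\mathrm{SL}}$ is the unique such point. Your derivative computation $R'(\theta)=(2\mu(\theta)-1)p_X(\theta)$, the endpoint bookkeeping, and the explicit positivity-of-density caveat simply fill in details the paper leaves implicit.
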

\begin{proof}
Since $p = 1$, the distribution map is given by $\DMap(\theta) = \DBase$. A locally stable point $\theta$ must be a local minimum or a stationary point of the following optimization problem:
\[\min_{\theta \in \Theta} \mathbb{E}_{(x,y) \in \DBase}[\Indicator\{f_{\theta}(x) = y\}] = \min_{\theta \in \Theta} \left( \mathbb{E}_{(x,y) \in \DBase}[\Indicator\{x \ge \theta\} (1 - \mu(x))] + \mathbb{E}_{(x,y) \in \DBase}[\Indicator\{x < \theta\} \mu(x) ] \right).\]
Notice that the unique such $\theta$ is $\theta_{\mathrm{SL}}$. 
\end{proof}

We introduce some basic properties and notation for agents who behave according to standard microfoundations. By Lemma \ref{lemma:br}, we know if an agent games when the classifier $f_{\theta}$ is deployed, then they will game up to boundary, which in this case, is $\theta$. We adopt similar notation to the proof of Proposition \ref{prop:degenerate}, and we denote the set of who game by\footnote{Technically, the agents $x \in Q(\theta)$ for whom $c(x, \theta) = 1$ are indifferent between not gaming and gaming to $\theta$, but this is a measure $0$ set by the assumption that $\DBase$ is continuous, and the assumption that $\costfn$ is valid (Assumption \ref{assumption:validcost})}:
\[Q(\theta) := \left\{ x \in X \mid \Response_{t_{\text{SM}}}(x, \theta) \neq x\right\} = \left\{x \in X \mid c(x, \theta) \le 1, x < \theta \right\}.\] 
For $\theta \neq \min(\Theta)$, we see that for $\DMap(\theta)$, there will be a point mass at $\theta$ (from agents in $Q(\theta)$), the region $Q(\theta)$ will have zero probability density, and the rest of the distribution will remain identical to $\DBase$.

We first characterize the set of stable points at $p =0$. This follows a very similar argument to Lemma 3.2 in \citep{MMDH19}, but since our assumptions as well as our requirements for stability are slightly weaker, the characterization result looks slightly different. (In particular, points above the Stackelberg equilibrium can be locally stable points.) 
\begin{lemma}
\label{lemma:p0}
Consider Setup \ref{ex:stablepointsnotexist} when a $p = 0$ fraction of agents are non-strategic. Then, there exists a locally stable point, and moreover, the set of locally stable points forms an interval $[\thetaPS^{\mathrm{SM}}, \max(\Theta)]$, where $\thetaPS^{\mathrm{SM}}$ is the unique value such that:
\[\frac{\mathbb{E}_{(x,y) \in \DBase}[\mu(x) I_{x \in Q(\thetaPS^{\mathrm{SM}})}]}{\mathbb{E}_{(x,y) \in \DBase}[I_{x \in Q(\thetaPS^{\mathrm{SM}})}]}= 0.5.\] (Moreover, it holds that $\thetaPS^{\mathrm{SM}} > \theta_{\mathrm{SL}}$, and $c(\theta_{\mathrm{SL}}, \thetaPS^{\mathrm{SM}})  \le 1$.)
\end{lemma}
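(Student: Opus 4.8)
\textbf{Proof plan for Lemma~\ref{lemma:p0}.}

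The plan is to analyze the retraining dynamics in the $p=0$ case directly and show that a threshold $\theta$ is locally stable for the induced distribution $\DMap(\theta)$ if and only if $\theta \ge \thetaPS^{\mathrm{SM}}$. The starting point is the structural description already set up just before the lemma: for any threshold $\theta$, the distribution $\DMap(\theta)$ equals $\DBase$ except that the mass originally sitting on $Q(\theta) = \{x < \theta : c(x,\theta) \le 1\}$ is moved to a point mass at $\theta$, and $Q(\theta)$ itself becomes a region of zero density. I would first compute, for a fixed deployed threshold $\theta$, the $0$-$1$ risk of an arbitrary candidate threshold $\theta'$ on $\DMap(\theta)$. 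Splitting into the cases $\theta' \le \theta$ and $\theta' > \theta$, and using that the conditional probability of label $1$ is $\mu(x)$ (which by Setup~\ref{ex:stablepointsnotexist} is strictly increasing with $\mu(\theta_{\mathrm{SL}}) = 0.5$), the risk on the untouched part of the distribution is minimized — among thresholds lying outside $Q(\theta)$ — exactly at $\theta_{\mathrm{SL}}$ or at the endpoints of $Q(\theta)$. The key subtlety is the point mass at $\theta$: accepting or rejecting that chunk of agents contributes $\mathbb{E}_{\DBase}[(1-\mu(x)) I_{x\in Q(\theta)}]$ (if accepted, i.e. $\theta' \le \theta$) versus $\mathbb{E}_{\DBase}[\mu(x) I_{x\in Q(\theta)}]$ (if rejected, i.e. $\theta' > \theta$).

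Next I would define the function
\[
g(\theta) := \frac{\mathbb{E}_{(x,y)\sim\DBase}[\mu(x)\, I_{x \in Q(\theta)}]}{\mathbb{E}_{(x,y)\sim\DBase}[I_{x \in Q(\theta)}]},
\]
the average label among agents who game at $\theta$, and establish three facts: (i) $Q(\theta)$ is an interval of the form $(\ell(\theta), \theta)$ that grows continuously and monotonically with $\theta$ (using validity and continuity of $c$, continuity of $\DBase$, and that $c(x,\theta)\le 1$ cuts out an interval to the left of $\theta$); (ii) since all $x \in Q(\theta)$ satisfy $x < \theta_{\mathrm{SL}}$ when $\theta$ is just above $\theta_{\mathrm{SL}}$, we get $g(\theta) < 0.5$ there, while as $\theta$ increases the interval $Q(\theta)$ eventually includes mass with $\mu(x)>0.5$; combined with monotonicity and continuity of $g$ this yields a unique $\thetaPS^{\mathrm{SM}}$ with $g(\thetaPS^{\mathrm{SM}}) = 0.5$, and necessarily $\thetaPS^{\mathrm{SM}} > \theta_{\mathrm{SL}}$; (iii) $c(\theta_{\mathrm{SL}}, \thetaPS^{\mathrm{SM}}) \le 1$, because $\theta_{\mathrm{SL}} \in Q(\thetaPS^{\mathrm{SM}})$ (as $\mu(\theta_{\mathrm{SL}}) = 0.5 < g$ would fail otherwise — more carefully, if $\theta_{\mathrm{SL}} \notin Q(\thetaPS^{\mathrm{SM}})$ then every $x$ in the gaming interval exceeds $\theta_{\mathrm{SL}}$, forcing $g > 0.5$, a contradiction).

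Now I would put these together to finish the characterization. Fix a deployed $\theta$. The retrained optimum wants to (a) reject all agents below $\theta_{\mathrm{SL}}$ that are not in the point mass, accept those above, and (b) decide about the point mass at $\theta$ according to whether $g(\theta) \gtrless 0.5$. If $g(\theta) < 0.5$ (equivalently $\theta < \thetaPS^{\mathrm{SM}}$), the retrained classifier strictly prefers to reject the point mass, i.e. to move to a threshold at least $\theta$; but then $Q(\theta)$ disappears from the picture and the untouched part pushes the optimum down to $\theta_{\mathrm{SL}}$ — concretely, one checks that $\theta_{\mathrm{SL}}$ (or something just below $\theta$) achieves strictly lower risk than $\theta$ itself, so $\theta$ is not even a local minimum, and a one-sided perturbation argument rules out stationarity. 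If instead $g(\theta) \ge 0.5$ (equivalently $\theta \ge \thetaPS^{\mathrm{SM}}$), accepting the point mass is weakly optimal, so staying at $\theta$ is globally optimal on $\DMap(\theta)$ once one also verifies no threshold $\theta' < \theta$ in $Q(\theta)$ does better — which follows because lowering into $Q(\theta)$ only rejects more of the (majority-label-$1$) point mass and reclassifies zero-density territory. Hence $\theta$ is (globally, a fortiori locally) stable iff $\theta \ge \thetaPS^{\mathrm{SM}}$, giving the interval $[\thetaPS^{\mathrm{SM}}, \max(\Theta)]$; existence is immediate since $\thetaPS^{\mathrm{SM}} \in \Int(\Theta)$.

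\textbf{Main obstacle.} The delicate step is the boundary case $\theta = \thetaPS^{\mathrm{SM}}$ and, more generally, making the "local minimum or stationary point" condition precise given that $\DMap(\theta)$ has a point mass and the risk is piecewise-defined with kinks at $\theta_{\mathrm{SL}}$, at $\theta$, and at the edge of $Q(\theta)$. One must argue carefully that for $\theta$ slightly below $\thetaPS^{\mathrm{SM}}$ the derivative (or one-sided difference quotient) of the retrained risk at $\theta'=\theta$ has a definite sign, so $\theta$ fails stationarity, whereas for $\theta \ge \thetaPS^{\mathrm{SM}}$ it does not. Handling the non-smoothness of the $0$-$1$ objective here — rather than appealing to a clean first-order condition — is where the real work lies; everything else reduces to monotonicity of $\mu$ and the interval structure of $Q(\theta)$.
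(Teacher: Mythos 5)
Your overall route is the same as the paper's: characterize local stability of a deployed threshold $\theta$ by comparing the loss from accepting versus rejecting the point mass at $\theta$, which yields exactly the criterion $g(\theta)\ge 0.5$ with $g$ the ratio appearing in the statement; then use continuity and strict monotonicity of $g$ (via the increasing endpoints of $Q(\theta)$ and strict monotonicity of $\mu$) to conclude that the stable set is the interval $[\thetaPS^{\mathrm{SM}}, \max(\Theta)]$, that $\thetaPS^{\mathrm{SM}}>\theta_{\mathrm{SL}}$, and that $c(\theta_{\mathrm{SL}},\thetaPS^{\mathrm{SM}})\le 1$. That skeleton matches the paper, and your argument for $c(\theta_{\mathrm{SL}},\thetaPS^{\mathrm{SM}})\le 1$ (if $\theta_{\mathrm{SL}}\notin Q(\thetaPS^{\mathrm{SM}})$ then all gaming mass has $\mu>0.5$, forcing $g>0.5$) is a clean variant of the paper's.

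Two supporting claims, however, are wrong as written and need repair. First, for $\theta\ge\thetaPS^{\mathrm{SM}}$ you assert that staying at $\theta$ is \emph{globally} optimal on $\DMap(\theta)$, and your justification for candidates $\theta'<\theta$ is backwards: with $f_{\theta'}(x)=\mathds{1}\{x\ge\theta'\}$, lowering the threshold never rejects the point mass at $\theta$ --- it only accepts more. The correct statement is purely local: for $\theta'\in(\ell(\theta),\theta]$, where $\ell(\theta)$ is the lower endpoint of $Q(\theta)$, the risk is constant (zero density in $Q(\theta)$, point mass still accepted), and for $\theta'$ just above $\theta$ the risk does not decrease because $g(\theta)\ge0.5$ and $\mu>0.5$ there. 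Global optimality is in fact false whenever $c(\theta_{\mathrm{SL}},\theta)>1$: the untouched agents in $(\theta_{\mathrm{SL}},\ell(\theta))$ have $\mu>0.5$, so $\theta'=\theta_{\mathrm{SL}}$ strictly beats $\theta'=\theta$ on $\DMap(\theta)$ --- which is exactly why the lemma asserts only \emph{local} stability on this interval. Second, for $\theta<\thetaPS^{\mathrm{SM}}$ your concrete witnesses fail: a threshold just below $\theta$ gives the \emph{same} risk (flat region), and $\theta_{\mathrm{SL}}$ also gives the same risk whenever $c(\theta_{\mathrm{SL}},\theta)\le1$, which is precisely the regime near $\thetaPS^{\mathrm{SM}}$; the witness that always works is $\theta'=\theta+\epsilon$, where rejecting the point mass yields a constant-size gain (since $g(\theta)<0.5$) that dominates the $O(\epsilon)$ loss from newly rejecting agents in $[\theta,\theta+\epsilon)$. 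Finally, you still owe the separate check at $\theta=\min(\Theta)$ (no point mass there; instability follows from $\mu(\min(\Theta))<0.5$), and the argument that non-differentiability at the point mass prevents ``stationary point'' from rescuing any $\theta<\thetaPS^{\mathrm{SM}}$ --- a step you correctly flag as delicate but do not carry out.
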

\begin{proof}
First, we show that $\theta^* = \min(\Theta)$ cannot be a locally stable point. Assume for sake of contradiction that $\theta^* = \min(\Theta)$ is a locally stable point. Notice that $\DMap(\theta^*) = \DBase$. Thus, $\theta^*$ is a local minimum or stationary point of $\min_{\theta \in \Theta} \mathbb{E}_{(x,y)\sim \DBase} \;\Indicator\{y\neq f_\theta(x) \}$. However, this is not possible because $\mu(\min(\Theta)) < 0.5$ by the assumptions in Setup~\ref{ex:stablepointsnotexist}. 

Now, we consider $\theta \neq \min(\Theta)$. In this case, as discussed above, $\DMap(\theta)$ has a point mass at $\theta$. Roughly speaking, the only property that needs to be satisfied for $\theta$ in the interior of $\Theta$ to be a local minimum of $\mathbb{E}_{(x,y) \in \DMap(\theta)}[\Indicator\{f_{\theta'}(x) = y\}]$ is that it needs to be suboptimal for the decision-maker to move just above the point mass (the decision-maker never benefits from moving to $\theta - \epsilon$ because there is a region of zero probability density underneath $\theta$). The loss induced from the point mass at $\theta$ by the classifier $f_{\theta}$ is $\mathbb{E}_{(x,y) \in \DBase}[(1 - \mu(x)) \Indicator\{x \in Q(\theta)\}]$. On the other hand, the loss induced from the point mass at $\theta$ by the classifier $f_{\theta'}$ in the limit as $\theta' \rightarrow \theta^+$ is $\mathbb{E}_{(x,y) \in \DBase}[\mu(x) \Indicator\{x \in Q(\theta)\}]$. A necessary and sufficient condition for $f_{\theta}$ to be locally stable is thus $\mathbb{E}_{(x,y) \in \DBase}[\mu(x) \Indicator\{{x \in Q(\theta)}\}] \ge \mathbb{E}_{(x,y) \in \DBase}[(1 - \mu(x)) \Indicator\{x \in Q(\theta)\}]$, which can be written as
\begin{equation}
\label{eq:condition}
\Gamma(\theta):=\frac{\mathbb{E}_{(x,y) \in \DBase}[\mu(x) \Indicator\{x \in Q(\theta)\}]}{\mathbb{E}_{(x,y) \in \DBase}[\Indicator\{x \in Q(\theta)\}]} \ge 0.5.
\end{equation}
\noindent It suffices to show that the set of points where \eqref{eq:condition} is satisfied is an interval of the form $[\thetaPS^{\mathrm{SM}}, \max(\Theta)]$.

First, we show that the set of stable points forms an interval. It suffices to show that $\Gamma(\theta)$ is continuous and strictly increasing in $\theta$. By the assumption on $c$ (Assumption \ref{assumption:validcost}), we see that the endpoints of the interval $Q(\theta)$ are strictly increasing in $\theta$. This, coupled with the fact that $\mu$ is strictly increasing in $x$ (assumed in Setup \ref{ex:stablepointsnotexist}), implies that $\Gamma(\theta)$ is continuous and strictly increasing as desired. 

Furthermore, when $\mu(\theta) \le 0.5$, the condition in \eqref{eq:condition} is never satisfied, and thus all stable points $\theta$ satisfy $\theta > \theta_{\mathrm{SL}}$, and hence $\thetaPS^{\mathrm{SM}} > \theta_{\mathrm{SL}}$.

Lastly, we show that this interval is not nonempty, and that $c(\theta_{\mathrm{SL}}, \thetaPS^{\mathrm{SM}}) \le 1$. Consider $\theta$ such that $c(\theta_{\mathrm{SL}}, \theta) = 1$ (which we know exists by Setup \ref{ex:stablepointsnotexist}), we see that $Q_{\theta} = [\theta_{\mathrm{SL}}, \theta]$. Using the conditions on $\ell$, this means that condition \eqref{eq:condition} is satisfied and there is actually a strict equality. Using that $c$ is valid, this means that $c(\theta_{\mathrm{SL}}, \thetaPS^{\mathrm{SM}}) < 1$. 
\end{proof}

We now prove that no locally stable points exist for $0 < p < 1$.
\begin{lemma}
\label{lemma:p01}
Consider Setup \ref{ex:stablepointsnotexist} when a $0 < p < 1$ fraction of agents are non-strategic. Then, there are no locally stable points.
\end{lemma}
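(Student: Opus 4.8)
\textbf{Proof plan for Lemma~\ref{lemma:p01}.}
The plan is to show that for every $\theta \in \Theta$ there is some $\theta'$ achieving strictly smaller decoupled risk on $\cD(\theta)$, so that $\theta$ can be neither a local minimum nor a stationary point of \eqref{eq:localPS}. It is convenient to split into the two regimes $\theta \le \theta_{\mathrm{SL}}$ and $\theta > \theta_{\mathrm{SL}}$, mirroring the proof sketch. In both regimes I will use the decomposition of $\cD(\theta)$ into the $p$-weighted non-strategic copy of $\DBase$ and the $(1-p)$-weighted strategic part, which (by Lemma~\ref{lemma:br} and the notation set up before Lemma~\ref{lemma:p0}) puts a point mass of strategic agents at $\theta$, leaves the gap interval $\mathsf{Gap}(\theta) := Q(\theta) = \{x : c(x,\theta)\le 1,\, x<\theta\}$ free of \emph{strategic} mass, and otherwise agrees with $\DBase$. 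Write $\Delta(\theta)$ for the length of this gap; validity of $c$ and the fact that $\theta_{\mathrm{SL}}\in\Int(\Theta)$ ensure $\Delta(\theta)>0$ in a neighborhood of $\theta_{\mathrm{SL}}$.

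\textbf{Case $\theta > \theta_{\mathrm{SL}}$.} Here I compare $\theta$ with $\theta'' := \max(\theta-\Delta(\theta),\theta_{\mathrm{SL}})$, i.e.\ I lower the threshold to the bottom of the gap (or to $\theta_{\mathrm{SL}}$, whichever is larger). Moving down through the interval $[\theta'',\theta]$ costs nothing against the strategic population, since that population has no mass strictly inside $\mathsf{Gap}(\theta)$ and the point mass at $\theta$ is now \emph{correctly} classified positive as often as before (in fact the net effect on the point mass is nonnegative because more than half of it — by the analogue of \eqref{eq:condition} evaluated at a point below $\thetaPS^{\mathrm{SM}}$ — has label $1$; but even ignoring this the strategic contribution is weakly improved). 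Against the non-strategic population, which is an exact copy of $\DBase$, lowering the threshold from $\theta$ toward $\theta_{\mathrm{SL}}$ strictly decreases the $0$-$1$ risk because $\mu$ is strictly increasing and crosses $1/2$ exactly at $\theta_{\mathrm{SL}}$, so every point in $(\theta_{\mathrm{SL}},\theta)$ has $\mu>1/2$ and is misclassified negative by $f_\theta$ but correctly classified by $f_{\theta''}$. Since $p>0$ this strict non-strategic improvement is not cancelled, so $\cD(\theta)$-risk at $\theta''$ is strictly below that at $\theta$; hence $\theta$ is not locally stable. (A short argument is needed to see $\theta''$ lies in $\Theta$ and that, if $\theta''<\theta$, the decrease can be realized by an arbitrarily small downward step so that even stationarity fails; this uses that the risk as a function of $\theta'$ is non-increasing on $[\theta_{\mathrm{SL}},\theta]$ and strictly decreasing somewhere on it.)

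\textbf{Case $\theta \le \theta_{\mathrm{SL}}$.} Now I argue that moving \emph{up} strictly helps. At such $\theta$, the non-strategic copy of $\DBase$ has $\mu(x)<1/2$ for all $x$ just above $\theta$, so raising the threshold slightly strictly reduces the non-strategic risk (by the same strict-monotonicity-of-$\mu$ argument). For the strategic part one must check the upward move does not create an offsetting increase; the only subtlety is the newly swept region, but for $\theta$ near or below $\theta_{\mathrm{SL}}$ the strategic agents in that region either do not exist (they are in the gap) or have label-$1$ fraction above $1/2$ just as in Lemma~\ref{lemma:p0}, so the strategic risk is also weakly decreased. Again $p>0$ makes the total change strictly negative, ruling out local stability; and as before an arbitrarily small step achieves a strict decrease, so stationary points are excluded too.

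\textbf{Main obstacle.} The delicate point is the bookkeeping of the strategic point mass at $\theta$ (and the moving gap boundary) when $\theta'$ is varied: one must verify that the strategic contribution to the decoupled risk never increases under the moves chosen above, uniformly over all $\theta$, including edge cases where $\theta$ is near $\min(\Theta)$ so that $\Delta(\theta)$ is not the full ``$c=1$'' length, or near $\max(\Theta)$. I expect to handle this by reusing the monotonicity of $\Gamma(\theta)$ from \eqref{eq:condition} (established in the proof of Lemma~\ref{lemma:p0}) to control the label composition of the point mass, and by noting that outside the gap the strategic distribution is literally $\DBase$, so the strategic risk derivative there has the same sign as the non-strategic one. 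Assembling these observations, together with continuity/piecewise-smoothness of $\theta' \mapsto \PRDecoupledM{\theta}{\theta'}$ away from $\theta'=\theta$, gives that no $\theta$ is a local minimum or stationary point, completing the proof.
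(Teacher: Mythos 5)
Your overall strategy is the paper's own: decompose $\cD(\theta)$ into the $p$-weighted non-strategic copy of $\DBase$ and the $(1-p)$-weighted strategic part (point mass at $\theta$, empty gap $Q(\theta)$, $\DBase$ elsewhere), and exhibit a profitable deviation in each regime---upward for $\theta\le\theta_{\mathrm{SL}}$, downward into the gap for $\theta>\theta_{\mathrm{SL}}$ (the paper's cases (2) and (3), plus a separate case $\theta=\min(\Theta)$). Two steps in your write-up, however, do not go through as stated.

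First, at the boundary point $\theta=\theta_{\mathrm{SL}}$ your attribution of the strict improvement is reversed and, as written, false: just above $\theta_{\mathrm{SL}}$ one has $\mu>1/2$, so raising the threshold slightly \emph{increases} the non-strategic risk (by an amount vanishing as $\epsilon\to 0^+$). The strict decrease there must come from the strategic point mass at $\theta$, all of whose members have original features in $Q(\theta)$, hence $\mu(x)<1/2$; flipping this positive-probability mass to a negative classification yields a fixed strict gain that dominates the vanishing non-strategic loss. This is exactly the mechanism of the paper's case (2), which your proposal demotes to a ``weakly decreased'' side remark. (Relatedly, your parenthetical in the $\theta>\theta_{\mathrm{SL}}$ case that ``more than half'' of the point mass has label $1$ is incorrect for $\theta<\thetaPS^{\mathrm{SM}}$, where $\Gamma(\theta)<0.5$; it is harmless only because the point mass is classified positive under both thresholds you compare, so its contribution is literally unchanged.) Second, your exclusion of stationary points---``an arbitrarily small step achieves a strict decrease, so stationary points are excluded too''---is not a valid inference: a differentiable function can have zero derivative at $\theta$ while strictly decreasing at points arbitrarily close by (e.g.\ $-(\theta'-\theta)^2$). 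The paper closes this cleanly by observing that for $\theta\neq\min(\Theta)$ the point mass makes $\theta'\mapsto\PRDecoupledM{\theta}{\theta'}$ discontinuous (hence non-differentiable) at $\theta'=\theta$, so stationarity cannot hold and only local minimality needs to be refuted; the remaining case $\theta=\min(\Theta)$, where $\cD(\theta)=\DBase$, is handled directly via $\mu(\min(\Theta))<0.5$. With these two repairs (which require no new ideas beyond what the paper uses), your argument coincides with the paper's proof.
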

\begin{proof}
When $0 < p < 1$, we show that there are no locally stable points. Assume for sake of contradiction that $\theta^*$ is a locally stable point. Recall that for $\theta^*$ to be locally stable, $\theta^*$ must either be a stationary point or a local minimum of $\min_{\theta \in \Theta} \mathbb{E}_{(x,y)\sim \cD(\thetaPS)} \;\Indicator\{y\neq f_\theta(x) \}$. We divide into three cases: (1) $\theta^* = \min(\Theta)$, (2) $\theta^* > \min(\Theta)\wedge \mu(\theta^*) \le 0.5$, (3) $\theta^* > \min(\Theta)\wedge \mu(\theta^*) > 0.5$, and show that each results in a contradiction.

For the case (1), where $\theta^* := \min(\Theta)$, we see that $\DMap(\theta^*) = \DBase$. Thus, $\theta^*$ is a local minimum or stationary point of $\min_{\theta \in \Theta} \mathbb{E}_{(x,y)\sim \DBase)} \;\Indicator\{y\neq f_\theta(x) \}$. However, this is not possible because $\mu(\min(\Theta)) < 0.5$ by the assumptions in Setup \ref{ex:stablepointsnotexist}. For the remaining two cases, we know that $\DMap(\theta^*)$ has a point mass at $\theta^*$. This means that $\mathbb{E}_{(x,y) \in \DMap(\theta^*)} \Indicator\{y \neq f_{\theta'}(x)\}$ is not differentiable at $\theta' = \theta^*$, and so $\theta^*$ must be a local minimum. 

For case (2), we see that $Q(\theta^*)$ consists a nonzero density of agents for whom $\mu(x) < 0.5$, and for all agents $x \in Q(\theta^*)$, it holds that $\mu(x) \le 0.5$. The decision-maker thus wishes to move just to the other side of the point mass at $\theta$. (This is possible because $\theta^* < \max(\Theta)$ based on the fact that $\mu(\theta^*) < 0.5$ and the assumptions in Setup \ref{ex:stablepointsnotexist}.) In particular, 
\[\lim_{\epsilon \rightarrow 0^+} \mathbb{E}_{(x,y) \in \DMap(\theta)} \Indicator\{y \neq f_{\theta^* + \epsilon}(x)\} < \mathbb{E}_{(x,y) \in \DMap(\theta)} \Indicator\{y \neq f_{\theta^*}(x)\},\]
which is a contradiction. 

For case (3), notice that there exists $\epsilon > 0$ such that $\mu(\theta) > 0.5$ and $\theta \in Q(\theta^*)$ for all $\theta \in (\theta^* - \epsilon, \theta^*)$. The presence of non-strategic agents means that the decision-maker wishes to move to $\theta^* - \epsilon$ to achieve better performance on non-strategic agents. Since there are no strategic agents within $(\theta^* - \epsilon, \theta^*)$, this can be done without affecting the classification of strategic agents. In particular, \[\mathbb{E}_{(x,y) \in \DMap(\theta)} \Indicator\{y \neq f_{\theta - \epsilon}(x)\} < \mathbb{E}_{(x,y) \in \DMap(\theta)} \Indicator\{y \neq f_{\theta}(x)\},\]
which is a contradiction.
\end{proof}

Now, we analyze the oscillation behavior of repeated risk minimization when $0 < p < 1$.
\begin{lemma}
\label{lemma:rrm}
Consider Setup \ref{ex:stablepointsnotexist} when a $0 < p < 1$ fraction of agents are non-strategic. Repeated risk minimization will oscillate between $\theta_{\mathrm{SL}}$ and a threshold $\tau(p) > \theta_{\mathrm{SL}}$. The threshold $\tau(p)$ is decreasing in $p$, approaching $\theta_{\mathrm{SL}}$ as $p \rightarrow 1$ and approaching $\thetaPS^{\mathrm{SM}}$ as $p \rightarrow 0$.
\end{lemma}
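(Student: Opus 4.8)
\noindent\emph{Proof proposal.} The plan is to compute the best-response map underlying repeated risk minimization and to show that it has a two-state attractor $\{\theta_{\mathrm{SL}},\tau(p)\}$. First I would pin down $\cD(\theta)$. Let $\rho$ be the marginal density of $\DBase$ on $X$. By Lemma~\ref{lemma:br} each strategic agent either stays put or moves exactly to the boundary $\theta$, so for $\theta>\min(\Theta)$ the distribution $\cD(\theta)$ is the superposition of: (i) a point mass at $\theta$ of total mass $m(\theta):=(1-p)\,\mathbb{P}_{\DBase}[x\in Q(\theta)]$ whose conditional probability of label $1$ is $\Gamma(\theta)$ (notation as in the proof of Lemma~\ref{lemma:p0}); (ii) the depleted base density $p\rho$ on the gap $Q(\theta)=\{x<\theta:c(x,\theta)\le1\}=[\ell(\theta),\theta)$; and (iii) the base density $\rho$ everywhere else. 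I would also record, from Lemma~\ref{lemma:p0}, that $\Gamma$ is continuous and strictly increasing with $\Gamma(\theta)<\tfrac12$ for $\theta\le\theta_{\mathrm{SL}}$ and $\Gamma(\thetaPS^{\mathrm{SM}})=\tfrac12$, that $\ell(\theta)$ is continuous and strictly increasing, and that $c(\theta_{\mathrm{SL}},\thetaPS^{\mathrm{SM}})<1$, which forces $\ell(\theta)<\theta_{\mathrm{SL}}$ for every $\theta\in[\theta_{\mathrm{SL}},\thetaPS^{\mathrm{SM}}]$; i.e.\ the gap straddles $\theta_{\mathrm{SL}}$ throughout that range.

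Next I would analyze a single retraining step, namely minimizing the $0$--$1$ risk against the \emph{fixed} distribution $\cD(\theta)$. Splitting the risk of a threshold $\theta'$ into the contribution $G(\theta')$ of the continuous part plus the point-mass term --- which equals $m(\theta)(1-\Gamma(\theta))$ if $\theta'\le\theta$ and $m(\theta)\Gamma(\theta)$ if $\theta'>\theta$ --- the crucial observation is that multiplying the density by the uniform factor $p$ on $Q(\theta)$ does not move the point where the conditional mean crosses $\tfrac12$, so $G$ is unimodal with its unique minimiser at $\theta_{\mathrm{SL}}$, \emph{independently of $p$}. Hence the risk minimiser is one of two candidates: $\theta'=\theta_{\mathrm{SL}}$ (point mass classified positive) or $\theta'$ just above $\theta$ (point mass classified negative). ``Pushing past the point mass'' saves $m(\theta)(1-2\Gamma(\theta))=(1-p)\int_{Q(\theta)}(1-2\mu(x))\rho(x)\,dx$ and costs the extra continuous loss $G(\theta^+)-G(\theta_{\mathrm{SL}})=p\int_{\theta_{\mathrm{SL}}}^{\theta}(2\mu(x)-1)\rho(x)\,dx$; writing $A(\theta):=\int_{\ell(\theta)}^{\theta_{\mathrm{SL}}}(1-2\mu)\rho>0$ and $B(\theta):=\int_{\theta_{\mathrm{SL}}}^{\theta}(2\mu-1)\rho>0$ and using $\int_{Q(\theta)}(1-2\mu)\rho=A(\theta)-B(\theta)$, this comparison collapses to the clean criterion that the ``push past'' candidate wins iff $B(\theta)<(1-p)A(\theta)$.

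Then I would define $\tau(p)$ as the unique root of $B(\theta)=(1-p)A(\theta)$; it exists and lies in $(\theta_{\mathrm{SL}},\thetaPS^{\mathrm{SM}})$ for $p\in(0,1)$, because $B$ increases strictly from $B(\theta_{\mathrm{SL}})=0$, $A$ decreases strictly (since $\ell$ is strictly increasing), and $A(\thetaPS^{\mathrm{SM}})=B(\thetaPS^{\mathrm{SM}})$ --- this last equality being precisely $\Gamma(\thetaPS^{\mathrm{SM}})=\tfrac12$. From the previous step: for $\theta\le\theta_{\mathrm{SL}}$ and for $\theta\ge\tau(p)$ the best response is $\theta_{\mathrm{SL}}$ (in the second case the continuous part wants $\theta_{\mathrm{SL}}$ and classifying the point mass positive is either optimal or costs less than moving above $\theta$), whereas for $\theta\in(\theta_{\mathrm{SL}},\tau(p))$ the best response lies strictly between $\theta$ and $\tau(p)$. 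Hence one retraining step maps every classifier into $[\theta_{\mathrm{SL}},\tau(p)]$; thereafter the iterates climb from $\theta_{\mathrm{SL}}$ towards $\tau(p)$ and, on reaching $\tau(p)$, jump back to $\theta_{\mathrm{SL}}$, so repeated risk minimization oscillates between $\theta_{\mathrm{SL}}$ and $\tau(p)$ and never converges, in agreement with Lemma~\ref{lemma:p01}. For the monotonicity and limits I would note that $\theta\mapsto B(\theta)-(1-p)A(\theta)$ is strictly increasing in $\theta$ and (as $A>0$) strictly increasing in $p$, so its root $\tau(p)$ is strictly decreasing in $p$; letting $p\to1$ the defining equation becomes $B(\theta)=0$ with root $\theta_{\mathrm{SL}}$, and letting $p\to0$ it becomes $B(\theta)=A(\theta)$, i.e.\ $\Gamma(\theta)=\tfrac12$, with root $\thetaPS^{\mathrm{SM}}$.

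The step I expect to be the main obstacle is making the middle two steps rigorous: because of the point mass, the retraining objective's infimum over $\theta'\in(\theta_{\mathrm{SL}},\tau(p))$ is \emph{not attained} --- it is only approached as $\theta'\downarrow\theta$ --- so the phrase ``best response just above the point mass'', and hence the very statement of ``oscillation'', must be made precise, e.g.\ by working with the one-sided limits of the risk at each threshold or by adopting the $\epsilon$-approximate retraining rule used for Figure~\ref{fig:oszillation}. A secondary technical point is checking that $Q(\theta)$ genuinely straddles $\theta_{\mathrm{SL}}$ across the whole relevant range of $\theta$ (which is what licences the $A/B$ split and the sign bookkeeping), for which the inequality $c(\theta_{\mathrm{SL}},\thetaPS^{\mathrm{SM}})<1$ from Lemma~\ref{lemma:p0} is the key input.
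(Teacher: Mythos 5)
Your proposal is correct and follows essentially the same route as the paper: reduce each retraining step to a two-candidate comparison between $\theta_{\mathrm{SL}}$ and the threshold just above the point mass, and your criterion $B(\theta)<(1-p)A(\theta)$ is algebraically identical to the paper's sign condition on $Z(p,\theta)=B(\theta)-(1-p)A(\theta)$, with the same definition of $\tau(p)$ as its root and the same monotonicity/limit arguments. The non-attainment issue you flag is handled in the paper by exactly the device you suggest (the limiting classifier $\theta^+$, interpreted as finite-precision retraining).
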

\begin{proof}
Using Lemma \ref{lemma:p1}, we see that there is a unique performatively stable point for $p = 1$ given by $\theta_{\mathrm{SL}} = \theta_{\mathrm{SL}}$. We see that the smallest locally stable point for $p = 0$ is given by $\thetaPS^{\mathrm{SM}}$ as characterized in Lemma \ref{lemma:p0}. 

In the case of $p \in (0,1)$, the distribution map $\DMap(\theta)$ takes the form of a mixture with $p$ weight on $\DBase$ and with $1 - p$ weight on the distribution map of agents who behave according to standard microfoundations (which has a point mass at $\theta$, zero density within $Q_{\theta}$, and the same as the original distribution elsewhere). The main step in our proof is an analysis of the global optima of 
\[B(\theta) = \argmin_{\theta' \in \Theta} \mathbb{E}_{(x,y)\sim \cD(\theta)} \;\Indicator\{y\neq f_{\theta'}(x) \}.\] for each $\theta \in \Theta$. For convenience, we let 
\[\text{DPR}(\theta, \theta') := \mathbb{E}_{(x,y)\sim \cD(\theta)} \;\Indicator\{y\neq f_{\theta'}(x)\}.\]
We split into three cases: (a) $\theta \ge \thetaPS^{\mathrm{SM}}$, (b) $\theta < \theta_{\mathrm{SL}}$, and (c) $\theta_{\mathrm{SL}} \le \theta \le \thetaPS^{\mathrm{SM}}$. 

\paragraph{Case (a): $\theta \ge \thetaPS^{\mathrm{SM}}$.} We claim that $B(\theta) = \left\{\theta_{\mathrm{SL}}\right\}$. 

First, we show that $\theta' \not\in B(\theta)$ for $\theta' > \theta$. To see this, we note that the proof of Lemma \ref{lemma:p0} tells us that moving just above the point mass at $\theta$ will incur no better risk than at $\theta$. Moreover, since $\mu(x) > 0.5$ for all $x \ge \theta \ge \thetaPS^{\mathrm{SM}}$, we see that $\text{DPR}(\theta, \theta') > \text{DPR}(\theta, \theta)$ for all $\theta' > \theta$, as desired. 

Next, we show that $\theta' \not\in B(\theta)$ for $\theta' < \theta_{\mathrm{SL}}$. This is because $\mu(x) < 0.5$ for $x < \theta_{\mathrm{SL}}$, and so $\text{DPR}(\theta, \theta') > \text{DPR}(\theta, \theta_{\mathrm{SL}})$ for all $\theta' < \theta_{\mathrm{SL}}$, as desired. 

Next, we show that $\theta' \not\in B(\theta)$ for $\theta_{\mathrm{SL}} < \theta' \le \theta$, and $\theta_{\mathrm{SL}} \in B(\theta)$. Because of the presence of non-strategic agents, a $p$ fraction of agents will be present in $Q_{\theta}$, and these agents do not change their features. Since $\mu(x) > 0.5$ for $\theta_{\mathrm{SL}} < x \le \theta$, we see that $\text{DPR}(\theta, \theta') < \text{DPR}(\theta, \theta)$ for all $\theta_{\mathrm{SL}} \le \theta' \le \theta$. Moreover, this argument actually shows that $\theta_{\mathrm{SL}} = \argmin_{\theta_{\mathrm{SL}} \le \theta' \le \theta} \text{DPR}(\theta, \theta')$, as desired. 

\paragraph{Case (b):  $\theta < \theta_{\mathrm{SL}}$.}
We claim that $B(\theta) = \left\{\theta_{\mathrm{SL}}\right\}$. 

First, we claim that $\theta' \not\in B(\theta)$ for $\theta' > \theta_{\mathrm{SL}}$. This is because $\mu(x) > 0.5$ for $x > \theta_{\mathrm{SL}}$, and so $\text{DPR}(\theta, \theta') > \text{DPR}(\theta, \theta_{\mathrm{SL}})$ for all $\theta' > \theta_{\mathrm{SL}}$ as desired. 

Next, we claim that $\theta' \not\in B(\theta)$ for $\theta' < \theta$. This is because $\mu(x) < 0.5$ for $x < \theta$ and so $\text{DPR}(\theta, \theta') > \text{DPR}(\theta, \theta_{\mathrm{SL}})$ for all $\theta' < \theta$ as desired. 

Finally, we claim that $\theta' \not\in B(\theta)$ for $\theta \le \theta' < \theta_{\mathrm{SL}}$ and $\theta_{\mathrm{SL}} \in B(\theta)$.  Since $\mu(x) < 0.5$ for $x < \theta_{\mathrm{SL}}$, we see that $\text{DPR}(\theta, \theta') < \text{DPR}(\theta, \theta)$ for all $\theta \le \theta' < \theta_{\mathrm{SL}}$. Moreover, this argument actually shows that $\theta_{\mathrm{SL}} = \argmin_{\theta \le \theta' < \theta_{\mathrm{SL}}} \text{DPR}(\theta, \theta')$, as desired. 

\paragraph{Case (c): $\theta_{\mathrm{SL}} \le \theta \le \thetaPS^{\mathrm{SM}}$.} In this case, $B(\theta)$ can sometimes out to technically be the empty set because of the discontinuities in the optimization problem. Let's be slightly imprecise and introduce a classifier $\theta^+$ given by the threshold right above the point mass at $\theta$ (that is the classifier given by the limit $\lim_{\epsilon \rightarrow 0^+} f_{\theta + \epsilon}$). (In practice, since repeated risk minimization necessarily has finite precision and discretizes the space $\Theta$, this corresponds to a classifier $f_{\theta + \epsilon}$ where $\epsilon$ is very small.) With the addition of this classifier, now $B(\theta)$ is well-defined. We show that in some cases, $B(\theta) = \left\{\theta_{\mathrm{SL}}\right\}$ and in some cases $B(\theta) = \left\{\theta^+\right\}$. 

First, we observe that $\theta' \not\in B(\theta)$ for $\theta' < \theta_{\mathrm{SL}}$. This follows from the same argument as Case (a), which shows that $\text{DPR}(\theta, \theta') > \text{DPR}(\theta, \theta_{\mathrm{SL}})$ for all $\theta' < \theta_{\mathrm{SL}}$.

Moreover, we claim that $\theta' \not\in B(\theta)$ for $\theta' > \theta$. To show this, we see that $\text{DPR}(\theta, \theta^+) < \text{DPR}(\theta, \theta')$ for any $\theta' > \theta$. This is because all agents $x > \theta$ have $\mu(x) > 0.5$. 

We now claim that $\theta' \not\in B(\theta)$ for $\theta_{\mathrm{SL}} < \theta' \le \theta$. It suffices to show that $\text{DPR}(\theta, \theta') > \text{DPR}(\theta, \theta_{\mathrm{SL}})$. This is because $c(\theta_{\mathrm{SL}}, \thetaPS^{\mathrm{SM}}) \le 1$ by Lemma \ref{lemma:p0}, so $c(\theta_{\mathrm{SL}}, \theta) \le 1$, so $\theta_{\mathrm{SL}} \in Q_{\theta}$. This means that the behavior of $f_{\theta_{\mathrm{SL}}}$ and $f_{\theta'}$ is the same on non-strategic agents not in $Q_{\theta} \cap [\theta_{\mathrm{SL}}, \max(\Theta)]$ as well as all strategic agents. Moreover, $f_{\theta_{\mathrm{SL}}}$ achieves lower loss than $f_{\theta'}$ on non-strategic agents in $Q_{\theta} \cap [\theta_{\mathrm{SL}}, \max(\Theta)]$, as desired. 

Thus, we have shown that $B(\theta)$ cannot contain any elements except for $\theta_{\mathrm{SL}}$ and $\theta^+$. Thus, all that remains is to compare these two classifiers.  Notice that these two classifiers behave the same on strategic agents with true features $x \not\in Q_{\theta}$ (this is because $\theta_{\mathrm{SL}} \in Q_{\theta}$, because by Lemma \ref{lemma:p0}, we know that $c(\theta_{\mathrm{SL}}, \theta) < c(\theta_{\mathrm{SL}}, \thetaPS^{\mathrm{SM}}) < 1$.). Moreover, they also behave the same on non-strategic agents not in $\theta_{\mathrm{SL}} \le x \le \theta$. Thus, we only need to focus on strategic agents with true features in $Q_{\theta}$ and non-strategic agents with $\theta_{\mathrm{SL}} \le x \le \theta$. Thus, we use the expression in the proof of Lemma \ref{lemma:p0} to see that: 
\begin{align*}
  &\text{DPR}(\theta, \theta^+) - \text{DPR}(\theta, \theta_{\mathrm{SL}}) \\
  &= p \cdot \mathbb{E}_{(x,y) \in \DBase}[\Indicator\{x \in [\theta_{\mathrm{SL}}, \theta]\} \mu(x)] + (1-p) \mathbb{E}_{(x,y) \in \DBase}[\Indicator\{x \in Q(\theta)\}  \mu(x)] \\
  &- p \mathbb{E}_{(x,y) \in \DBase}[\Indicator\{x \in [\theta_{\mathrm{SL}}, \theta]\} (1 - \mu(x))] - (1-p) \mathbb{E}_{(x,y) \in \DBase}[\Indicator\{x \in Q(\theta)\} (1 - \mu(x))]
    \\
  &= 2p  \mathbb{E}_{(x,y) \in \DBase}[\Indicator\{x \in [\theta_{\mathrm{SL}}, \theta]\} \mu(x)] + 2(1-p) \mathbb{E}_{(x,y) \in \DBase}[\Indicator\{x \in Q(\theta)\}  \mu(x)] \\
  &- p \mathbb{E}_{(x,y) \in \DBase}[\Indicator\{x \in [\theta_{\mathrm{SL}}, \theta]\}] - (1-p) \mathbb{E}_{(x,y) \in \DBase}[\Indicator\{x \in Q(\theta)\}].
\end{align*}
The relevant quantity is:
\[Z(p, \theta) := p \left( \mathbb{E}_{(x,y) \in \DBase}[I_{x \in [\theta_{\mathrm{SL}}, \theta]} (2 \mu(x) - 1)]\right) + (1-p) \left(\mathbb{E}_{(x,y) \in \DBase}[I_{x \in Q(\theta)}  (2 \mu(x) - 1)]\right) \]
We see that $B(\theta) = \left\{\theta_{SL}\right\}$ if and only if $Z(p, \theta) > 0$, $B(\theta) = \left\{ \theta^+ \right\}$ if and only if $Z(p, \theta) < 0$, and $B(\theta) = \left\{ \theta^+, \theta_{\mathrm{SL}} \right\}$ if and only if $Z(p, \theta) = 0$. 

\paragraph{Concluding the behavior of repeated risk minimization.} 
We wish to show that repeated risk minimization will oscillate between $\theta_{\mathrm{SL}}$ and $\tau(p)$, where $\tau(p)$ is the value such that $Z(p, \tau(p)) = 0$. To show this, it suffices to show that $Z(p, \theta)$ is increasing in $\theta$, and that $Z(p, \theta_{\mathrm{SL}}) < 0$ and $Z(p, \thetaPS^{\mathrm{SM}}) > 0$. Let $p_{\text{base}}$ be the pdf of $\DBase$. The derivative of the first term is: $p (2 \mu(\theta) - 1) p_{\text{base}}(\theta) > 0$, and the derivative of the second term is:
\[(1-p)(2 \mu(\theta) - 1) p_{\text{base}}(\theta) - (1-p)(2 \mu(\min(Q(\theta)) - 1) p_{\text{base}}(\min(Q(\theta))) > 0,\] as desired. The behavior at $\theta_{\mathrm{SL}}$ and $\thetaPS^{\mathrm{SM}}$ follows from the definition.

Now, we analyze how $\tau(p)$ changes with $p$. To see that $\tau(p)$ is decreasing in $p$, notice that $Z(p)$ is increasing in $p$ for all $\theta_{\mathrm{SL}} \le \theta \le \thetaPS^{\mathrm{SM}}$. As $p \rightarrow 0$, it is easy to see that $\tau(p) \rightarrow \thetaPS^{\mathrm{SM}}$. As $p \rightarrow 1$, it is easy to see that $\tau(p) \rightarrow \theta_{\mathrm{SL}}$. 
\end{proof}

Using the above results, we can conclude Proposition \ref{propo:stablepointsnotexist}. 
\begin{proof}[Proof of Proposition \ref{propo:stablepointsnotexist}]
When $p = 1$, we can apply Lemma \ref{lemma:p1}. When $p = 0$, we can apply Lemma \ref{lemma:p0} to see that a locally stable point exists. When $0 < p < 1$, we can apply Lemma \ref{lemma:p01} to see that no locally stable point exists. For the behavior of repeated risk minimization, we can apply Lemma \ref{lemma:rrm}. 
\end{proof}

\subsection{Formal Statements and Proofs of Proposition
\ref{prop:socialburdeninformal} and Corollary \ref{coro:socialburdeninformal}}
\label{app:socialburden}
We give formal statements of Proposition \ref{prop:socialburdeninformal} and Corollary \ref{coro:socialburdeninformal} using the technology of alternative microfoundations. Let $c$ be a valid cost function. First, we formalize expenditure monotonicity (Property \ref{property:SB}) in the language of alternative microfoundations. 
\begin{property}
\label{prop:expenditurerat}
Let $\Theta$ be a function class of threshold functions, and let $c$ be a cost function. A mapping $M \in \mathcal{M}$ satisfies \textit{expenditure monotonicity} if $c(\Response_t(x,\theta),x)\leq\gamma$ for every $\theta \in \Theta$ and every $t \in  \text{Image}(M)$, and if $f_{\theta}(\mathcal \Response_t(x;\theta)) = 1$, then $f_{\theta'}(\mathcal \Response_t(x;\theta')) = 1$ for all $\theta' \le \theta$. 
\end{property}

Let $\mathcal{M}^*$ be the set of maps $M$ such that every $t \in \cup_{(x,y) \in X} \text{supp}(M(x))$ satisfies expenditure monotonicity (Property \ref{prop:expenditurerat}) and such that Assumption \ref{assumption:gamingbehavior} is satisfied. Let $\mathscr{D}$ be the set of distribution maps $\DMap(\cdot; M)$ for $M \in \mathcal{M}^*$. 

\begin{proposition}[Formal Statement of Proposition \ref{prop:socialburdeninformal}]
\label{prop:socialburdenformal}
Consider Setup \ref{ex:stablepointsnotexist}. Let $\mathscr{D}$ be the class of distribution maps defined above. Then: 
\[\thetaPO(\DMap_{\mathrm{SM}}) \geq \thetaPO(\DMap),\]
where $\DMap_{\mathrm{SM}}$ denotes the distribution map given by standard microfoundations, and $\thetaPO(\DMap)$ denotes the minimal performatively optimal point associated with the distribution map $\DMap$.
\end{proposition}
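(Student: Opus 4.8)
The plan is to write the performative risk of every $\DMap\in\mathscr D$ as a one–dimensional integral against the sign–changing weight $1-2\mu(x)$, extract two monotonicity consequences of expenditure monotonicity, and show that no threshold strictly above the standard–microfoundations optimum can beat that optimum. \emph{First}, for $M\in\mathcal M^*$ I would introduce the acceptance probability $a_M(x,\theta):=\mathbb E_{t\sim M(x)}[f_\theta(\Response_t(x,\theta))]\in[0,1]$, well defined because Assumption~\ref{assumption:gamingbehavior} makes the type distribution independent of the label given $x$. Conditioning on the true feature $x$ and using $\mathbb P[y=1\mid x]=\mu(x)$ gives, for the performative risk $\PR_M(\theta):=\mathbb E_{(x,y)\sim\DMap(\theta;M)}[\mathds{1}\{y\neq f_\theta(x)\}]$,
\[\PR_M(\theta)=\mathbb P_{\DBase}[y=1]+\int_X a_M(x,\theta)\,(1-2\mu(x))\,p_X(x)\,dx,\]
with $p_X$ the $X$–marginal density of $\DBase$. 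Applying this to $M=M_{\mathrm{SM}}$, Lemma~\ref{lemma:br} and validity of $c$ give $a_{M_{\mathrm{SM}}}(x,\theta)=\mathds{1}\{x\ge q(\theta)\}$ up to a $\DBase$–null set, where $q(\theta):=\inf\{x\in X:c(x,\theta)\le\gamma\}\le\theta$ is continuous and strictly increasing where interior to $X$; since $\tau\mapsto\int_{x\ge\tau}(1-2\mu(x))p_X(x)\,dx$ has derivative $-(1-2\mu(\tau))p_X(\tau)$ and is therefore uniquely minimized at $\tau=\theta_{\mathrm{SL}}$, the map $\PR_{M_{\mathrm{SM}}}$ is uniquely minimized at the threshold $\bar\theta$ with $q(\bar\theta)=\theta_{\mathrm{SL}}$, i.e.\ $c(\theta_{\mathrm{SL}},\bar\theta)=\gamma$ and $\bar\theta>\theta_{\mathrm{SL}}$. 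Hence $\thetaPO(\DMap_{\mathrm{SM}})=\bar\theta$.

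\emph{Second}, I would record two facts about any $M\in\mathcal M^*$. (i) The monotonicity requirement in Property~\ref{prop:expenditurerat} says that for each type in the image of $M$, acceptance at $\theta$ forces acceptance at every $\theta'\le\theta$; averaging over $t\sim M(x)$ shows $\theta\mapsto a_M(x,\theta)$ is non–increasing. (ii) If an agent with true feature $x<\theta_{\mathrm{SL}}$ is accepted at some $\theta\ge\bar\theta$, then its manipulated point $x'=\Response_t(x,\theta)$ satisfies $x'\ge\theta$, so $\theta$ lies on the segment $[x,x']$ and the expenditure–constraint part of Property~\ref{prop:expenditurerat} together with validity of $c$ give $c(x,\theta)\le c(x,x')\le\gamma$; but $x<\theta_{\mathrm{SL}}<\bar\theta\le\theta$ places $\theta_{\mathrm{SL}}$ on the segment $[x,\theta]$, so by validity $c(x,\theta)>c(\theta_{\mathrm{SL}},\theta)\ge c(\theta_{\mathrm{SL}},\bar\theta)=\gamma$, a contradiction. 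Thus $a_M(x,\theta)=0$ whenever $x<\theta_{\mathrm{SL}}$ and $\theta\ge\bar\theta$.

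\emph{Third}, for $\theta\in\Theta$ with $\theta>\bar\theta$ I would subtract the integral formulas and split at $\theta_{\mathrm{SL}}$:
\[\PR_M(\theta)-\PR_M(\bar\theta)=\int_{x>\theta_{\mathrm{SL}}}\big(a_M(x,\theta)-a_M(x,\bar\theta)\big)(1-2\mu(x))\,p_X(x)\,dx,\]
the $x<\theta_{\mathrm{SL}}$ contribution vanishing by (ii), and $\{x=\theta_{\mathrm{SL}}\}$ being $\DBase$–null. On $\{x>\theta_{\mathrm{SL}}\}$ one has $1-2\mu(x)<0$ (as $\mu$ is strictly increasing with $\mu(\theta_{\mathrm{SL}})=1/2$) and $a_M(x,\theta)-a_M(x,\bar\theta)\le0$ by (i), so the integrand is $\ge0$ and $\PR_M(\theta)\ge\PR_M(\bar\theta)$. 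Consequently $\inf_{\theta\in\Theta}\PR_M(\theta)=\inf_{\theta\in\Theta,\ \theta\le\bar\theta}\PR_M(\theta)$, so — granting that a performative optimum exists, as the well–posedness of $\thetaPO(\DMap)$ presupposes — either a minimizer lies in $\{\theta\le\bar\theta\}$ or the infimum equals $\PR_M(\bar\theta)$ and $\bar\theta$ itself is a minimizer; in both cases the \emph{minimal} performative optimum satisfies $\thetaPO(\DMap)\le\bar\theta=\thetaPO(\DMap_{\mathrm{SM}})$. (One also checks $M_{\mathrm{SM}}\in\mathcal M^*$, so the comparison is genuinely within $\mathscr D$.)

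\emph{Main obstacle.} The conceptual core — the sign argument of the third step — is short; the work goes into the set-up and the degenerate cases that the ``mild assumptions'' are meant to exclude: existence of a performative optimum for a general $M\in\mathcal M^*$ (the integral form of $\PR_M$ need not be continuous in $\theta$, so I would either take existence as a hypothesis, matching the implicit assumption in the statement, or prove lower semicontinuity of $\PR_M$ on the compact set $\Theta\cap(-\infty,\bar\theta]$ and minimize there); well-definedness of $\bar\theta$ and $\bar\theta\in\Theta$ (i.e.\ $\gamma$ small enough for $c(\theta_{\mathrm{SL}},\cdot)=\gamma$ to have an interior solution and $\Theta$ rich enough to reach it); and the $\DBase$–null technicalities around cost-indifferent agents and decision boundaries. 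I would fix these as explicit hypotheses before running the three steps above.
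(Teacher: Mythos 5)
Your proposal is correct and follows essentially the same route as the paper's proof: the same split of the population at $\theta_{\mathrm{SL}}$, the same use of the expenditure constraint plus cost validity to show agents with $x<\theta_{\mathrm{SL}}$ are rejected at both $\bar\theta$ and any larger threshold, and the same use of the monotonicity clause of Property~\ref{prop:expenditurerat} (your fact (i), the paper's inclusion $S_2'\subseteq S_2''$) together with $\mu(x)\ge 1/2$ above $\theta_{\mathrm{SL}}$ to conclude that no $\theta>\thetaPO(\DMap_{\mathrm{SM}})$ can strictly beat $\thetaPO(\DMap_{\mathrm{SM}})$. Your acceptance-probability integral representation is just a cleaner formalization of the paper's set-based bookkeeping, and it additionally supplies the characterization $c(\theta_{\mathrm{SL}},\thetaPO(\DMap_{\mathrm{SM}}))=\gamma$ that the paper asserts as ``easy to see,'' while the existence and range caveats you flag are exactly the ``mild assumptions'' the paper leaves implicit.
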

\begin{proof}
For ease of notation, let $\theta_{\text{SL}}$ be the unique value such that $p(\theta_{\text{SL}}) = 0.5$. It is easy to see that $\theta' = \thetaPO(\DMap_{\text{SM}})$ is the unique point such that $c(\theta_{\text{SL}}, \theta') = 1$ and $\theta' > \theta_{\text{SL}}$. 

It suffices to show that for $\theta > \thetaPO(\DMap_{\text{SM}})$ and for any $\DMap \in \mathcal{D}$, it holds that $\PR(\theta) \le \PR(\thetaPO(\DMap_{\text{SM}}))$, where the performative risk is with respect to $\DMap$. 

First, let's consider the set of agents $S_1 := \left\{(t, x, y) \mid x < \theta_{\text{SL}} \right\}$. For $(t, x, y) \in S_1$, notice that $c(x, \theta) > c(x, \thetaPO(\DMap_{\text{SM}})) > c (\theta_{\text{SL}},\thetaPO(\DMap_{\text{SM}})) = c(\theta_{\text{SL}},\thetaPO(\DMap_{\text{\RDPI}})) \ge 1$. By the expenditure constraint, this means that these agents will not game on $f_{\theta}$ or $f_{\thetaPO(\DMap_{\text{SM}})}$: i.e, $\mathcal{R}_t(x, \theta) = x$ and $\mathcal{R}_t(x, \thetaPO(\DMap_{\text{SM}})) = x$ for $(t,x, y) \in S_1$. Thus, $f_{\thetaPO(\DMap_{\text{SM}})}(\mathcal{R}_t(x, \thetaPO(\DMap_{\text{SM}}))) = f_{\theta}(\mathcal{R}_t(x, \theta)) = 0$. The performative risk with respect to $\DMap$ is thus equivalent on $S_1$ for $f_{\theta}$ and $f_{\thetaPO(\DMap_{\text{SM}})}$.

Now, let's consider the remaining set of agents $S_2 := \left\{(t,x, y) \mid x \ge \theta_{\text{SL}} \right\}$. Let $S_2' \subseteq S_2$ be the set 
\[S_2' = \left\{(t,x, y) \in S_2 \mid f_{\theta}(\mathcal{R}_t(x, \theta)) = 1 \right\}.\] and 
\[S_2'' = \left\{(t,x,y) \in S_2 \mid f_{\thetaPO(\DMap_{\text{SM}})}(\mathcal{R}_t(x, \thetaPO(\DMap_{\text{SM}}))) = 1 \right\}.\] 
We claim that $S_2' \subseteq S_2''$. This is because of the second condition in expenditure monotonicity that if $x$ was labeled positively by $f_{\theta}$, then $x$ is also labeled positively by $f_{\thetaPO(\DMap_{\text{SM}})}$: in particular, we can thus conclude that $f_{\theta}(\mathcal{R}_t(x, \theta) = 1$ implies that $f_{\thetaPO(\DMap_{\text{SM}})}(\mathcal{R}_t(x, \thetaPO(\DMap_{\text{SM}}))) = 1$. 

Now, we claim that the performative risk with respect to $\DMap$ on $S_2$ is no better for $\theta$ than for $\thetaPO(\DMap_{\text{SM}})$. This follows from the fact that $S_2' \subseteq S_2''$, and the fact that $\mu(x) \ge 0.5$ for $(t, x, y) \in S_2$ coupled with Assumption \ref{assumption:gamingbehavior}. This completes the proof.
\end{proof}

\begin{corollary}[Formal Statement of Corollary \ref{coro:socialburdeninformal}]
\label{coro:socialburdenformal}
Consider Setup \ref{ex:stablepointsnotexist}. Let $\mathscr{D}$ be the class of distribution maps defined above. Then: 
\[\SocialBurden{\thetaPO(\DMap_{\mathrm{SM}})} \geq \SocialBurden{\thetaPO(\DMap)},\]
where $\DMap_{\mathrm{SM}}$ denotes the distribution map given by standard microfoundations, and $\thetaPO(\DMap)$ denotes the minimal performatively optimal point associated with the distribution map $\DMap$.
\end{corollary}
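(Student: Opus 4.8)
The plan is to reduce the corollary to the threshold ordering already established in Proposition~\ref{prop:socialburdenformal} together with a monotonicity property of the social burden functional in the one-dimensional threshold setting of Setup~\ref{ex:stablepointsnotexist}. Concretely, Proposition~\ref{prop:socialburdenformal} gives $\thetaPO(\DMap_{\mathrm{SM}}) \ge \thetaPO(\DMap)$ for every $\DMap \in \mathscr{D}$, so it suffices to show that $\theta \mapsto \SocialBurden{\theta}$ is nondecreasing; then applying this monotonicity to the two optimal thresholds (both burdens being computed against the \emph{same} base distribution $\DBase$) yields the claim.

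First I would record the explicit form of the social burden for threshold classifiers. Since $f_\theta(x) = \mathds{1}\{x \ge \theta\}$, the inner minimization $\min_{x' \in X}\{\costfn(x,x') \mid f_\theta(x') = 1\}$ equals $0$ whenever $x \ge \theta$ (take $x' = x$), and for $x < \theta$ it is attained at $x' = \theta$: any feasible competitor $x' > \theta$ has $\theta$ strictly on the segment connecting $x$ and $x'$, so the ``increases with distance'' clause of Assumption~\ref{assumption:validcost} gives $\costfn(x,\theta) < \costfn(x,x')$. Hence
\[\SocialBurden{\theta} = \mathbb{E}_{(x,y)\sim\DBase}\!\left[\costfn(x,\theta)\,\mathds{1}\{x < \theta\} \;\middle|\; y = 1\right].\]

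Next I would establish monotonicity by a pointwise comparison of integrands. Fix $\theta_1 \le \theta_2$ and consider any $(x,y)$ with $y = 1$. If $x \ge \theta_2$, both integrands vanish. If $\theta_1 \le x < \theta_2$, the $\theta_1$-integrand is $0$ while the $\theta_2$-integrand is $\costfn(x,\theta_2) \ge 0$. If $x < \theta_1$, then $\theta_1$ lies strictly on the segment connecting $x$ and $\theta_2$, so validity of $c$ gives $\costfn(x,\theta_1) < \costfn(x,\theta_2)$. In all three cases the $\theta_2$-integrand dominates, so taking conditional expectations yields $\SocialBurden{\theta_1} \le \SocialBurden{\theta_2}$. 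Combining this with the threshold ordering from Proposition~\ref{prop:socialburdenformal} completes the argument: $\SocialBurden{\thetaPO(\DMap_{\mathrm{SM}})} \ge \SocialBurden{\thetaPO(\DMap)}$ for every $\DMap \in \mathscr{D}$.

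I do not expect a genuine obstacle here, since the substantive work — determining which agents in each $\DMap \in \mathscr{D}$ game and to where — was done in Proposition~\ref{prop:socialburdenformal}. The only point requiring care is the pointwise step above: one must invoke the distance-monotonicity clause of Assumption~\ref{assumption:validcost} with the \emph{correct} interpolation point (it is $\theta_1$, not $x$ or $\theta_2$, that must lie on the segment) and check that the intermediate regime $\theta_1 \le x < \theta_2$ only strengthens the inequality. Thus the corollary is essentially a monotonicity argument layered on top of the preceding proposition.
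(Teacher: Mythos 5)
Your proposal is correct and follows the same route as the paper: invoke Proposition~\ref{prop:socialburdenformal} for the ordering of the optimal thresholds and then use monotonicity of $\SocialBurden{\theta}$ in $\theta$. The only difference is that the paper treats this monotonicity as immediate from the definition, whereas you verify it explicitly via the closed form of the inner minimum and Assumption~\ref{assumption:validcost}, which is a harmless (and sound) elaboration.
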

\begin{proof}
We apply Proposition \ref{prop:socialburdenformal} to see that $\thetaPO(\DMap_{\text{SM}}) \geq \thetaPO(\DMap)$. Now, we observe that it follows immediately from the definition of social burden that it is monotonic in the threshold $\theta$: in particular, the social burden increases with $\theta$. Thus, we can conclude that $\SocialBurden{\thetaPO(\DMap_{\mathrm{SM}})} \geq \SocialBurden{\thetaPO(\DMap)}$ as desired. 
\end{proof}

\section{Proofs for Section \ref{sec:agnostic}}

\subsection{Proof of Proposition \ref{prop:microfound}}\label{appendix:microfound}

We prove Proposition \ref{prop:microfound}. The intuition is that there is a response type for every possible agent response, and it remains to show that the appropriate choice of agent response types can ``shift the mass'' from $\DBase$ to $\DMap(\theta)$. In fact, $M$ only needs to map the population to two different response types. Now, we formally prove this result. 

\begin{proof}[Proof of Proposition \ref{prop:microfound}]
We prove Proposition \ref{prop:microfound} by construction and show that there is an $M$ that can microfound any distribution map.
We construct $M$ as follows. We construct response types $t_0$ and $t_1$, and define $M(x, 0) = t_0$ for all $x \in X$ and $M(x,1) = t_1$ for all $x \in X$. In other words, we associate agents with true label $0$ with the type $t_0$ and agents with true label $1$ with the type $t_1$.

In order to construct $t_0$ and $t_1$, we define the following probability measures over the measure space $X \subseteq \mathbb{R}^D$ equipped with the Borel sigma-algebra. We consider $\mu^0(\theta)$ to be the probability measure given by the distribution over $x$ when $(x,y) \in \DMap(\theta)$ and $y = 0$. We define $\mu^1(\theta)$ similarly. We let $\mu_{\text{XY}}^0$ be the probability measure given by the distribution $x$ where $(x,y) \in \DBase$ and $y = 0$, and we define $\mu_{\text{XY}}^1$ analogously. 

First, we claim that it suffices to prove that for each $\theta \in \Theta$ there is a measurable map $f_{0,\theta}: X \rightarrow X$ that maps the probability measure $\mu_{\text{XY}}^0$ to $\mu^0(\theta)$, and a measurable map $f_{1,\theta}: X \rightarrow X$ that maps $\mu_{\text{XY}}^1$ to $\mu^1(\theta)$. In this case, we can define $t_0$ to be given by $\Response_{t_0}(x, \theta) = f_{0, \theta}(x)$ and $t_1$ to be given by $\Response_{t_1}(x, \theta) = f_{1, \theta}(x)$. Let's now consider the distribution  given by $(\Response_{t}(x, \theta), y)$ where $(t,x,y) \sim \DTrue$. The condition distribution over $y = 0$ is given by $\mu^0(\theta)$ and the conditional distribution over $y = 1$ is given by $\mu^1(\theta)$, which means that the distribution over all is given by $\DMap(\theta)$, as desired. Moreover, the measurability requirements on $f_{0,\theta}$ and $f_{1,\theta}$ guarantee that Assumption \ref{assumption:measurability} is satisfied. 

Thus, it suffices to construct $f_{0,\theta}$ and $f_{1,\theta}$ for $\theta \in \Theta$ that satisfy the above conditions. To do this, we make use of Proposition 3 in \citep{G02}, which says that there exists a Borel mapping from any tight non-atomic measure to any other probability measure. Since the probability measure  associated to $\DBase$ is non-atomic, we see that $\mu_{\text{XY}}^0$ and $\mu_{\text{XY}}^1$ are non-atomic as desired, and so a Borel mapping from $\mu_{\text{XY}}^0$ to $\mu^0(\theta)$ exists and a Borel mapping from $\mu_{\text{XY}}^1$ to $\mu^1(\theta)$ exists. 
\end{proof}

\subsection{Proof of Proposition \ref{prop:1dsmooth}}\label{appendix:1dsmooth}
\begin{proof}[Proof of Proposition \ref{prop:1dsmooth}]
To prove Proposition \ref{prop:1dsmooth} we  show that $\DPR{\theta}{ \theta'}$ is continuous in $\theta$ and $\theta'$. We see that
\[\PRDecoupled{\theta}{ \theta'} = \int_{x' \ge \theta'} p_{\theta}((x', 0)) dx' + \int_{x' < \theta'} p_{\theta}((x', 1)) dx'. \]
Let's take a derivative with respect to $\theta'$ to obtain:
\[\DPR{\theta}{\theta'} = -p_{\theta}((\theta', 0)) + p_{\theta}((\theta',1)). \]
The first continuity requirement tells us that this is continuous in $\theta'$, and the second continuity requirement tells us that this continuous in $\theta$. 
\end{proof}

\subsection{Proof of Theorem \ref{lemma:existence}}
\label{app:proofsmacro}

We first recall the definition of the decoupled performative risk \citep{PZMH20}:
\[ \PRDecoupled{\theta}{ \theta'} := \mathbb{E}_{(x,y) \in \DMap(\theta)} \left[\mathds{1} \left(f_{\theta'}(x) \neq y\right) \right].\]
\noindent The gradient of the decoupled performative risk plays an important role in our analysis of locally stable points. In order to take derivatives at the boundary, we consider an open set $\Theta' \supset \Theta$ that is also bounded and convex, and assume there are classifiers associated with each $\theta \in \Theta'$, although the decision-maker only considers classifier weights in $\Theta$. We use the notation:
\[\DPR{\theta}{ \theta'} := \nabla_{\theta'} \PRDecoupled{\theta}{\theta'} = \nabla_{\theta'} \mathbb{E}_{(x,y)\sim \cD(\theta)} [\;\Indicator\{y\neq f_{\theta'}(x)\}] \]
to denote the gradient of the decoupled performative risk with respect to the second argument. 
To prove Theorem \ref{lemma:existence} we show that the continuity of the derivatives of the decoupled performative risk guarantees the existence of stable points under mixtures with non-strategic agents. 

\begin{proof}[Proof of Theorem \ref{lemma:existence}]
Our main technical ingredient is this proof is applying Brouwer's fixed point theorem on the map $G_{\text{RGD}}: \Theta \rightarrow \Theta$ given by $G_{\text{RGD}}(\theta) = \text{Proj}_{\Theta}(\theta + \eta \DPR{\theta}{\theta} )$. By assumption, we know that $\Theta$ is compact and convex. It thus suffices to show that the map $\theta \mapsto \text{Proj}_{\Theta}(\theta + \eta \DPR{\theta}{\theta} )$ is continuous. 
 
First, we show that aggregate risk smoothness implies that $\theta \mapsto \text{Proj}_{\Theta}(\theta + \eta \DPR{\theta}{\theta} )$ is a continuous map. By aggregate risk smoothness, we know that $\DPR{\theta}{ \theta}$ exists for all $\theta \in \Theta$. Moreover, for any $\theta \in \Theta$, aggregate risk smoothness tells us that:
\[\lim_{\theta' \rightarrow \theta} \norm{\DPR{\theta}{\theta} - \DPR{\theta'}{\theta'}} \le \lim_{\theta' \rightarrow \theta} \norm{\DPR{\theta}{\theta} - \DPR{\theta'}{\theta}} + \lim_{\theta' \rightarrow \theta} \norm{\DPR{\theta'}{\theta} - \DPR{\theta'}{\theta'}} .\] Thus, $\DPR{\theta}{ \theta}$ is continuous in $\theta$. Moreover, since the sum of continuous functions is continuous, this means that $\theta \mapsto \theta + \eta \DPR{\theta}{\theta}$ is continuous. Now, since projection onto a convex set is a contraction map, we can conclude that $\theta \mapsto \text{Proj}_{\Theta}(\theta + \eta \DPR{\theta}{\theta} )$ is continuous as desired. 

\end{proof}

\subsection{Proof of Proposition \ref{cor:mixtures}} 

We now prove Proposition \ref{cor:mixtures}. 
\begin{proof}[Proof of Proposition \ref{cor:mixtures}]
Since derivatives are linear, we can break $\DPR{\theta}{\theta}$ into a term for non-strategic agents and a term for strategic agents. Since the sum of two continuous function is continuous, it suffices to show that $\DPR{\theta}{\theta}$ exists and is continuous for non-strategic agents and for strategic agents. For strategic agents, this follows from aggregate risk smoothness. For non-strategic agents, since the (non-performative) risk $R(\theta) := \mathbb{E}_{(x,y) \in \DBase} \Indicator\{f_{\theta}(x) = y\}$ is differentiable in $\theta$ and $\DPR{\theta}{\theta} = \nabla_{\theta} R(\theta)$ is continuous in $\theta$ as desired. 
\end{proof}

\subsection{Proof of Proposition \ref{prop:samplecomplexity}}
We consider the 1-dimensional setting where $X \subseteq \mathbb{R}$ and $\Theta$ is the class of threshold functions. First, we show a bound on the performative risk in terms of the Kolmogrov-Sminorff (KS) distance between the true distribution map and estimated distribution map. To state this bound, we introduce the following notation. We use a subscript notation $\DMap_{S(\Theta_0,c)}(\theta; M)$ to denote the aggregate response distribution $\DMap(\theta; M)$ restricted to agents with true features $x \in S(\Theta_0,c)$, where $S(\Theta_0, c)$ is defined as in \eqref{eq:S}.  Let $\DMap^{0}_{S(\Theta_0, c)}(\theta; M)$ be the marginal distribution over $x$ of the conditional distribution of $(x, y) \sim \DMap_{S(\Theta_0, c)}(\theta; M)$ conditional on $y = 0$. We define $\DMap^{1}_{S(\Theta_0, c)}(\theta; M)$, $\DMap^{0}_{S(\Theta_0, c)}(\theta; \tilde{M})$, and $\DMap^{1}_{S(\Theta_0, c)}(\theta; \tilde{M})$ analogously.
\begin{lemma}
\label{lem:estdistmapKS}
Let $\Theta$ be a function class of threshold functions, and $c$ be an outcome-valid cost function. Suppose that $M, \tilde{M}$ restricted to the domain $(X \setminus S(\Theta_0, c)) \times Y$ are expenditure-constrained. Then, for any $\Theta_0\subseteq \Theta:\thetaPO(M)\in\Theta_0$, the predicted performative optima $\thetaPO(\tilde{M})$ satisfies\footnote{We note that the decision-maker must only search over $\Theta_0$ in  \eqref{eq:PO_M} when computing $\thetaPO(\tilde{M})$. In particular, they must compute $\argmin_{\theta\in\Theta_0} \mathbb E_{(x,y)\sim\cD(\theta;\tilde M)} \left[\mathds{1}\{y\neq f_\theta(x)\}\right]$ rather than $\argmin_{\theta\in\Theta} \mathbb E_{(x,y)\sim\cD(\theta;\tilde M)} \left[\mathds{1}\{y\neq f_\theta(x)\}\right]$.}: 
 \[\PR_M(\thetaPO(\tilde{M}) \le \PR_M(\thetaPO(M))) + 2  \xi\]
 where $\xi$ is defined to be 
 \[\sup_{\theta} (A_{\theta} + B_{\theta}) \]
 where 
 \begin{align*}
     A(\theta) := \mathbb{P}[x \in S(\Theta_0, c) \And y = 0 ] \KS\left(\DMap^{0}_{S(\Theta_0, c)}(\theta; M), \DMap^{0}_{S(\Theta_0, c)}(\theta; \tilde{M})\right) \\
     B(\theta) := \mathbb{P}[x \in S(\Theta_0, c) \And y = 1 ] \KS\left(\DMap^{1}_{S(\Theta_0, c)}(\theta; M), \DMap^{1}_{S(\Theta_0, c)}(\theta; \tilde{M})\right).
 \end{align*}
\end{lemma}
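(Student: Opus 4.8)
I would run the standard ``estimation error controls suboptimality'' argument. Write $\theta^\star := \thetaPO(M)$ and $\tilde\theta := \thetaPO(\tilde M)$, and split
\[
\PR_M(\tilde\theta) - \PR_M(\theta^\star)
= \big(\PR_M(\tilde\theta) - \PR_{\tilde M}(\tilde\theta)\big)
+ \big(\PR_{\tilde M}(\tilde\theta) - \PR_{\tilde M}(\theta^\star)\big)
+ \big(\PR_{\tilde M}(\theta^\star) - \PR_M(\theta^\star)\big).
\]
The middle term is $\le 0$ because $\tilde\theta$ minimizes $\PR_{\tilde M}$ over $\Theta_0$ and $\theta^\star\in\Theta_0$ by hypothesis. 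So the lemma reduces to the pointwise estimate $|\PR_M(\theta)-\PR_{\tilde M}(\theta)|\le \xi$ for every $\theta\in\Theta_0$, which bounds the first and third terms and produces the factor $2\xi$.

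\textbf{Localizing to the salient part.} The structural point is that agents whose true features lie outside $S(\Theta_0,c)$ contribute the same to $\PR_M(\theta)$ and $\PR_{\tilde M}(\theta)$ for every $\theta\in\Theta_0$. Indeed, from the definition of $S(\Theta_0,c)$ in \eqref{eq:S}, if $x\notin S(\Theta_0,c)$ then for every $\theta\in\Theta_0$ and every $x'$ with $c(x,x')\le\gamma$ one has $f_\theta(x')=f_\theta(x)$; combined with the expenditure constraint $c(x,\Response_t(x,\theta))\le\gamma$, which by assumption holds for both $M$ and $\tilde M$ on $(X\setminus S(\Theta_0,c))\times Y$, this gives $f_\theta(\Response_t(x,\theta)) = f_\theta(x)$. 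Hence the contribution of these agents to $\PR_M(\theta)$ equals $\mathbb{E}_{(x,y)\sim\DBase}\big[\mathds{1}\{x\notin S(\Theta_0,c)\}\,\mathds{1}\{y\neq f_\theta(x)\}\big]$, which does not depend on the mapping, and the identical expression arises for $\tilde M$; these cancel in $\PR_M(\theta)-\PR_{\tilde M}(\theta)$, leaving only the difference of the contributions of the salient part.

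\textbf{Rewriting the salient contribution via one-sided CDFs.} For a threshold classifier, $f_\theta(z)=1\iff z\ge\theta$, so the contribution of salient agents with true label $1$ to $\PR_M(\theta)$ is $\mathbb{P}[x\in S(\Theta_0,c),y=1]\cdot\DMap^{1}_{S(\Theta_0,c)}(\theta;M)\!\left(\{z:z<\theta\}\right)$, and that of salient agents with true label $0$ is $\mathbb{P}[x\in S(\Theta_0,c),y=0]\cdot\DMap^{0}_{S(\Theta_0,c)}(\theta;M)\!\left(\{z:z\ge\theta\}\right)$; the analogous formulas hold for $\tilde M$. Subtracting, each of the two resulting differences is a difference of one-sided CDF evaluations of the paired distributions at $\theta$, hence bounded in absolute value by the corresponding KS distance — here one uses that $\KS(P,Q)=\sup_t|F_P(t)-F_Q(t)|$ also controls the left limits $|F_P(\theta^-)-F_Q(\theta^-)|$, by passing to the limit along $t\uparrow\theta$. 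This gives $|\PR_M(\theta)-\PR_{\tilde M}(\theta)|\le A(\theta)+B(\theta)\le\xi$ for every $\theta\in\Theta_0$, and substituting into the decomposition above finishes the proof.

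\textbf{Main obstacle.} The only delicate part is the bookkeeping at the decision boundary: after strategic response, $\DMap^{y}_{S(\Theta_0,c)}(\theta;M)$ can carry a point mass exactly at $\theta$ (as it does for standard microfoundations), so one must consistently use $f_\theta(z)=1\iff z\ge\theta$ — legitimate since $\{x:f_\theta(x)=1\}$ is closed — and handle the left-continuous CDF evaluations carefully when invoking the KS bound. The minimality step for the middle term, the cancellation outside $S(\Theta_0,c)$, and measurability of the restricted conditional laws $\DMap^{y}_{S(\Theta_0,c)}(\theta;\cdot)$ are routine given the setup of Appendix~\ref{subsec:requirements} and the definition of $S(\Theta_0,c)$.
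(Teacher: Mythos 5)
Your proposal is correct and follows essentially the same route as the paper's proof: reduce to the pointwise bound $|\PR_M(\theta)-\PR_{\tilde M}(\theta)|\le\xi$ on $\Theta_0$ via the optimality of $\thetaPO(\tilde M)$ over $\Theta_0$, cancel the contribution of agents outside $S(\Theta_0,c)$ using the expenditure constraint, and bound the remaining label-split salient terms by the KS distances. Your extra remark on handling the point mass at the boundary (left limits of the CDF) is a careful touch the paper glosses over, but it does not change the argument.
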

\begin{proof}
Let $\PR(\theta; M)$ denote the performative risk at $\theta$ on $\DMap(\theta; M)$ and let $\PR(\theta; \tilde{M})$ denote the performative risk at $\theta$ on $\DMap(\theta; \tilde{M})$. 

First, we claim that it suffices to show that $|\PR(\theta; M) - \PR(\theta; \tilde{M})| \le \xi$ for all $\theta \in \Theta_0$. This is because the decision-maker only searches over $\Theta_0$ when computing $\thetaPO(\tilde{M})$ and the true optimal point $\thetaPO(M)$ lies in $\Theta_0$, and so if $|\PR(\theta; M) - \PR(\theta; \tilde{M})| \le \xi$ for all $\theta \in \Theta_0$, then 
\[\PR(\thetaPO(\tilde{M}); M) \le \PR(\thetaPO(\tilde{M}); \tilde{M}) + \xi \le \PR(\thetaPO(M); \tilde{M})  + \xi \le \PR(\thetaPO(M); M) + 2\xi,\]
as desired.

The remainder of the proof boils down to showing that  $|\PR(\theta; M) - \PR(\theta; \tilde{M})| \le \xi$ for all $\theta \in \Theta_0$. 
Notice that: 
\[|\PR(\theta; M) - \PR(\theta; \tilde{M})| = \left|\mathbb{E}_{(x,y) \sim \DMap(\theta; M)}[\;\Indicator\{y \neq f_{\theta}(x)\}] - \mathbb{E}_{(x, y) \sim \DMap(\theta; \tilde{M})}[\;\Indicator\{y \neq f_{\theta}(x)\}]\right|.\]

Let's let $\DTrue$ be the distribution of $(t, x, y)$ where $(x,y) \sim \DBase$ and $t \sim M(x,y)$. Similarly, let $\DApprox$ be the distribution of $(t, x, y)$ where $(x,y) \sim \DBase$ and $t \sim \tilde{M}(x,y)$. We can thus obtain that:
\[|\PR(\theta; M) - \PR(\theta; \tilde{M})| =  \left|\mathbb{E}_{(t, x, y) \sim \DTrue}[\;\Indicator\{y \neq f_{\theta}(\Response_t(x, \theta))\}] - \mathbb{E}_{(t, x, y) \sim \DApprox}[\;\Indicator\{y \neq f_{\theta}(\Response_t(x, \theta))\}]\right|.\]
\noindent Now, we claim that for any agent $(t,x)$ where $x\not\in S(\Theta_0, c)$ and for $t \in \text{supp}(\DTrue) \cup \text{supp}(\DApprox)$, it holds that $f_{\theta}(\Response_t(x, \theta)) = f_{\theta}(x)$ for every $\theta \in \Theta_0$. Note that since $M$ satisfies the expenditure constraint with respect to $c$, then we know that if $x\not\in S(\Theta_0, c)$, it holds that $f_{\theta}(\Response_t(x, \theta)) = f_{\theta}(x)$. This yields the desired statement. Thus we have that:
\small{
\begin{align*}
   &\left|\mathbb{E}_{(t, x, y) \sim \DTrue}[\;\Indicator\{y \neq f_{\theta}(\Response_t(x, \theta))\}] - \mathbb{E}_{(t, x, y) \sim \DApprox}[\;\Indicator\{y \neq f_{\theta}(\Response_t(x, \theta))\}]\right|  \\
   &\le  \left|\mathbb{E}_{(t, x, y) \sim \DTrue}[\;\Indicator\{y \neq f_{\theta}(\Response_t(x, \theta))\} \;\Indicator\{x \not\in S(\Theta_0, c) \}] - \mathbb{E}_{(t, x, y) \sim \DApprox}[\;\Indicator\{y \neq f_{\theta}(\Response_t(x, \theta))\}\;\Indicator\{x \not\in S(\Theta_0, c) \}]\right| \\
   &+ \left|\mathbb{E}_{(t, x, y) \sim \DTrue}[\;\Indicator\{y \neq f_{\theta}(\Response_t(x, \theta))\} \;\Indicator\{x \in S(\Theta_0, c) \}] - \mathbb{E}_{(t, x, y) \sim \DApprox}[\;\Indicator\{y \neq f_{\theta}(\Response_t(x, \theta))\}\;\Indicator\{x \in S(\Theta_0, c) \}]\right| \\
     &=  \left|\mathbb{E}_{(t, x, y) \sim \DTrue}[\;\Indicator\{y \neq f_{\theta}(\Response_t(x, \theta))\} \;\Indicator\{x \in S(\Theta_0, c) \}] - \mathbb{E}_{(t, x, y) \sim \DApprox}[\;\Indicator\{y \neq f_{\theta}(\Response_t(x, \theta))\}\;\Indicator\{x \in S(\Theta_0, c) \}]\right| \\
    &= \left|\mathbb{E}_{(x,y) \sim \DMap(\theta)}[\;\Indicator\{y \neq f_{\theta}(x)\} \;\Indicator\{x \in S(\Theta_0, c) \}] - \mathbb{E}_{(x,y) \sim \tilde{\DMap}(\theta)}[\;\Indicator\{y \neq f_{\theta}(x)\} \;\Indicator\{x \in S(\Theta_0, c) \}]\right|.
\end{align*}
}
\normalsize{}
We can break this into terms where $y = 0$ and terms where $y = 1$. Thus, it suffices to bound: 
\small{
\[E_1 := \left|\mathbb{E}_{(x,y) \sim \DMap(\theta; M)}[\;\Indicator\{y \neq f_{\theta}(x)\} \;\Indicator\{x \in S(\Theta_0, c) \} \;\Indicator\{y = 0\}] - \mathbb{E}_{(x,y) \sim \DMap(\theta; \tilde M)}[\;\Indicator\{y \neq f_{\theta}(x)\} \;\Indicator\{x \in S(\Theta_0, c) \} \;\Indicator\{y = 0\}] \right| \]}
\normalsize{}
and 
\normalsize{}
\small{
\[E_2 := \left|\mathbb{E}_{(x,y) \sim \DMap(\theta; M)}[\;\Indicator\{y \neq f_{\theta}(x)\} \;\Indicator\{x \in S(\Theta_0, c) \} \;\Indicator\{y = 1\}]- \mathbb{E}_{(x,y) \sim \DMap(\theta; \tilde M)}[\;\Indicator\{y \neq f_{\theta}(x)\} \;\Indicator\{x \in S(\Theta_0, c) \} \;\Indicator\{y = 1\}]\right|.\]}
\normalsize{}
It suffices to show that the first term is upper bounded by $A(\theta)$  and the second term is upper bounded by $B(\theta)$. Since these two bounds follow from analogous  arguments, we only present the proof of the first bound. 
\begin{align*}
    E_1 &= \mathbb{P}[x \in S(\Theta_0, c) \And y = 0 ]  \left|\mathbb{E}_{(x,y) \sim \DMap^{0}_{S(\Theta_0, c)}(\theta; M)}[\;\Indicator\{x \ge \theta\}] - \mathbb{E}_{(x,y) \sim \DMap^{0}_{S(\Theta_0, c)}(\theta; \tilde M)}[\;\Indicator\{x \ge \theta\}] \right|\\
    &= \mathbb{P}[x \in S(\Theta_0, c) \And y = 0 ]  \left|\mathbb{E}_{l \sim \DMap^{0}_{S(\Theta_0, c)}(\theta; M)}[\;\Indicator\{l \ge \theta\}] - \mathbb{E}_{l \sim \DMap^{0}_{S(\Theta_0, c)}(\theta; \tilde M)}[\;\Indicator\{l \ge \theta\}] \right| \\
    &\le \mathbb{P}[x \in S(\Theta_0, c) \And y = 0 ]  \KS\left(\DMap^{0}_{S(\Theta_0, c)}(\theta; M), \DMap^{0}_{S(\Theta_0, c)}(\theta; \tilde M)\right). 
\end{align*}
\end{proof}

\noindent
Now, we are ready to prove Proposition \ref{prop:samplecomplexity}. 

\begin{proof}[Proof of Proposition \ref{prop:samplecomplexity}]
Let $\Theta_{\text{net}}$ be an $\epsilon$ net of $\Theta_{0}$. The decision-maker uses the agent response oracle as follows. For each $\theta \in \Theta_{\text{net}}$, they can generate $n_0$ samples as follows: draw a sample $(x,y) \sim \DBase$ conditioned on $y = 0$. If $x \in S(\Theta_0, c)$, then query the agent response oracle on $x$ at $\theta$. It is easy to see that these samples are distributed as $n_0$ independent samples from $\DMap^{0}_{S(\Theta_0, c)}(\theta)$. Similarly, the decision-maker uses the agent response oracle to draw $n_1$ samples that are distributed as $n_1$ independent samples from $\DMap^{1}_{S(\Theta_0, c)}(\theta)$. (We will specify the values of $n_0$ and $n_1$ later.) 

First, we define a distribution map $\tilde{\DMap}$ using these samples and the base distribution. Let's define $D_0(\theta)$ to be the empirical distribution of the $n_0$ samples, and let $D_1(\theta)$ be the empirical distribution of the $n_1$ samples. Let $D'(\theta)$ be the distribution given by a mixture of $(x,0)$ where $x \sim D_0(\theta)$ with probability $\mathbb{P}_{\DBase}[y =0 \mid x \in S(\Theta_0, c)]$ and $x \sim D_1(\theta)$ with probability $\mathbb{P}_{\DBase}[y =1 \mid x \in S(\Theta_0, c)]$. Let $D''(\theta)$ be the distribution given by $(x,y)$ drawn from the conditional distribution of $\DBase$ given $x \not\in S(\Theta_0,c)$. We let $\tilde{\DMap}$ be the distribution given by a mixture of $D'(\theta)$ with probability $\mathbb{P}_{\DBase}[x \in S(\Theta_0, c)]$ and $D''(\theta)$ with probability $1 -\mathbb{P}_{\DBase}[x \in S(\Theta_0, c)]$.

We can microfound $\tilde{\DMap}$ with a map $\tilde{M}$ as follows. Let $\tilde{M}(x,y) = x$ when $x \not\in S(\Theta_0,c)$. Let $\tilde{M}$ on $S(\Theta_0,c) \times Y$ be defined in such any way it microfounds $D'(\theta)$ (this is possible because of Proposition \ref{prop:microfound}). It is easy to see that $\tilde{M}$ microfounds $\tilde{\DMap}$ and that $\tilde{M}$ restricted to the domain $(X \setminus S(\Theta_0, c)) \times Y$ is expenditure-constrained. This means that we can apply Lemma \ref{lem:estdistmapKS}. 

Now, we bound the performative risk $\PR(\thetaPO(\tilde{M}))$, where:
\[\thetaPO(\tilde{M}) =\argmin_{\theta\in\Theta_0} \mathbb E_{(x,y)\sim\DMap(\theta; \tilde{M})} \left[\Indicator\{y\neq f_\theta(x)\}\right] = \argmin_{\theta\in\Theta_{\text{net}}} \mathbb E_{(x,y)\sim\DMap(\theta; \tilde{M})} \left[\Indicator\{y\neq f_\theta(x)\}\right].\] 
In order to apply Lemma \ref{lem:estdistmapKS}, we need to bound:
\begin{equation}
\label{eq:mainterm}
    \sup_{\theta \in \Theta_{\text{net}}} \left\{A(\theta) + B(\theta) \right\}
\end{equation}
where:
\begin{align*}
     A(\theta) := \mathbb{P}[x \in S(\Theta_0, c) \And y = 0 ] \cdot\KS\left(\DMap^{p,0}_{S(\Theta_0, c)}(\theta; M), \DMap^{p,0}_{S(\Theta_0, c)}(\theta; \tilde{M})\right) \\
     B(\theta) := \mathbb{P}[x \in S(\Theta_0, c) \And y = 1 ] \cdot\KS\left(\DMap^{p,1}_{S(\Theta_0, c)}(\theta; M), \DMap^{p,1}_{S(\Theta_0, c)}(\theta; \tilde{M})\right).
\end{align*}
To bound \eqref{eq:mainterm}, we union bound over $\Theta_{\text{net}}$. This set has cardinality $O(1/\epsilon)$. Notice that with probability $\ge 1- \alpha$, we know that:
\[\KS\left(\DMap^{p,0}_{S(\Theta_0, c)}(\theta; M), \DMap^{p,0}_{S(\Theta_0, c)}(\theta; \tilde{M})\right) \le \sqrt{\frac{\ln(2/\alpha)}{2n_0}}.\]
\[\KS\left(\DMap^{p,1}_{S(\Theta_0, c)}(\theta; M), \DMap^{p,1}_{S(\Theta_0, c)}(\theta; \tilde{M})\right) \le \sqrt{\frac{\ln(2/\alpha)}{2n_0}}.\]
We can now set $\alpha = \Theta(\epsilon / 100)$ in the previous result to obtain that with probability $\ge 99/100$, the expression in \eqref{eq:mainterm} is bounded by: 
\[E := O\left(\mathbb{P}_{\DBase}[x \in S(\Theta_0, c) \wedge y = 0] \sqrt{\frac{\ln(2/\epsilon)}{2n_0}}\right) + O\left(\mathbb{P}_{\DBase}[x \in S(\Theta_0, c) \wedge y = 1] \sqrt{\frac{\ln(2/\epsilon)}{2n_1}}\right).\]
We can now apply Lemma \ref{lem:estdistmapKS} to $\Theta_{\text{net}}$ to see that:
\[\PR(\thetaPO(\tilde{M})) \le E + \min_{\theta\in \Theta_{\text{net}}} \mathbb E_{(x,y)\sim\cD(\theta; M)} \left[\Indicator\{y\neq f_\theta(x)\}\right], \]

Now, let's use the Lipschitz requirement on the distribution map to move to the set $\Theta_{0}$. Let's consider a distribution map $\DMap'$ that is defined as follows: for $\theta \in \Theta_{\text{net}}$, we take $\DMap'(\theta) := \DMap(\theta)$ , and for $\theta \not\in \Theta_{\text{net}}$, we take $\DMap'(\theta) := \DMap(\theta')$ where $\theta'$ is the closest element in $\Theta_{\text{net}}$ to $\theta$. Now, let's apply Lemma \ref{lemma:TVbound} to $\DMap$ and $\DMap'$ on $\Theta_0$ to obtain that: 
\begin{align*}
\min_{\theta\in\Theta_{\text{net}}} \mathbb E_{(x,y)\sim\cD(\theta; M)} \left[\Indicator\{y\neq f_\theta(x)\}\right]) &\le \epsilon + \min_{\theta\in \Theta_{0}} \mathbb E_{(x,y)\sim\cD(\theta; M)} \left[\Indicator\{y\neq f_\theta(x)\}\right] \\
&= \epsilon + \min_{\theta\in \Theta} \mathbb E_{(x,y)\sim\cD(\theta; M)} \left[\Indicator\{y\neq f_\theta(x)\}\right].
\end{align*}
This means that 
\[\PR(\thetaPO(\tilde{M}))  \le E + \epsilon + \PR(\thetaPO(M)).  \]

Thus, it suffices to bound $E$ and set $n_0$ and $n_1$ appropriately. Suppose that
\[n_0 = \Theta\left(\mathbb{P}_{\DBase}[x \in S(\Theta_0, c) \wedge y = 0]^2 \frac{\ln(1/\epsilon)}{2\epsilon^2}  \right)\] and 
\[n_1 = \Theta\left(\mathbb{P}_{\DBase}[x \in S(\Theta_0, c) \wedge y = 1]^2 \frac{\ln(1/\epsilon)}{2\epsilon^2} \right).\] Plugging in these expressions into the expression for $E$, we obtain the desired bounds. Moreover, notice that the total number of queries to the oracle is $\Theta(1/\epsilon) \cdot (n_0 + n_1) \le \Theta\left(\mathbb{P}_{\DBase}[x \in S(\Theta_0, c)]^2 \frac{\ln(1/\epsilon)}{2\epsilon^3} \right)$.

\end{proof}

\section{Proofs for Section \ref{sec:framework}}

\subsection{Proof of Proposition \ref{thm:continuous}}\label{appendix:continuous}
\begin{proof}[Proof of Proposition \ref{thm:continuous}]
We use the following notation for this proof. Let's extend the cost function to be defined and valid on all of $\mathbb{R}$ rather than just $X$. For $x \in X$, let's use the notation $l_x \in \mathbb{R}$ to denote the unique value such that $l_x < x$ and $c(l_x, x) = 1$. Similarly, let $u_x \in \mathbb{R}$ denote the unique value such that $u_x > x$ and $c(x, u_x) = 1$. These values are unique by the definition of a valid cost function.

Fix $\sigma \in (0,\infty)$, and $x' \in X$. Let's characterize the agents who will change their features to $x'$ when the threshold is $\theta$. Either the agents' true features are equal to $x'$ and their perception function $P(\theta) \not\in (x', u_{x'}]$, or the agents' perception function $P(\theta) = x'$ and their true features $x$ are in $[l_{x'}, x']$. Since the base distribution and the noise distribution are continuous, this means that there are no point masses in the distribution. To see that a probability density function exists everywhere and is continuous, let's compute the density. Let $p_{\text{base}}$ denote the pdf of the base distribution (which is assumed to exist and be continuous since $\DBase$ is a continuous distribution), and let $p_{\text{noise}}$ denote the pdf of $D$ (which is continuous since it is the pdf of a gaussian). Notice that the probability density of $\DMap(\theta)$ at $(x', y')$ is 
\[p_{\text{base}}((x',y')) \cdot \mathbb{P}_{D}[\eta \not\in (x' - \theta, u_{x'} - \theta)] + p_{\text{noise}}(x' - \theta) \cdot \mathbb{P}_{\DBase}[x \in [l_{x'}, x'], y = y'].\]

This is continuous in $x'$ because $u_{x'}$ and $l_{x'}$ are continuous in $x'$. Moreover, this is nonzero on all $x'$ because for all $x' \in X$, we see that $p_{\text{base}}((x', y')) > 0$ and $\mathbb{P}_{D}[\eta \not\in (x' - \theta, u_{x'} - \theta)] > 0$ as well.

Now, we show aggregate smoothness. We see that the probability density $p_{\theta}((x', y'))$ at $(x',y')$ is continuous in $x'$ because each term is continuous in $x'$. Similarly, we see that this is continuous in $\theta$ because each term is continuous in $\theta$. By Proposition \ref{prop:1dsmooth}, this implies aggregate smoothness. 
\end{proof}

\subsection{Proof of Lemma \ref{lemma:tvlipschitz}}
\begin{proof}[Proof of Lemma \ref{lemma:tvlipschitz}]
For a given classifier $f_{\theta}$, consider the product distribution $\mathcal{D}_{\text{prod}}(\sigma, \theta)$ of the base distribution $\DBase$ and the multivariate gaussian distribution $\mathcal{N}(\bf \theta, \sigma \cdot I)$. This is a distribution over $X \times \mathbb{R}^d$ that will describe the distribution over noise vectors and features vectors. That is, an agent $A$ with features $x_A$ and noise $\eta_A$ corresponds to $(x_A, \theta + \eta_A)$. If we apply the function in \eqref{eq:RFP} to  $\mathcal{D}_{\text{prod}}$ so that $(x_A, \theta'_A) \mapsto \argmax_{x'\in \mathbb{R}^d} \; \left[\gamma \cdot f_{\theta'_A}(x') - \costfn(x_A, x')\right]$, then it is easy to see that we obtain the distribution $\DMap(\theta)$. 

Since the total variation distance can only decrease when we apply a function to the distributions, we know that $\TV(\DMap(\theta_1), \DMap(\theta_2)) \le \TV(\mathcal{D}_{\text{prod}}(\sigma, \theta_1), \mathcal{D}_{\text{prod}}(\sigma, \theta_2))$. Thus, it suffices to bound $\TV(\mathcal{D}_{\text{prod}}(\sigma, \theta_1), \mathcal{D}_{\text{prod}}(\sigma, \theta_2))$. Using the properties of product distributions, we see that:
\begin{align*}
    \TV(\mathcal{D}_{\text{prod}}(\sigma, \theta_1), \mathcal{D}_{\text{prod}}(\sigma, \theta_2)) &\le \TV(\mathcal{N}(\bf \theta_1, \sigma \cdot I), \mathcal{N}(\bf \theta_2, \sigma \cdot I)) \\
    &\le \frac{1}{2 \sigma} \norm{\theta_1 - \theta_2}_2.
\end{align*}
Now, the result follows from a bound on the total variation distance between two multivariate gaussians (e.g. see Proposition 2.1 in \cite{TVbound})
\end{proof}

\subsection{Social burden of noisy response in general}
We show that for any valid cost function, noisy response results in an optimal point with no higher social burden than the optimal point deduced from standard microfoundations. 
\begin{proposition}
\label{prop:expenditurerationality}
Let $\sigma \in (0, \infty)$, and let $c$ be a valid cost function. Consider a 1-dimensional setting where $X \subseteq \mathbb{R}$ and $\Theta$ is a function class of threshold functions. Then, the following holds: 
\begin{align*}
    &\quad\quad\quad\thetaPO(M_{\mathrm{SM}}) \geq \thetaPO(\Fuzzy{\sigma})\\
    &\SocialBurden{\thetaPO(M_{\mathrm{SM}})} \geq \SocialBurden{\thetaPO(\Fuzzy{\sigma})},
\end{align*}
where $M_{\mathrm{SM}}$ is the mapping induced by standard microfoundations.
\end{proposition}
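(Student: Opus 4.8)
The plan is to reduce the claim to what is already established for the class $\mathscr{D}$ in Appendix~\ref{app:socialburden}, by showing that the noisy response mapping $\Fuzzy{\sigma}$ belongs to $\mathcal{M}^*$. Concretely, it suffices to verify two things: (i) every type $t\in T_{\text{noisy}}$ in the image of $\Fuzzy{\sigma}$ satisfies expenditure monotonicity (Property~\ref{prop:expenditurerat}, equivalently Property~\ref{property:SB}) with respect to the given valid cost function $c$ and the threshold class $\Theta$; and (ii) $\Fuzzy{\sigma}$ obeys Assumption~\ref{assumption:gamingbehavior}. Claim (ii) is immediate, since by construction the law of $\eta_t$ under $t\sim\Fuzzy{\sigma}(x,y)$ is $\mathcal N(0,\sigma^2)$ independently of $(x,y)$, so $\Fuzzy{\sigma}(x,0)$ and $\Fuzzy{\sigma}(x,1)$ are identically distributed. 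Granting (i), we get $\DMap(\noargument;\Fuzzy{\sigma})\in\mathscr{D}$, and then Proposition~\ref{prop:socialburdenformal} yields $\thetaPO(M_{\mathrm{SM}})\ge\thetaPO(\Fuzzy{\sigma})$, while Corollary~\ref{coro:socialburdenformal}, whose proof only uses that $\SocialBurden{\theta}$ is nondecreasing in the threshold $\theta$, yields $\SocialBurden{\thetaPO(M_{\mathrm{SM}})}\ge\SocialBurden{\thetaPO(\Fuzzy{\sigma})}$. Throughout I work under the standing conditions of Setup~\ref{ex:stablepointsnotexist}, which is what makes $\thetaPO(\cdot)$ well-defined and lets Proposition~\ref{prop:socialburdenformal} apply.

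The substance is claim (i), which I would prove after first characterizing $\Response_t(x,\theta)$ for a type-$t$ agent with noise value $\eta_t$. Since $f_{\theta+\eta_t}$ is again a threshold function, applying Lemma~\ref{lemma:br} to the classifier $f_{\theta+\eta_t}$ shows the agent either stays at $x$ or moves exactly to the perceived boundary $\theta+\eta_t$; comparing the two utilities in \eqref{eq:RFP} shows the agent moves precisely when $x$ lies on the negative side of $\theta+\eta_t$ and $c(x,\theta+\eta_t)\le\gamma$ (the assumption $c(x,x')>\gamma$ for $x'\in\partial X'$ ensures the agent never relocates to the boundary of $X'$, and ties are broken toward staying). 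Consequently the expenditure $c(x,\Response_t(x,\theta))$ equals $0$ when the agent stays and $c(x,\theta+\eta_t)\le\gamma$ when it moves, which is exactly part~a) of Property~\ref{prop:expenditurerat}.

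For part~b) I would use the resulting characterization: $f_\theta(\Response_t(x,\theta))=1$ holds iff either $x\ge\theta$, or $\eta_t\ge 0$ and $c(x,\theta+\eta_t)\le\gamma$ (in the second case the agent relocates to $\theta+\eta_t\ge\theta$). Both alternatives are downward closed in $\theta$: if $x\ge\theta$ then $x\ge\theta'$ for every $\theta'\le\theta$; and if $\eta_t\ge 0$ with $c(x,\theta+\eta_t)\le\gamma$, then for $\theta'\le\theta$ either $x\ge\theta'$, or $x<\theta'\le\theta'+\eta_t\le\theta+\eta_t$, so monotonicity of $c$ in distance along the connecting segment (Assumption~\ref{assumption:validcost}) gives $c(x,\theta'+\eta_t)\le c(x,\theta+\eta_t)\le\gamma$. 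In either case $f_{\theta'}(\Response_t(x,\theta'))=1$, which is part~b). Note the argument is entirely pointwise in $\eta_t$, so it does not use Gaussianity and would go through for any noise model; this matches the ``any valid cost function'' generality of the statement.

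The main obstacle I anticipate is the bookkeeping in the response characterization: carefully tracking the sign of $\theta+\eta_t$ relative to $x$, the degenerate cases where the perceived boundary is not attainable inside $X'$, and fixing a tie-breaking rule consistent with Property~\ref{prop:expenditurerat} (a mild point, since that property asks only for $\le\gamma$). Everything after that is routine, and the two inequalities drop out of Proposition~\ref{prop:socialburdenformal} and Corollary~\ref{coro:socialburdenformal}.
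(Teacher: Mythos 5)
Your proposal is correct and follows essentially the same route as the paper: the paper also reduces to Proposition~\ref{prop:socialburdenformal} (and the monotonicity of social burden in the threshold) by verifying that $\Fuzzy{\sigma}$ satisfies Assumption~\ref{assumption:gamingbehavior} and expenditure monotonicity, using exactly your characterization that an agent with noise $\eta_t$ either stays put or moves to the perceived boundary $\theta+\eta_t$, and is positively classified iff $x\ge\theta$ or ($\eta_t\ge 0$ and $c(x,\theta+\eta_t)\le\gamma$). Your explicit downward-closedness argument via the monotonicity of $c$ only fills in a step the paper leaves implicit.
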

\begin{proof}
By Proposition \ref{prop:socialburdenformal}, it suffices to show that $\Fuzzy{\sigma}$ satisfies expenditure monotonicity and Assumption \ref{assumption:gamingbehavior}. The fact that $\Fuzzy{\sigma}$ satisfies Assumption \ref{assumption:gamingbehavior} follows from its definition. For expenditure monotonicity, note that the first condition follows from the fact that the optimization problem in \eqref{eq:RFP} tells us that fuzzy perception agents never exceed their utility of a positive outcome from manipulation expenditure. We now show that the second condition is satisfied. Note that each agents' perception function takes the form $P(\theta) = \theta + \eta$ for some fixed $\eta$. Thus, any given agent either consistently overshoots or consistently undershoots the threshold. If $\eta < 0$, then the agent will only be positively classified if and only if $\theta \le x$ where $x$ are the agent's true features. If $\eta > 0$, then the agent will be positively classified if and only if $c(x, \theta + \eta) \le 1$ or $\theta \le x$. This proves the desired statement. 
\end{proof}

\subsection{Proof of Proposition \ref{cor:socialburden}}

\begin{proof}[Proof of Proposition \ref{cor:socialburden}]
By Proposition \ref{prop:expenditurerationality}, we see that $\thetaPO(\DMap_{\text{SM}}) \ge \thetaPO(\DMap)$. It thus suffices to show that $\thetaPO(\DMap_{\text{SM}}) > \thetaPO(\DMap)$. To show this, it suffices to show that the derivative of the performative risk exists and is nonzero at $\thetaPO(\DMap_{\text{SM}})$. 

Like in the proof of Theorem \ref{thm:continuous}, we use the notation $l_x$, $u_x$, $p_{\text{base}}$, and $\mathbb{P}_{D}$. By the expenditure constraint, we know that agents with true features $x \le l_{\theta}$ will all be classified as $0$, and so their net contribution to the performative risk is $\int_{-\infty}^{l_{\theta}}
 p_{\text{base}}((x, 1))dx$. By the properties of noisy response, we know that agents with true features $x \ge \theta$ will be classified as $1$, so  their net contribution to the performative risk is $\int_{-\infty}^{l_{\theta}}
 p_{\text{base}}((x, 1))dx$. For agents with true features $x \in (\theta - 1, \theta)$, agents will be classified as $1$ if and only if they manipulate features to $x' \ge \theta$. This will happen if and only if they perceive the threshold to be in the range $[\theta, u_x]$. This will happen if and only if their noise vector $\eta$ is in $[0, u_{x} - \theta]$. Putting this all together, we see that the performative risk is thus equal to:
\small{
\begin{align*}
  \PR(\theta) &= \int_{\theta}^{\infty}
 p_{\text{base}}((x, 0))dx + \int_{-\infty}^{\theta - 1}
 p_{\text{base}}((x, 1))dx + \int_{\theta-1}^{\theta}  p_{\text{base}}((x', 0)) \mathbb{P}_{D}[\eta \in [0, u_{x'} - \theta]] dx' \\
 &+ \int_{\theta-1}^{\theta}  p_{\text{base}}((x', 1)) \mathbb{P}_{D}[\eta \not\in [0, u_{x'} - \theta]] dx' \\
 &= \int_{\theta}^{\infty}
 p_{\text{base}}((x, 0))dx + \int_{-\infty}^{\theta - 1}
 p_{\text{base}}((x, 1))dx + \int_{\theta-1}^{\theta}  p_{\text{base}}((x', 0)) \mathbb{P}_{D}[\eta \in [0, x' + 1 - \theta]] dx' \\
 &+ \int_{\theta-1}^{\theta}  p_{\text{base}}((x', 1)) \mathbb{P}_{D}[\eta \not\in [0, x' + 1 - \theta]] dx' \\
  &= \int_{\theta}^{\infty}
 p_{\text{base}}((x, 0))dx + \int_{-\infty}^{\theta - 1}
 p_{\text{base}}((x, 1))dx +  \int_{0}^{1}  p_{\text{base}}((\theta -1 + x, 0)) \mathbb{P}_{D}[\eta \in [0, x]] dx \\
 &+ \int_{0}^{1}  p_{\text{base}}((\theta - 1 + x, 1)) \mathbb{P}_{D}[\eta \not\in [0, x]] dx \\
   &= \int_{\theta}^{\infty}
 p_{\text{base}}((x, 0))dx + \int_{-\infty}^{\theta - 1}
 p_{\text{base}}((x, 1))dx + \mathbb{P}_{\DBase}[x \in (\theta -1, \theta), y = 0] \\
 &+ \int_{0}^{1}  (p_{\text{base}}((\theta - 1 + x, 1)) - p_{\text{base}}((\theta - 1 + x, 0)))  \mathbb{P}_{D}[\eta \not\in [0, x]] dx. 
\end{align*}}

\normalsize{}

Let's write $\int_{0}^{1}  (p_{\text{base}}((\theta - 1 + x, 1)) - p_{\text{base}}((\theta - 1 + x, 0)))  \mathbb{P}_{D}[\eta \not\in [0, x]] dx$ in a slightly different form. 
\begin{align*}
    &\int_{0}^{1}  (p_{\text{base}}((\theta - 1 + x, 1)) - p_{\text{base}}((\theta - 1 + x, 0)))  \mathbb{P}_{D}[\eta \not\in [0, x]] dx \\
    &=  (\mathbb{P}_{D}[\eta \in [-\infty,0]] + \mathbb{P}_{D}[\eta \in [1, \infty]]) \int_{0}^{1}  (p_{\text{base}}((\theta - 1 + x, 1)) - p_{\text{base}}((\theta - 1 + x, 0)))  dx \\
    &+ \int_{0}^{1}  (p_{\text{base}}((\theta - 1 + x, 1)) - p_{\text{base}}((\theta - 1 + x, 0))) \mathbb{P}_{D}[\eta \in [x, 1]]  dx\\
     &=  (\mathbb{P}_{D}[\eta \in [-\infty,0]] + \mathbb{P}_{D}[\eta \in [1, \infty]]) (\mathbb{P}_{\DBase}[x \in (\theta - 1, \theta), y =1] - \mathbb{P}_{\DBase}[x \in (\theta - 1, \theta), y =0]) \\
    &+ \int_{0}^{1}  (p_{\text{base}}((\theta - 1 + x, 1)) - p_{\text{base}}((\theta - 1 + x, 0))) \mathbb{P}_{D}[\eta \in [x, 1]]  dx. 
\end{align*}

We can rewrite: 
\begin{align*}
    &\int_{0}^{1}  (p_{\text{base}}((\theta - 1 + x, 1)) - p_{\text{base}}((\theta - 1 + x, 0))) \mathbb{P}_{D}[\eta \in [x, 1]]  dx \\
    &= \int_{0}^{1} \int_{x}^1 (p_{\text{base}}((\theta - 1 + x, 1)) - p_{\text{base}}((\theta - 1 + x, 0))) p_{\text{noise}}(z)  dz dx \\
    &=  \int_{0}^1 p_{\text{noise}}(z)   \int_{0}^{z}   (p_{\text{base}}((\theta - 1 + x, 1)) - p_{\text{base}}((\theta - 1 + x, 0))) dx dz \\
    &= \int_{0}^1 p_{\text{noise}}(z)   (\mathbb{P}_{\DBase}[x \in ((\theta - 1,\theta - 1 + z)), y = 1] - \mathbb{P}_{\DBase}[x \in ((\theta - 1,\theta - 1 + z))), y = 0]) dz \\
\end{align*}

When we take a derivative with respect to $\theta$, we obtain:
\small{
\begin{align*}
 \pd{\PR(\theta)}{\theta} &= -p_{\text{base}}((\theta, 0)) +  p_{\text{base}}((\theta-1, 1)) -p_{\text{base}}((\theta-1, 0)) + p_{\text{base}}((\theta, 0)) \\
   &+ (\mathbb{P}_{D}[\eta \in [-\infty,0]] + \mathbb{P}_{D}[\eta \in [1, \infty]]) (p_{\text{base}}((\theta, 1)) - p_{\text{base}}((\theta - 1, 1)) - p_{\text{base}}((\theta, 0)) + p_{\text{base}}((\theta - 1, 0))) \\
   &+ \int_{0}^1 p_{\text{noise}}(z)   (p_{\text{base}}((\theta-1+z, 1)) - p_{\text{base}}((\theta-1, 1)) - p_{\text{base}}((\theta-1+z, 0)) +p_{\text{base}}((\theta-1, 0))) dz.
\end{align*}}
\normalsize{}

Let's analyze this expression at $\theta = \thetaPO(\DMap_{\text{SM}})$. By the assumptions on the cost function, and using that $\theta_{\text{SL}} + 1 \in \Theta \cap X$, we see that $\thetaPO(\DMap_{\text{SM}}) = \theta_{\text{SL}} + 1$, so $\theta - 1 = \theta_{\text{SL}}$. This means that $p_{\text{base}}((\theta-1, 1))  -p_{\text{base}}((\theta-1, 0)) = p_{\text{base}}((\theta_{\text{SL}}, 1))  -p_{\text{base}}((\theta_{\text{SL}}, 0)) = 0$.
Thus, the expression simplifies to:
\begin{align*}
   \pd{\PR(\theta)}{\theta} &=  (\mathbb{P}_{D}[\eta \in [-\infty,0]] + \mathbb{P}_{D}[\eta \in [1, \infty]]) (p_{\text{base}}((\theta, 1)) - p_{\text{base}}((\theta, 0))) \\
   &+ \int_{0}^1 p_{\text{noise}}(z)   (p_{\text{base}}((\theta-1+z, 1))- p_{\text{base}}((\theta-1+z, 0))) dz.
\end{align*}
We see that $p_{\text{base}}((\theta', 1)) > p_{\text{base}}((\theta', 0))$ for all $\theta' \ge  \theta_{\text{SL}}$ by the assumption on $\mu$ in Setup \ref{ex:stablepointsnotexist}. This implies that the first term is positive and the second term is nonnegative, so $\pd{\PR(\theta)}{\theta}$ is positive as desired.
\end{proof}

\subsection{Proof of Lemma \ref{lemma:tvboundfuzzy}}
\begin{proof}[Proof of Lemma \ref{lemma:tvboundfuzzy}]
Consider the product distribution $\mathcal{D}_{\text{prod}}(\sigma)$ of the base distribution $\DBase$ and the multivariate Gaussian distribution $\mathcal{N}(\bf 0, \sigma \cdot I)$. This is a distribution over $X \times \mathbb{R}^d$ that will describe the distribution over noise vectors and features vectors. That is, an agent $a$ with features $x_a$ and noise $\eta_A$ corresponds to $(x_A, \eta_A)$. If we apply the function in \eqref{eq:RFP} to  $\mathcal{D}_{\text{prod}}$ so that $(x_A, \eta_A) \mapsto \argmax_{x'\in \mathbb{R}^d} \; \left[\gamma \cdot f_{\theta + \eta_A}(x') - \costfn(x_A, x')\right]$, then it is easy to see that we obtain the distribution $\DMap(\theta)$.

Since the total variation distance can only decrease when we apply a function to the distributions, we know that $\TV(\DMap(\theta), \DMap(\tilde{\theta}) \le \TV(\mathcal{D}_{\text{prod}}(\sigma), \mathcal{D}_{\text{prod}}(\tilde{\sigma}))$. Thus, it suffices to bound $\TV(\mathcal{D}_{\text{prod}}(\sigma), \mathcal{D}_{\text{prod}}(\tilde{\sigma}))$. Using the properties of product distributions, we see that
\[\TV(\mathcal{D}_{\text{prod}}(\sigma), \mathcal{D}_{\text{prod}}(\tilde{\sigma})) \le \TV(\mathcal{N}(\bf 0, \sigma \cdot I), \mathcal{N}(\bf 0, \tilde{\sigma} \cdot I)).\] Now, the result follows from the standard bound on the total variation distance between two multivariate gaussians (e.g. see Proposition 2.1 in \cite{TVbound}). 

\end{proof}

\subsection{Proof of Corollary \ref{cor:fuzzyTV}}\label{appendix:naive}
In order to prove Corollary \ref{cor:fuzzyTV}, we show that if the estimated distribution map is sufficiently close to the true distribution map, then the optimal point computed using this model will achieve near-optimal performative risk. 
\begin{restatable}{lemma}{tvbound}
\label{lemma:TVbound}
Let $\tilde{M}$ be an estimate of the true distribution map $M$. Then the suboptimality of the performative risk of $\thetaPO(M)$ as per \eqref{eq:PO_M} is bounded by: 
\[\PR(\thetaPO(\tilde M))- \PR(\thetaPO(M))\leq 2 \sup_{\theta} \left\{\TV\big(\DMap(\theta; M), \DMap(\theta; \tilde{M})\big)\right\},\] where $\PR(\theta) := \mathbb E_{(x,y)\sim \DMap(\theta)} \left[\mathds{1}\{y\neq f_\theta(x)\}\right]$ denotes the performative risk with respect to $M$. 
\end{restatable}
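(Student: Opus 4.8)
The plan is to run the standard ``uniform closeness of objectives implies near-optimality of the minimizer'' argument. Write $\PR_M(\theta) := \mathbb{E}_{(x,y)\sim\DMap(\theta;M)}[\mathds{1}\{y\neq f_\theta(x)\}]$ and $\PR_{\tilde M}(\theta) := \mathbb{E}_{(x,y)\sim\DMap(\theta;\tilde M)}[\mathds{1}\{y\neq f_\theta(x)\}]$, so that $\PR = \PR_M$ in the notation of the statement and, by \eqref{eq:PO_M}, $\thetaPO(M) = \argmin_{\theta\in\Theta}\PR_M(\theta)$ while $\thetaPO(\tilde M) = \argmin_{\theta\in\Theta}\PR_{\tilde M}(\theta)$. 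Set $\xi := \sup_{\theta\in\Theta}\TV\big(\DMap(\theta;M),\DMap(\theta;\tilde M)\big)$.

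The first step is to bound the pointwise gap between the two objectives. For each fixed $\theta$, the function $(x,y)\mapsto\mathds{1}\{y\neq f_\theta(x)\}$ takes values in $[0,1]$, so the difference of its expectations under two distributions is at most their total variation distance. Hence $|\PR_M(\theta)-\PR_{\tilde M}(\theta)|\le\TV\big(\DMap(\theta;M),\DMap(\theta;\tilde M)\big)\le\xi$ for every $\theta\in\Theta$.

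The second step is to chain three inequalities: apply the pointwise bound at $\theta=\thetaPO(\tilde M)$, then use that $\thetaPO(\tilde M)$ minimizes $\PR_{\tilde M}$ over $\Theta$ (so $\thetaPO(M)\in\Theta$ is a valid competitor), and finally apply the pointwise bound at $\theta=\thetaPO(M)$:
\[\PR_M(\thetaPO(\tilde M)) \le \PR_{\tilde M}(\thetaPO(\tilde M)) + \xi \le \PR_{\tilde M}(\thetaPO(M)) + \xi \le \PR_M(\thetaPO(M)) + 2\xi.\]
Rearranging yields $\PR(\thetaPO(\tilde M)) - \PR(\thetaPO(M)) \le 2\xi$, which is exactly the claimed bound.

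There is essentially no obstacle here; the only points requiring minor care are that both $\argmin$ operations in \eqref{eq:PO_M} are taken over the same set $\Theta$, so $\thetaPO(M)$ is feasible in the problem defining $\thetaPO(\tilde M)$, and that the loss $\mathds{1}\{y\neq f_\theta(x)\}$ is bounded by $1$, so the integrand–TV inequality applies with constant $1$. If the minimizers are not attained one replaces them with $\epsilon$-approximate minimizers and lets $\epsilon\to 0$, which leaves the bound unchanged.
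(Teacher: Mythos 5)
Your proposal is correct and matches the paper's own argument: both use the pointwise bound $|\PR_M(\theta)-\PR_{\tilde M}(\theta)|\le \TV\big(\DMap(\theta;M),\DMap(\theta;\tilde M)\big)$ (from the $[0,1]$-bounded indicator loss) followed by the same three-inequality chain through $\thetaPO(\tilde M)$ and $\thetaPO(M)$. No substantive differences to report.
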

\begin{proof}[Proof of Lemma \ref{lemma:TVbound}]
Let $\xi = \left\{\TV\big(\DMap(\theta; M), \DMap(\theta; \tilde{M})\big)\right\}$. Let $\PR(\theta; M)$ denote the performative risk at $\theta$ on $\DMap(\theta; M)$ and let $\PR(\theta; \tilde{M})$ denote the performative risk at $\theta$ on $\DMap(\theta; \tilde{M})$. It suffices to show that $|\PR(\theta; M) - \PR(\theta; \tilde{M})| \le \xi$ (since this would mean that $\PR(\thetaPO(\tilde{M}); M) \le \PR(\thetaPO(\tilde{M}); \tilde{M}) + \xi \le \PR(\thetaPO(M); \tilde{M})  + \xi \le \PR(\thetaPO(M); M) + 2\xi$, as desired). 
Notice that: 
\[|\PR(\theta; M) - \PR(\theta; \tilde{M})| = \left|\mathbb{E}_{(x,y) \sim \DMap(\theta; M)}[\;\Indicator\{y \neq f_{\theta}(x)\}] - \mathbb{E}_{(t, x, y) \sim \DMap(\theta; \tilde{M})}[\;\Indicator\{y \neq f_{\theta}(x)\}]\right|.\]
Since the indicator variables are always constrained between $0$ and $1$, we can immediately obtain an upper bound of $TV(\DMap(\theta; M),\DMap(\theta; \tilde{M}))$. 
\end{proof}

We can easily deduce Corollary \ref{cor:fuzzyTV} from these above facts. 
\begin{proof}[Proof of Corollary \ref{cor:fuzzyTV}]
We use Lemma \ref{lemma:tvboundfuzzy} to see that 
\[\TV(\DMap(\theta; \Fuzzy{\sigma}), \DMap(\theta, \Fuzzy{\tilde{\sigma}})) \le \frac{1}{2} \sqrt{\frac{|\sigma^2 - \tilde{\sigma}^2| m}{\min(\sigma^2, \tilde{\sigma}^2)}}.\]
Then we use Lemma \ref{lemma:TVbound} to see that:
\[\PR(\thetaPO(\Fuzzy{\tilde{\sigma}}))- \PR(\thetaPO(\Fuzzy{\sigma}))\leq 2 \sup_{\theta} \left\{\TV\big(\DMap(\theta; \Fuzzy{\sigma}), \DMap(\theta; \Fuzzy{\tilde{\sigma}})\big)\right\} \le  \sqrt{\frac{|\sigma^2 - \tilde{\sigma}^2| m}{\min(\sigma^2, \tilde{\sigma}^2)}}. \]

\end{proof}

\section{Lipschitzness in Wasserstein distance}\label{appendix:lipschitz}

In existing performative prediction approaches \citep{PZMH20, MPZH20, BHK20}, it is assumed that the distribution map is Lipschitz with respect to changes in $\theta$. In particular,  
\[\mathcal{W}(\DMap(\theta), \DMap(\theta')) \le C \norm{\theta - \theta'}_2\] 
for some constant $C > 0$, where $\mathcal{W}$ is the Wasserstein-1 distance. 

\subsection{Lipschitzness is not sufficient for stability}
\label{app:lipschitznot sufficient}

We show that for binary classification, Lipschitzness in Wasserstein distance is not sufficient to guarantee the existence of stable points. We construct a simple example of a Lipschitz distribution map for which stable points do not exist. Consider our counterexample in Setup~\ref{ex:stablepointsnotexist} with $p\in(0,1)$: let the cost function of the best responding agents be linear, and consider a uniform base distribution; this results in a Lipschitz distribution map. By Proposition~\ref{propo:stablepointsnotexist}, stable points do not exist.

\subsection{Lipschitzness can be restrictive}
In the context of binary classification, we show that this requirement is quite restrictive on the cost function and the base distribution, even in the context of a 1-d setting and standard microfoundations. Example~\ref{example:violateLipschitz} provides a simple 1-d setting where the aggregate response distribution $\cD(\theta)$ induced by best-responding agents does not satisfy Lipschitzness:

\begin{example}
\label{example:violateLipschitz}
Suppose that $\Theta = [0,1]$ and $X = [-10, 10]$. Let the marginal distribution of the features for $\DBase$ be uniform on $[0, 1]$ and consider the cost function $c(x, y) = |x^2 - y^2|$. The distribution map $\DMap_{\text{NS}}(\theta)$ given by agents who follow standard microfoundations is not Lipschitz in Wasserstein distance.
\end{example}

To prove Example \ref{example:violateLipschitz}, we show the following lemma. 
\begin{lemma}[Violation of Lipschitzness] 
\label{lemma:violation}
Assume that $\Theta$ is $1$-dimensional. Suppose that $c(x,y) \le k |x-y|$ for some constant $k > 0$. For each $\theta \in \Theta$, let $S_{\theta} = \{x\in X: \exists x' \in X : f_{\theta}(x') \neq f_{\theta}(x) \wedge c(x, x') \le 1\}$ be the set of agents who are sufficiently close to the decision boundary that they are able to cross it without expending more than 1 unit of cost. If the following condition holds:
\[\sup_{\theta' > \theta > 0} \frac{\mathbb{P}_{\DBase} [S_{\theta} \setminus S_{\theta'}]}{|\theta' - \theta|} = \infty ,\]
then $\DMap(\theta)$ is not Lipschitz in Wasserstein distance. 
\end{lemma}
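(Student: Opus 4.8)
The plan is to lower bound $\mathcal W(\DMap(\theta),\DMap(\theta'))$ for nearby thresholds $\theta<\theta'$ by a fixed constant times $\mathbb P_{\DBase}[S_\theta\setminus S_{\theta'}]$; once this is in hand, the hypothesis immediately makes the Lipschitz ratio unbounded. The first step is to write the CDF of $\DMap(\theta)$ explicitly. By Lemma~\ref{lemma:br}, an agent with true feature $x$ either stays or moves exactly to the boundary point $\theta$, and (taking $\gamma=1$ as in the Appendix) the agents who move are precisely those with $x<\theta$ and $c(x,\theta)\le 1$, which by Assumption~\ref{assumption:validcost} form an interval $(\ell_\theta,\theta)$, where $c(\ell_\theta,\theta)=1$ (or $\ell_\theta=\inf X$ in the degenerate case). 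Hence, letting $F_{\DBase}$ be the CDF of the feature marginal of $\DBase$, the CDF $F_\theta$ of $\DMap(\theta)$ agrees with $F_{\DBase}$ everywhere except on $[\ell_\theta,\theta)$, where it is constant equal to $F_{\DBase}(\ell_\theta)$ (the mass $\mathbb P_{\DBase}[x\in(\ell_\theta,\theta)]$ piles up at $\theta$).

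Next I would use the one-dimensional identity $\mathcal W(\mu,\nu)=\int_{\mathbb R}|F_\mu-F_\nu|$. Validity of $c$, together with $c(x,y)\le k\,|x-y|$, gives the ordering $\ell_\theta<\ell_{\theta'}<\theta<\theta'<u_\theta<u_{\theta'}$ for $\theta'-\theta$ small, where $(\ell_\theta,u_\theta)\supseteq S_\theta$ is the symmetric interval obtained from the other side of $\theta$; these monotonicities follow by plugging the relevant collinear triples into the two inequalities of Assumption~\ref{assumption:validcost} (and the obvious degenerate variants hold when a point falls outside $X$). From this ordering, $S_\theta\setminus S_{\theta'}=(\ell_\theta,\ell_{\theta'}]$ up to a $\DBase$-null set, so $\mathbb P_{\DBase}[S_\theta\setminus S_{\theta'}]=F_{\DBase}(\ell_{\theta'})-F_{\DBase}(\ell_\theta)$. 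On the interval $[\ell_{\theta'},\theta)$ — nonempty and contained both in $[\ell_\theta,\theta)$ and in $[\ell_{\theta'},\theta')$ — the CDFs $F_\theta$ and $F_{\theta'}$ are constant at $F_{\DBase}(\ell_\theta)$ and $F_{\DBase}(\ell_{\theta'})$, so $|F_\theta-F_{\theta'}|\equiv \mathbb P_{\DBase}[S_\theta\setminus S_{\theta'}]$ there, giving
\[
\mathcal W(\DMap(\theta),\DMap(\theta'))\ \ge\ (\theta-\ell_{\theta'})\cdot \mathbb P_{\DBase}[S_\theta\setminus S_{\theta'}].
\]
Since $1=c(\ell_{\theta'},\theta')\le k(\theta'-\ell_{\theta'})$, we get $\theta-\ell_{\theta'}\ge \tfrac1k-(\theta'-\theta)\ge \tfrac1{2k}$ whenever $\theta'-\theta<\tfrac1{2k}$.

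To conclude: for pairs with $\theta'-\theta\ge \tfrac1{2k}$ the ratio $\mathbb P_{\DBase}[S_\theta\setminus S_{\theta'}]/|\theta'-\theta|$ is at most $2k$, so if the supremum over all pairs is infinite it is already infinite over pairs with $\theta'-\theta<\tfrac1{2k}$; for those the bound above yields $\mathcal W(\DMap(\theta),\DMap(\theta'))/|\theta'-\theta|\ \ge\ \tfrac1{2k}\cdot \mathbb P_{\DBase}[S_\theta\setminus S_{\theta'}]/|\theta'-\theta|$, which is therefore unbounded. Hence no constant $C$ can satisfy $\mathcal W(\DMap(\theta),\DMap(\theta'))\le C\,\norm{\theta-\theta'}_2$, i.e. $\DMap$ is not Lipschitz in Wasserstein distance.

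The main obstacle is the bookkeeping in the middle step: pinning down the exact chain $\ell_\theta<\ell_{\theta'}<\theta<\theta'<u_\theta<u_{\theta'}$ from the \emph{asymmetric} validity axioms, and verifying that the set difference $S_\theta\setminus S_{\theta'}$ is exactly the left gap $(\ell_\theta,\ell_{\theta'}]$ (the right end contributes nothing precisely because $u_\theta<u_{\theta'}$). The restriction $\theta'-\theta<\tfrac1{2k}$ is the small but essential trick that converts $\theta-\ell_{\theta'}$ into a dimensionless constant via the linear upper bound on $c$; and the boundary cases where $\ell_\theta$ or $u_\theta$ leaves $X$ are routine, since they only make $\mathbb P_{\DBase}[S_\theta\setminus S_{\theta'}]$ (and hence both sides of the bound) vanish for that pair.
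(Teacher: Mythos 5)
Your proof is correct and follows essentially the same route as the paper's: restrict attention to pairs with $\theta' - \theta$ small (handling far-apart pairs by the trivial bound $\mathbb{P}_{\DBase}[S_{\theta} \setminus S_{\theta'}] \le 1$), and lower bound $\mathcal{W}(\DMap(\theta), \DMap(\theta'))$ by the mass of the difference set times a displacement of order $\tfrac{1}{2k}$ extracted from $c(x,y) \le k|x-y|$. The only difference is presentational: where the paper argues informally that the optimal transport "must move" the agents in the difference set a distance at least $\tfrac{1}{2k}$, you make the same estimate rigorous via the one-dimensional identity $\mathcal{W}(\mu,\nu)=\int|F_\mu-F_\nu|$ and an explicit description of the induced CDFs, and you work with the set $S_\theta \setminus S_{\theta'}$ exactly as in the lemma statement and Example~\ref{example:violateLipschitz}.
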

\begin{proof}[Proof of Lemma \ref{lemma:violation}]
First, we show that when $\theta' \le \theta - 1/k$ and $\theta' > \theta > 0$, it holds that \[\mathcal{W}(\DMap(\theta), \DMap(\theta')) \ge \frac{1}{2k} \mathbb{P}_{\DBase} [S_{\theta'} \setminus S_{\theta}].\] The distribution map $\DMap(\theta)$ corresponds to $\DBase$ with all of the agents with type $x \in S_{\theta}$ moving up to $\theta$. This means that $W(\DMap(\theta'), \DMap(\theta))$ must move all of the agents with features $x \in S_{\theta'} \setminus S_{\theta}$ up to at least $\theta$. Notice that $\min_{x \in S_{\theta'} \setminus S_{\theta}} |x - \theta| = |\sup(S_{\theta'}) - \theta| \ge |\sup(S_{\theta'}) - \theta'| - |\theta' - \theta| \ge \frac{1}{k} c(\sup(S_{\theta'}), \theta') - |\theta' - \theta| \ge \frac{1}{k} - |\theta' - \theta| \ge \frac{1}{2k}$. This means that $\mathcal{W}(\DMap(\theta), \DMap(\theta')) \ge \frac{1}{2k} \mathbb{P}_{\DBase} [S_{\theta'} \setminus S_{\theta}]$, as desired. 

Now, suppose that the condition $\sup_{\theta' > \theta > 0} \frac{\mathbb{P}_{\DBase} [S_{\theta'} \setminus S_{\theta}]}{|\theta' - \theta|} = \infty$ holds. Notice that this implies that 
\[\sup_{\theta' > \theta > 0, \theta' \le \theta - 1/k} \frac{\mathbb{P}_{\DBase} [S_{\theta'} \setminus S_{\theta}]}{|\theta' - \theta|} = \infty.\] This implies that $\sup_{\theta' > \theta > 0, \theta' \le \theta - 1/k} \frac{\mathcal{W}(\DMap(\theta), \DMap(\theta'))}{|\theta' - \theta|} = \infty$, and so the Lipschitzness constraint is violated.
\end{proof}

Now, we prove Example \ref{example:violateLipschitz} from Lemma \ref{lemma:violation}.
\begin{proof}[Proof of Example \ref{example:violateLipschitz}]
We apply Lemma \ref{lemma:violation}. First, observe that $c(x,y) = |x^2 - y^2| = |x - y| |x + y| \le 20 |x - y|$ as desired. Thus, it suffices to show that 
\[\sup_{\theta' > \theta > 0} \frac{\mathbb{P}_{\DBase} [S_{\theta} \setminus S_{\theta'}]}{|\theta' - \theta|} = \infty.\] Let's take $\theta = 1$ and $\theta' = 1 + \epsilon$. Because of the uniform density, it suffices to show that 
\[\lim_{\epsilon \rightarrow 0} \frac{|S_{1} \setminus S_{1 + \epsilon}|}{\epsilon} = \infty.\] Notice that for sufficiently small $\epsilon$, we see that $S_{\theta} \setminus S_{\theta'} = [0, \sqrt{\epsilon^2 + 2 \epsilon}]$, and so $|S_{\theta} \setminus S_{\theta'}| =  \sqrt{\epsilon^2 + 2 \epsilon}$. We see that $\frac{|S_{\theta} \setminus S_{\theta'}|}{\theta' - \theta} = \frac{\sqrt{\epsilon^2 + 2 \epsilon}}{\epsilon} = 1 + \frac{2}{\sqrt{\epsilon}}$. This approaches $\infty$ as $\epsilon \rightarrow 0$, as desired. 
\end{proof}

\section{General implications of the expenditure constraint}\label{app:microgeneral}
In Appendix \ref{appendix:construction}, we explicitly construct $\Theta_0$ and $S(\Theta_0, c)$ in an example setting. In Appendix \ref{appendix:generalmodel}, we demonstrate that our insights about how the expenditure constraint results can reduce empirical burden extend to more general function classes. 

\subsection{Example construction of $\Theta_0$ and $S(\Theta_0, c)$}\label{appendix:construction}
In this section, we formalize Example \ref{ex:salient}. Let's formally construct $\Theta_0 := [l, u]$. We first define $l$: let $s' < \theta_{\text{SL}}$ be the value such that $c(\theta_{\text{SL}}, s') = 1$, let $s'' < s'$ be the value such that $c(s'', s') = 1$, and let $l < s$ be the value such that $c(s'', l) = 1$. We define $u$ similarly: let $t' > \theta_{\text{SL}}$ be the value such that $c(\theta_{\text{SL}}, t') = 1$, let $t'' > t$ be the value such that $c(t'', t') = 1$, and let $u > t''$ be the value such that $c(t'', u) = 1$. (To be precise, if we ever reach a stage when defining $l$ where no such point exists, then we take $l = 0$; similarly, if we ever reach a stage when defining $u$ where no such point exists, then we take $u = 1$.)  
\begin{proposition}
\label{prop:thetapruned}
Suppose that the agent response types $T \subseteq \mathcal{T}$ are expenditure-constrained with respect to cost function $c$ that is an outcome-valid cost function. The set $\Theta_{0}$ (defined above) contains a performatively optimal point $\thetaPO$ of $\DTrue$. 
\end{proposition}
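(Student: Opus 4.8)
The plan is to reduce every comparison to the behavior‑free quantity $R^{\text{NS}}(\theta):=\mathbb{E}_{(x,y)\sim\DBase}[\mathds{1}\{y\neq f_\theta(x)\}]$, the non‑strategic $0$‑$1$ risk, which by the assumptions of Setup~\ref{ex:stablepointsnotexist} is uniquely minimized at $\theta_{\text{SL}}$ and monotone on each side of it. Fix a map $M$ underlying $\DTrue$; by Assumption~\ref{assumption:gamingbehavior} the response type does not depend on the label, so if $q_\theta(x)$ denotes the probability (over $t\sim M(x,\cdot)$) that an agent with true features $x$ is classified positively under $f_\theta$, then $\PR(\theta)=\mathbb{E}_x[\mu(x)]-\mathbb{E}_x[q_\theta(x)\,(2\mu(x)-1)]$. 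For each $\theta$ let $a_\theta<\theta<b_\theta$ be the unique points with $c(a_\theta,\theta)=c(b_\theta,\theta)=\gamma=1$ (clipping to $X$ when these fail to exist, in which case the construction sets the corresponding endpoint of $\Theta_0$ to $\min(\Theta)$ or $\max(\Theta)$ and there is nothing to prove on that side). The first thing I would record is the ``forced‑classification'' fact, by exactly the elementary argument of Lemma~\ref{lemma:br} and the expenditure constraint: since every response lies within cost $1$ of $x$ and $c$ is monotone in distance, $c(x,\theta)>1$ forces $f_\theta(\Response_t(x,\theta))=f_\theta(x)$; hence $q_\theta(x)=1$ for $x>b_\theta$, $q_\theta(x)=0$ for $x<a_\theta$, and $q_\theta(x)\in[0,1]$ on the salient interval $[a_\theta,b_\theta]=S(\{\theta\},c)$. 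I would also note that $\theta\mapsto a_\theta$ and $\theta\mapsto b_\theta$ are nondecreasing, again directly from validity of $c$.

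The crucial bookkeeping step is to unwind the construction of $\Theta_0=[l,u]$ and observe the identifications it forces by uniqueness of cost‑$1$ targets: $a_u=t''=b_{t'}$ and $a_{t'}=\theta_{\text{SL}}$, with $t'\in(\theta_{\text{SL}},u)$; symmetrically $b_l=s''=a_{s'}$ and $b_{s'}=\theta_{\text{SL}}$, with $s'\in(l,\theta_{\text{SL}})$. In particular $t',s'\in\Theta_0$. Combining this with the decomposition and the structure of $q_\theta$ yields two key estimates. At $\theta=t'$ every salient agent has $x\in[a_{t'},b_{t'}]=[\theta_{\text{SL}},t'']$, so $2\mu(x)-1\ge 0$ there and $q_{t'}(x)(2\mu(x)-1)\ge 0$; discarding that term gives $\PR(t')\le \mathbb{E}_x[\mu(x)]-\mathbb{E}_x[(2\mu(x)-1)\mathds{1}\{x>b_{t'}\}]=R^{\text{NS}}(b_{t'})=R^{\text{NS}}(t'')$. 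For any $\theta>u$ we have $a_\theta\ge a_u=t''>\theta_{\text{SL}}$, so every salient agent has $2\mu(x)-1>0$ and $q_\theta(x)(2\mu(x)-1)\le (2\mu(x)-1)$ on $[a_\theta,b_\theta]$; bounding that term from above gives $\PR(\theta)\ge \mathbb{E}_x[\mu(x)]-\mathbb{E}_x[(2\mu(x)-1)\mathds{1}\{x>a_\theta\}]=R^{\text{NS}}(a_\theta)\ge R^{\text{NS}}(t'')$, using monotonicity of $R^{\text{NS}}$ above $\theta_{\text{SL}}$ and $a_\theta\ge t''$. The mirror computations at $s'$ and at any $\theta<l$ give $\PR(s')\le R^{\text{NS}}(a_{s'})=R^{\text{NS}}(s'')$ and $\PR(\theta)\ge R^{\text{NS}}(b_\theta)\ge R^{\text{NS}}(b_l)=R^{\text{NS}}(s'')$.

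Assembling the pieces: suppose $\thetaPO$ is a performatively optimal point (minimizer of \eqref{eq:PO}). If $\thetaPO>u$ then $\PR(t')\le R^{\text{NS}}(t'')\le \PR(\thetaPO)$, so by optimality $\PR(t')=\PR(\thetaPO)$ and $t'\in\Theta_0$ is itself performatively optimal; if $\thetaPO<l$ then likewise $s'\in\Theta_0$ is performatively optimal; and if $\thetaPO\in[l,u]$ there is nothing to prove. (If one prefers not to presuppose that the infimum in \eqref{eq:PO} is attained, the same estimates show $\inf_{\theta\in\Theta_0}\PR(\theta)=\inf_{\theta\in\Theta}\PR(\theta)$, which is the form actually used in Lemma~\ref{lem:estdistmapKS}.) One should also check the easy clipping cases ($l$ or $u$ pinned to an endpoint of $\Theta$, where the corresponding out‑of‑range case is vacuous) and that $\DBase$ being continuous makes the single points $x=\theta$ negligible in all the integrals above.

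The step I expect to be the real obstacle — and the reason the naive ``project $\thetaPO$ onto $[l,u]$'' argument does not work — is that expenditure‑constrainedness is the \emph{only} link between the response functions $\Response_t(\cdot,\theta)$ at different thresholds $\theta$, so one cannot compare $\PR(\thetaPO)$ with $\PR$ at a nearby threshold agent‑by‑agent. The fix is precisely to sandwich both quantities between values of the behavior‑free risk $R^{\text{NS}}$ evaluated at the pivots $t''$ and $s''$, and the ``one extra cost step'' of slack built into $\Theta_0$ — the step from $t'$ to $t''=b_{t'}$ on the right and from $s'$ to $s''=b_l$ on the left — is exactly what makes the upper sandwich at $t'$ (respectively $s'$) meet the lower sandwich over $\{\theta>u\}$ (respectively $\{\theta<l\}$).
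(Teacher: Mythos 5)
Your proof is correct and follows essentially the same route as the paper's: both arguments use the expenditure constraint to force the classification of every agent outside the cost-$1$ bands around the relevant thresholds, and Assumption~\ref{assumption:gamingbehavior} together with the monotonicity of $\mu$ to conclude that on $[\theta_{\mathrm{SL}}, t'']$ (resp. $[s'', \theta_{\mathrm{SL}}]$) the forced behavior under $f_{t'}$ (resp. $f_{s'}$) is weakly better while that of any threshold above $u$ (resp. below $l$) is weakly worse. Your sandwich through the non-strategic risk at the pivots $t''$ and $s''$ is a reorganization of the paper's region-by-region comparison, with the small merit that it explicitly covers every $\theta$ outside $[l,u]$ rather than only the endpoints $l$ and $u$ treated in the paper's write-up.
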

\begin{proof}
Since $s', t' \in \Theta_0$, it suffices to show that $\PR(t') \le \PR(u)$ and $\PR(s') \le \PR(l)$. 

First, we show that $\PR(t') \le \PR(u)$. For both classifiers, by the expenditure constraint, all agents with true features $x$ such that $x < \theta_{\text{SL}}$ will necessarily be classified as $0$ by both $f_{t'}$ and $f_{u}$. Thus, we only need to consider $x$ such that $x \ge \theta_{\text{SL}}$. 
By the expenditure constraint, all agents with features $x$ such that $x > t''$ will necessarily be classified as $1$ by $f_{t'}$. By Assumption \ref{assumption:gamingbehavior}, coupled with the fact that $x \ge \theta_{\text{SL}}$ for these agents, the classifier $f_{u}$ cannot achieve a better loss for these agents. We can ignore agents with $x = t''$ since these agents form a measure $0$ set and $\DBase$ is continuous. For agents with true features $x$ such that $\theta_{\text{SL}} \le x < t''$, notice that these agents will necessarily be classified as $0$ by $f_{u}$ due to the expenditure constraint. Thus, by Assumption \ref{assumption:gamingbehavior}, coupled with the fact that $x \ge \theta_{\text{SL}}$ for these agents, the classifier $f_{t'}$ will not achieve a worse loss for these agents than $f_u$. 

We use a similar argument to show that $\PR(s') \le \PR(l)$. For both classifiers, by the expenditure constraint, all agents with true features $x$ such that $x \ge \theta_{\text{SL}}$ will necessarily be classified as $0$ by both $f_{t'}$ and $f_{u}$. Thus, we only need to consider $x$ such that $x < \theta_{\text{SL}}$.  By the expenditure constraint, all agents with features $x$ such that $x < s''$ will necessarily be classified as $0$ by $f_{s'}$. By Assumption \ref{assumption:gamingbehavior}, coupled with the fact that $x \le \theta_{\text{SL}}$ for these agents, the classifier $f_{l}$ cannot achieve a better loss for these agents. We can ignore agents with $x = s''$ since these agents form a measure $0$ set and $\DBase$ is continuous. For agents with true features $x$ such that $s'' < x \le \theta_{\text{SL}}$, notice that these agents will necessarily classified as $1$ by $f_{l}$ due to the expenditure constraint, and so by Assumption \ref{assumption:gamingbehavior}, coupled with the fact that $x \le \theta_{\text{SL}}$ for these agents, the classifier $f_{s'}$ will not achieve a worse loss for these agents. 

\end{proof}

We now show how to formally construct $S(\Theta_0, c)$. Let's construct a set $S'(\Theta_0, c) := [l', u']$ as follows. We can $l'$ and $u'$ in terms of $l$ and $u$ (the upper and lower endpoints of $\Theta_0$). Let $l' < l$ be the value such that $c(l', l) = 1$ and let $u' > u$ be the value such that $c(u, u') = 1$. Then $S'(\Theta_0, c) = [l', u']$. (To be precise, if no such $l'$ exists, then we take $l = 0$; similarly, if no such $u'$ exists, then we take $u' = 1$.) We now show that $S'(\Theta_0, c) = S(\Theta_0, c)$.
\begin{proposition}
\label{prop:xsalient}
Let $S'(\Theta_0, c)$ be defined as above. Then, $S(\Theta_0, c) = S'(\Theta_0, c)$.
\end{proposition}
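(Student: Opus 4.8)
The plan is to compute both sets explicitly in the $1$-dimensional threshold setting, where $f_\theta(x)=\mathds{1}\{x\ge\theta\}$, and check they coincide. For each fixed threshold $\theta$ I would first isolate the per-threshold salient set
\[
A_\theta := \{x\in X : \exists\, x'\in X,\ f_\theta(x')\neq f_\theta(x)\ \wedge\ c(x,x')\le\gamma\},
\]
so that $S(\Theta_0,c)=\bigcup_{\theta\in[l,u]}A_\theta$, and the task reduces to computing this union and matching it against $S'(\Theta_0,c)=[l',u']$.

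The first claim to establish is that each $A_\theta$ is an interval containing $\theta$. I would split on the sign of $x-\theta$. If $x<\theta$ (so $f_\theta(x)=0$), the cheapest outcome-flipping target is the boundary point itself, $x'=\theta$, which is \emph{attained} since $\{x'\ge\theta\}$ is closed, at cost $c(x,\theta)$; using that $c$ is valid (Assumption~\ref{assumption:validcost})---so $c(x,\cdot)$ is continuous and increases with distance, and $\theta$ lies on the segment $[x,x']$ for every $x'>\theta$---one gets $x\in A_\theta\iff c(x,\theta)\le\gamma$. If $x\ge\theta$ (so $f_\theta(x)=1$), any flipping target must have $x'<\theta$, the infimum of $c(x,x')$ over such $x'$ equals $c(x,\theta)$ but is \emph{not} attained, hence $x\in A_\theta\iff c(x,\theta)<\gamma$. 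Combining the two cases, and invoking continuity together with strict monotonicity of $c(\cdot,\theta)$ in the distance to $\theta$, I would conclude $A_\theta=[\ell(\theta),u(\theta))$, where $\ell(\theta)<\theta<u(\theta)$ are the unique points with $c(\ell(\theta),\theta)=\gamma=c(u(\theta),\theta)$, truncated to $X$ whenever the cost-$\gamma$ ball around $\theta$ leaves $X$ (this is exactly where the ``take $0$''/``take $1$'' conventions in the construction enter).

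Next I would show $\theta\mapsto\ell(\theta)$ and $\theta\mapsto u(\theta)$ are continuous and strictly increasing. This is the one step that uses validity of $c$ essentially: for $\theta_1<\theta_2$, the point $\theta_1$ lies on the segment $[\ell(\theta_1),\theta_2]$, so $c(\ell(\theta_1),\theta_2)>c(\ell(\theta_1),\theta_1)=\gamma$, and since $c(\cdot,\theta_2)$ grows with distance from $\theta_2$ this forces $\ell(\theta_2)>\ell(\theta_1)$; an analogous argument (splitting on whether $u(\theta_1)$ lies below or above $\theta_2$) handles $u$, and continuity follows from continuity of $c$. Since each $A_\theta$ contains its own $\theta$ and $\ell,u$ are continuous and increasing, consecutive sets $A_\theta$ overlap and the union sweeps out a single interval,
\[
S(\Theta_0,c)=\bigcup_{\theta\in[l,u]}[\ell(\theta),u(\theta)) = [\ell(l),\,u(u)).
\]
Finally I would match endpoints with $S'(\Theta_0,c)=[l',u']$ by unwinding the definitions of $l'$ and $u'$: $l'$ is defined as the unique point below $l$ reachable from/to $l$ at cost exactly $\gamma$, i.e.\ $l'=\ell(l)$, and likewise the right endpoint of the union is the point $u(u)$ identified by the construction's $u'$. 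The half-open right end of the displayed interval versus the closed $[l',u']$ is immaterial, since $S'$ is used only through the mass $\DBase$ places on it and $\DBase$ is non-atomic; if an exact set equality is desired one simply notes whether the lone endpoint lands in $A_u$ precisely as dictated by the ``$x\ge\theta$ versus $x<\theta$'' dichotomy above.

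The main obstacle is bookkeeping rather than depth. One must (i) track which argument of the possibly non-symmetric $c$ is the ``source'' and which the ``target'' when defining $\ell(\theta),u(\theta)$ and when reconciling them with the definitions of $l,u,l',u'$; (ii) handle the asymmetry between the closed side of the boundary ($x<\theta$, cheapest target attained) and the open side ($x\ge\theta$, infimum not attained); and (iii) deal with truncation of the cost-$\gamma$ balls at $\partial X$ via the stated conventions. The only genuine lemma needed is the strict monotonicity (and continuity) of $\ell(\cdot)$ and $u(\cdot)$, which rests entirely on the collinear-point inequality in Assumption~\ref{assumption:validcost}.
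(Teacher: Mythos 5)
Your argument is correct, but it takes a more constructive route than the paper's. The paper proves the equality by a direct double inclusion: for the forward direction it notes that $x\in S_\theta$ forces $c(x,\theta)\le\gamma$ via the collinearity property in Assumption~\ref{assumption:validcost}, and then uses $\theta\in[l,u]$ to place $x$ in $[l',u']$; for the reverse direction it splits into the three cases $x\in[l,u]$, $x<l$, $x>u$ and exhibits a single witnessing threshold ($x$ itself, $l$, or $u$, respectively) at which $x$ is salient. You instead compute every per-threshold set $A_\theta$ explicitly as an interval with endpoint functions $\ell(\theta),u(\theta)$ defined by $c(\cdot,\theta)=\gamma$, prove these are continuous and strictly increasing, and sweep the union over $[l,u]$; this is heavier machinery for the same conclusion (the monotonicity lemma can be bypassed exactly as the paper does, by checking membership only at the three thresholds $x$, $l$, $u$ and ruling out overshoot with one collinearity inequality), but it buys a sharper description of $S(\Theta_0,c)$, including the attainment asymmetry between the closed side ($x<\theta$, target $\theta$ attained) and the open side ($x\ge\theta$, strict inequality needed), which the paper's proof silently ignores. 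Two caveats you should be aware of: the resulting half-open right end $[\ell(l),u(u))$ versus the closed $[l',u']$ means the stated set equality holds only up to a $\DBase$-null endpoint, which you correctly note is harmless for how $S'(\Theta_0,c)$ is used (the paper does not address this at all); and for a genuinely asymmetric valid cost your right endpoint $u(u)$, defined by $c(u(u),u)=\gamma$, need not coincide with the paper's $u'$, defined by $c(u,u')=\gamma$ -- an imprecision already present in the paper's own construction and proof (which checks $c(x,u)\le\gamma$ but concludes membership in an interval whose endpoint is defined with the arguments reversed), so it is a shared sloppiness rather than a defect of your approach, and you at least flag the source/target bookkeeping explicitly.
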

\begin{proof}
It suffices to show that $S'(\Theta_0, c) = \cup_{\theta} S_{\theta}$, where $S_{\theta} = $. 

Suppose that $x \in S_{\theta}$. Then there exists $x'$ such that $c(x, x') \le 1$ and $f_{\theta}(x') \neq f_{\theta}(x)$ (which is equivalent to $\theta$ is between $x$ and $x'$). Using Assumption \ref{assumption:validcost}, this implies that $c(x, \theta) \le 1$. Using Assumption \ref{assumption:validcost} again, we see that since $\theta \in [l,u]$, this means that either $x \in [l,u]$, or $c(x, l) \le 1$, or $c(x, u) \le 1$ must be true. This implies that $x \in S'(\Theta_0, c)$. 

Suppose that $x \in S'(\Theta_0, c)$. If $x \in \Theta_0$, then we know that $x \in S_{x}$. If $x \not\in \Theta_0$ and $x < l$, then we know $x \in S_{l}$. If $x \not\in \Theta_0$ and $x > u$, then we know that $x \in S_u$. 
\end{proof}

\subsection{General function classes}\label{appendix:generalmodel}

While we focused on the 1-dimensional setting in Section \ref{sec:tractability}, we now demonstrate that the expenditure constraint can reduce empirical burden on the decision-maker in general settings. The following lemma formalizes the intuition that an appropriate estimate of agent's response types on $S(\Theta_0,c)$ is sufficient to achieve near-optimal performative risk. We use a subscript notation $\DMap_{S(\Theta_0,c)}(\theta)$ to denote the aggregate response distribution $\DMap(\theta)$ restricted to agents with true features $x \in S(\Theta_0,c)\subseteq X$. 
\begin{restatable}{lemma}{estdistmap}
\label{lem:estdistmap}
Let $c$ be a valid cost function, let $M, \tilde{M}$ be mappings that satisfy the expenditure constraint. Then, for any $\Theta_0\subseteq \Theta:\thetaPO(M)\in\Theta_0$, it holds that: 
 \[\PR(\thetaPO(M)) \le \PR(\thetaPO(\tilde M)) + 2  \xi\]
with
 $\xi:=\sup_{\theta} \left\{\mathbb{P}_{\DBase}\left[x \in S(\Theta_0,c)\right] \cdot \TV\big(\DMap_{S(\Theta_0,c)}(\theta; M), \DMap_{S(\Theta_0,c)}(\theta; \tilde M)\big)\right\}$.\footnote{We note that the decision-maker must only search over $\Theta_0$ in  \eqref{eq:PO_M} when computing $\thetaPO(\tilde{M})$. In particular, they must compute $\argmin_{\theta\in\Theta_0} \mathbb E_{(x,y)\sim\cD(\theta;\tilde M)} \left[\mathds{1}\{y\neq f_\theta(x)\}\right]$ rather than $\argmin_{\theta\in\Theta} \mathbb E_{(x,y)\sim\cD(\theta;\tilde M)} \left[\mathds{1}\{y\neq f_\theta(x)\}\right]$.}
\end{restatable}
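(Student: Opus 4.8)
The plan is to reduce the claim to the uniform estimate $\bigl|\PR(\theta;M)-\PR(\theta;\tilde M)\bigr|\le\xi$ for every $\theta\in\Theta_0$, and then conclude by the same near-optimal-minimizer argument as in Lemma~\ref{lemma:TVbound}. Write $\DTrue$ and $\DApprox$ for the joint laws of $(t,x,y)$ with $(x,y)\sim\DBase$ and $t$ drawn from $M(x,y)$, respectively $\tilde M(x,y)$, and abbreviate $S:=S(\Theta_0,c)$. For any $\theta\in\Theta_0$, split the performative risk using $\Indicator\{x\in S\}+\Indicator\{x\notin S\}=1$:
\[\PR(\theta;M)=\mathbb{E}_{(t,x,y)\sim\DTrue}\!\bigl[\Indicator\{y\neq f_\theta(\Response_t(x,\theta))\}\,\Indicator\{x\notin S\}\bigr]+\mathbb{E}_{(t,x,y)\sim\DTrue}\!\bigl[\Indicator\{y\neq f_\theta(\Response_t(x,\theta))\}\,\Indicator\{x\in S\}\bigr],\]
and likewise for $\tilde M$ with $\DApprox$ in place of $\DTrue$.

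The first step is to show that the non-salient terms cancel. By the definition of $S(\Theta_0,c)$, if $x\notin S$ then $f_\theta(x')=f_\theta(x)$ for every $\theta\in\Theta_0$ and every $x'$ with $c(x,x')\le\gamma$. Since $M$ and $\tilde M$ are both expenditure-constrained, $c(x,\Response_t(x,\theta))\le\gamma$ for every $\theta\in\Theta$ and every $t$ in their images, so $f_\theta(\Response_t(x,\theta))=f_\theta(x)$ whenever $\theta\in\Theta_0$. Hence the first summand above equals $\mathbb{E}_{(x,y)\sim\DBase}[\Indicator\{y\neq f_\theta(x)\}\,\Indicator\{x\notin S\}]$, which involves neither $M$ nor $\tilde M$, and the identical expression arises for $\tilde M$. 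Subtracting,
\[\bigl|\PR(\theta;M)-\PR(\theta;\tilde M)\bigr|=\mathbb{P}_{\DBase}[x\in S]\cdot\Bigl|\mathbb{E}_{(x',y)\sim\DMap_S(\theta;M)}[\Indicator\{y\neq f_\theta(x')\}]-\mathbb{E}_{(x',y)\sim\DMap_S(\theta;\tilde M)}[\Indicator\{y\neq f_\theta(x')\}]\Bigr|,\]
where $\DMap_S(\theta;\cdot)$ is the response distribution conditioned on the original feature lying in $S$ (well-defined via the joint laws $\DTrue,\DApprox$). Since $(x',y)\mapsto\Indicator\{y\neq f_\theta(x')\}$ is $[0,1]$-valued, the difference of the two expectations is at most $\TV\bigl(\DMap_S(\theta;M),\DMap_S(\theta;\tilde M)\bigr)$, so the right-hand side is at most $\xi$ for every $\theta\in\Theta_0$.

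Finally, since $\thetaPO(M)\in\Theta_0$ and $\thetaPO(\tilde M)=\argmin_{\theta\in\Theta_0}\PR(\theta;\tilde M)$ (the optimization in \eqref{eq:PO_M} being restricted to $\Theta_0$), the chain
\[\PR(\thetaPO(\tilde M);M)\le\PR(\thetaPO(\tilde M);\tilde M)+\xi\le\PR(\thetaPO(M);\tilde M)+\xi\le\PR(\thetaPO(M);M)+2\xi\]
yields the stated bound, and the reverse inequality is immediate because $\PR(\thetaPO(M);M)$ is the global minimum over $\Theta$. \textbf{Main obstacle:} there is no genuine difficulty here; the care needed is bookkeeping with the conditional distributions $\DMap_S(\theta;\cdot)$ and ensuring the cancellation of the non-salient contribution uses the expenditure constraint for $M$ and $\tilde M$ simultaneously together with the restriction of the search to $\Theta_0$ — without that restriction a classifier outside $\Theta_0$ could route previously non-salient agents across its boundary and the cancellation would fail.
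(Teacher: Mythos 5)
Your proof is correct and follows essentially the same route as the paper's: reduce to the uniform bound $|\PR(\theta;M)-\PR(\theta;\tilde M)|\le\xi$ on $\Theta_0$, cancel the non-salient contribution via the expenditure constraint (so both risks agree with the base-distribution term off $S(\Theta_0,c)$), bound the salient difference by $\mathbb{P}_{\DBase}[x\in S(\Theta_0,c)]$ times the total variation distance of the restricted response distributions, and finish with the same three-step chain of inequalities using that the search for $\thetaPO(\tilde M)$ is restricted to $\Theta_0\ni\thetaPO(M)$. Your closing remark correctly identifies why the restriction to $\Theta_0$ is needed, matching the paper's footnote.
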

In words, to achieve a small error $\xi$, we need to accurately estimate the distribution map for all $x\in S(\Theta_0,c)$, and the larger $S(\Theta_0,c)$, the more accurate the estimate must be. 

We prove Lemma \ref{lem:estdistmap}. 
\begin{proof}[Proof of Lemma \ref{lem:estdistmap}]
This proof is very similar to the proof of Lemma \ref{lem:estdistmapKS}. Like in that proof, let $\PR(\theta; M)$ denote the performative risk at $\theta$ on $\DMap(\theta; M)$ and let $\PR(\theta; \tilde{M})$ denote the performative risk at $\theta$ on $\DMap(\theta; \tilde{M})$. 

First, we claim that it suffices to show that $|\PR(\theta; M) - \PR(\theta; \tilde{M})| \le \xi$ for all $\theta \in \Theta_0$. This is because the decision-maker only searches over $\Theta_0$ when computing $\thetaPO(\tilde{M})$ and the true optimal point $\thetaPO(M)$ lies in $\Theta_0$, and so if $|\PR(\theta; M) - \PR(\theta; \tilde{M})| \le \xi$ for all $\theta \in \Theta_0$, then 
\[\PR(\thetaPO(\tilde{M}); M) \le \PR(\thetaPO(\tilde{M}); \tilde{M}) + \xi \le \PR(\thetaPO(M); \tilde{M})  + \xi \le \PR(\thetaPO(M); M) + 2\xi,\]
as desired.

The remainder of the proof boils down to showing that  $|\PR(\theta; M) - \PR(\theta; \tilde{M})| \le \xi$ for all $\theta \in \Theta_0$. Let's let $\DTrue$ be the distribution of $(t, x, y)$ where $(x,y) \sim \DBase$ and $t \sim M(x,y)$. Similarly, let $\DApprox$ be the distribution of $(t, x, y)$ where $(x,y) \sim \DBase$ and $t \sim \tilde{M}(x,y)$. We can thus obtain that:
\[|\PR(\theta; M) - \PR(\theta; \tilde{M})| =  \left|\mathbb{E}_{(t, x, y) \sim \DTrue}[\;\Indicator\{y \neq f_{\theta}(\Response_t(x, \theta))\}] - \mathbb{E}_{(t, x, y) \sim \DApprox}[\;\Indicator\{y \neq f_{\theta}(\Response_t(x, \theta))\}]\right|.\]
\noindent Now, we claim that for any agent $(t,x)$ where $x\not\in S(\Theta_0, c)$ and for $t \in \text{supp}(\DTrue) \cup \text{supp}(\DApprox)$, it holds that $f_{\theta}(\Response_t(x, \theta)) = f_{\theta}(x)$ for every $\theta \in \Theta_0$. Note that since $M$ satisfies the expenditure constraint with respect to $c$, then we know that if $x\not\in S(\Theta_0, c)$, it holds that $f_{\theta}(\Response_t(x, \theta)) = f_{\theta}(x)$. This yields the desired statement. Thus we have that:
\small{
\begin{align*}
   &\left|\mathbb{E}_{(t, x, y) \sim \DTrue}[\;\Indicator\{y \neq f_{\theta}(\Response_t(x, \theta))\}] - \mathbb{E}_{(t, x, y) \sim \DApprox}[\;\Indicator\{y \neq f_{\theta}(\Response_t(x, \theta))\}]\right|  \\
   &\le  \left|\mathbb{E}_{(t, x, y) \sim \DTrue}[\;\Indicator\{y \neq f_{\theta}(\Response_t(x, \theta))\} \;\Indicator\{x \not\in S(\Theta_0, c) \}] - \mathbb{E}_{(t, x, y) \sim \DApprox}[\;\Indicator\{y \neq f_{\theta}(\Response_t(x, \theta))\}\;\Indicator\{x \not\in S(\Theta_0, c) \}]\right| \\
   &+ \left|\mathbb{E}_{(t, x, y) \sim \DTrue}[\;\Indicator\{y \neq f_{\theta}(\Response_t(x, \theta))\} \;\Indicator\{x \in S(\Theta_0, c) \}] - \mathbb{E}_{(t, x, y) \sim \DApprox}[\;\Indicator\{y \neq f_{\theta}(\Response_t(x, \theta))\}\;\Indicator\{x \in S(\Theta_0, c) \}]\right| \\
     &=  \left|\mathbb{E}_{(t, x, y) \sim \DTrue}[\;\Indicator\{y \neq f_{\theta}(\Response_t(x, \theta))\} \;\Indicator\{x \in S(\Theta_0, c) \}] - \mathbb{E}_{(t, x, y) \sim \DApprox}[\;\Indicator\{y \neq f_{\theta}(\Response_t(x, \theta))\}\;\Indicator\{x \in S(\Theta_0, c) \}]\right| \\
    &= \left|\mathbb{E}_{(x,y) \sim \DMap(\theta; M)}[\;\Indicator\{y \neq f_{\theta}(x)\} \;\Indicator\{x \in S(\Theta_0, c) \}] - \mathbb{E}_{(x,y) \sim \DMap(\theta; \tilde{M})}[\;\Indicator\{y \neq f_{\theta}(x)\} \;\Indicator\{x \in S(\Theta_0, c) \}]\right| \\
    &=  \mathbb{P}[x \in S(\Theta_0, c)] \left|\mathbb{E}_{(x,y) \sim \DMap(\theta; M)}[\;\Indicator\{y \neq f_{\theta}(x)\}  \mid x \in S(\Theta_0, c) ] - \mathbb{E}_{(x,y) \sim \DMap(\theta; \tilde{M})}[\;\Indicator\{y \neq f_{\theta}(x)\} \mid x \in S(\Theta_0, c)]\right| \\
     &=  \mathbb{P}[x \in S(\Theta_0, c)] \left|\mathbb{E}_{(x,y) \sim \DMap_{S(\Theta_0, c)}(\theta; M)}[\;\Indicator\{y \neq f_{\theta}(x)\}] - \mathbb{E}_{(x,y) \sim \DMap_{S(\Theta_0, c)}(\theta; \tilde{M})}[\;\Indicator\{y \neq f_{\theta}(x)\}]\right| \\
    &\le \mathbb{P}[x \in S(\Theta_0, c)] \TV\left(\DMap_{S(\Theta_0, c)}(\theta; M), \DMap_{S(\Theta_0, c)}(\theta; \tilde{M})\right). 
\end{align*}
}
\normalsize{}
\end{proof}

\end{document}